\documentclass[sigconf]{aamas}

\usepackage{balance} 
\usepackage[ruled,vlined,linesnumbered]{algorithm2e}
\usepackage{tikz}
\usetikzlibrary{arrows.meta, positioning, calc, shadows, shapes.geometric}
\usepackage{mathtools} 

\usepackage{helvet}
\usepackage{courier}
\usepackage{graphicx}
\usepackage{natbib}
\usepackage{caption}
\usepackage{subcaption}
\usepackage{amsthm,amsmath,amssymb}
\usepackage[ruled,vlined,linesnumbered]{algorithm2e}
\usepackage{booktabs}
\usepackage{listings}
\usepackage{pgfplots}
\pgfplotsset{compat=1.18}

\doi{CHZG9392}

\makeatletter
\gdef\@copyrightpermission{
  \begin{minipage}{0.2\columnwidth}
   \href{https://creativecommons.org/licenses/by/4.0/}{\includegraphics[width=0.90\textwidth]{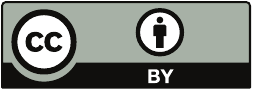}}
  \end{minipage}\hfill
  \begin{minipage}{0.8\columnwidth}
   \href{https://creativecommons.org/licenses/by/4.0/}{This work is licensed under a Creative Commons Attribution International 4.0 License.}
  \end{minipage}
  \vspace{5pt}
}
\makeatother

\setcopyright{ifaamas}
\acmConference[AAMAS '26]{Proc.\@ of the 25th International Conference
on Autonomous Agents and Multiagent Systems (AAMAS 2026)}{May 25 -- 29, 2026}
{Paphos, Cyprus}{C.~Amato, L.~Dennis, V.~Mascardi, J.~Thangarajah (eds.)}
\copyrightyear{2026}
\acmYear{2026}
\acmDOI{}
\acmPrice{}
\acmISBN{}

\title[OSL]{The Observer–Situation Lattice: 
    A Unified Formal Basis for Perspective-Aware Cognition}

\author{Saad Alqithami}
\affiliation{
  \institution{Computer Science Department, Al-Baha University}
  \city{Albaha 65779}
  \country{Saudi Arabia}}
\email{salqithami@bu.edu.sa}

\begin{abstract}
Autonomous agents operating in complex, multi-agent environments must reason about what is true from multiple perspectives. Existing approaches often struggle to integrate the reasoning of different agents, at different times, and in different contexts, typically handling these dimensions in separate, specialized modules. This fragmentation leads to a brittle and incomplete reasoning process, particularly when agents must understand the beliefs of others (Theory of Mind). We introduce the Observer–Situation Lattice (OSL), a unified mathematical structure that provides a single, coherent semantic space for perspective-aware cognition. OSL is a finite complete lattice where each element represents a unique observer-situation pair, allowing for a principled and scalable approach to belief management. We present two key algorithms that operate on this lattice: (i) Relativized Belief Propagation, an incremental update algorithm that efficiently propagates new information, and (ii) Minimal Contradiction Decomposition, a graph-based procedure that identifies and isolates contradiction components. We prove the theoretical soundness of our framework and demonstrate its practical utility through a series of benchmarks, including classic Theory of Mind tasks and a comparison with established paradigms such as assumption-based truth maintenance systems. Our results show that OSL provides a computationally efficient and expressive foundation for building robust, perspective-aware autonomous agents.
\end{abstract}

\keywords{Observer-Dependent Semantics, Explainable Cognitive Agents, Belief-Desire-Intention (BDI), Global Workspace Architecture, Theory of Mind Planning, Context-Aware Perception, Meta-Cognitive Goal Reasoning, Neurosymbolic AI}

\newcommand{\BibTeX}{\rm B\kern-.05em{\sc i\kern-.025em b}\kern-.08em\TeX}

\begin{document}


\pagestyle{fancy}
\fancyhead{}


\maketitle 


\section{Introduction} \label{sec:intro}

Autonomous agents deployed in real-world, multi-agent settings must navigate a complex information landscape where truth is relative. An observation made by one agent may not be available to another; a fact true in one context may be false in another. For example, in a smart building, a maintenance robot may observe a wet floor, while a remote supervisor sees only an aggregated alert, and a security operator with access to different camera feeds perceives nothing amiss. Effective collaboration in such environments requires agents to reason not just about the world, but about the perspectives of others—a capability known as Theory of Mind \citep{gorgan2024computational}. This fundamental challenge of perspective-aware reasoning is a critical bottleneck in the development of robust, socially intelligent AI systems \citep{dorri2018multi}.

Existing agent architectures and formalisms struggle to address this challenge in a unified and scalable manner. Mainstream approaches like the Belief-Desire-Intention (BDI) model \citep{rao1995bdi,rao1991modeling} provide a strong foundation for single-agent deliberation but typically assume a single, globally consistent belief state, making it difficult to manage the multiple, often conflicting, viewpoints inherent in multi-agent systems. While various BDI programming languages and platforms exist \citep{dastani2008apl, nunes2011bdi4jade, sardina2011bdi}, they often treat perspective-taking as an add-on rather than a core architectural principle. Similarly, foundational work in epistemic logic \citep{hintikka1962knowledge, Fagin2003, vanDitmarsch2007, baltag2016dynamic, van2015introduction} provides a formal language for reasoning about knowledge and belief, but its computational complexity often renders it impractical for dynamic, real-time applications \citep{bolander2020del, aucher2013undecidability}. Other approaches, such as multi-context systems \citep{brewka2011managed} and truth maintenance systems \citep{doyle1979truth, de1986assumption, dechter1996structure}, offer powerful tools for managing different belief sets but lack a unifying semantic structure that can seamlessly integrate the notions of observer, context, and time.

This fragmentation of approaches leads to a critical research gap: the absence of a computationally efficient, formally grounded framework that treats perspective as a first-class citizen in an agent's cognitive architecture. To address this gap, we introduce the Observer-Situation Lattice (OSL), a novel mathematical structure that provides a single, unified semantic space for perspective-aware reasoning. The OSL is a finite complete lattice where each element represents a unique observer-situation pair. This allows an agent to represent and reason about facts of the form ``observer $o$ in situation $\sigma$ believes $\varphi$'' within a single, coherent algebraic structure. The lattice's partial order naturally captures knowledge containment relationships between observers and contextual refinement between situations, providing a principled foundation for belief propagation and contradiction management.

This paper makes the following key contributions:
\begin{enumerate}
    \item We formalize the OSL as a finite product lattice and prove its key mathematical properties, demonstrating that it provides a sound and complete foundation for perspective-aware reasoning (Section~\ref{sec:osl-framework}).
    \item We introduce two algorithms that operate on the OSL: Relativized Belief Propagation, an efficient incremental update algorithm, and Minimal Contradiction Decomposition, a graph-based procedure that identifies and isolates contradiction components (Sections~\ref{subsec:rbp} and~\ref{subsec:mcc}).
    \item We demonstrate how the OSL can be integrated into a practical agent architecture, providing a concrete pathway for implementing perspective-aware reasoning in BDI-style agents (Section~\ref{subsec:architecture}).
    \item We validate our approach through a series of experiments, including classic Theory of Mind benchmarks and a comparative analysis against established paradigms, showing that OSL offers significant expressive and computational advantages (Section~\ref{sec:experiments}).
\end{enumerate}

By unifying the dimensions of observer, situation, and belief within a single lattice structure, OSL provides a powerful and elegant solution to the long-standing problem of perspective-aware reasoning in multi-agent systems. It offers a clear path toward building more robust, socially intelligent, and ultimately more effective autonomous agents.

\section{Background and Related Work}
\label{sec:background}

The challenge of perspective-aware reasoning lies at the intersection of several established fields in artificial intelligence. Our work builds upon and extends concepts from cognitive architectures, knowledge representation, and multi-agent systems. In this section, we situate OSL within this broader landscape, highlighting the key limitations of existing approaches that motivate our unified framework.

\subsection{Cognitive Architectures and BDI Systems}

Cognitive architectures aim to provide a comprehensive blueprint for intelligent behavior. Classic symbolic architectures like Soar \citep{laird2022analysis} and ACT-R \citep{anderson2004integrated} have demonstrated impressive reasoning capabilities but have traditionally focused on a single agent's cognitive processes, with limited built-in support for multi-agent perspective-taking. More recently, there has been a push to integrate neural and symbolic approaches, leading to the development of neuro-symbolic architectures \citep{marra2024from, wan2024towards} that promise greater flexibility and learning capabilities. However, the challenge of managing multiple, potentially conflicting belief sets in a principled manner remains.

The Belief-Desire-Intention (BDI) model \citep{rao1991modeling, rao1995bdi} has been particularly influential in the multi-agent systems community. BDI agents are characterized by their explicit representation of mental states, making them a natural fit for tasks requiring deliberation and goal-directed behavior. Practical BDI programming languages like AgentSpeak, implemented in platforms such as Jason \citep{bordini2007programming}, have enabled the development of complex multi-agent systems. However, the standard BDI model does not inherently account for the different perspectives of other agents. While various extensions have been proposed to incorporate Theory of Mind capabilities, they often do so in an ad-hoc manner, bolting on separate modules for reasoning about the beliefs of others. This can lead to a fragmented and computationally inefficient architecture, a key limitation that OSL is designed to address.

\subsection{Knowledge Representation and Reasoning}

At its core, perspective-aware reasoning is a problem of knowledge representation. How can we represent what an agent believes, in a given context, at a particular time? Epistemic logic, originating with the work of Hintikka \citep{hintikka1962knowledge}, provides a formal framework for reasoning about knowledge and belief. Dynamic Epistemic Logic (DEL) extends this framework to model how knowledge changes as a result of actions and observations \citep{van2007dynamic}. While DEL offers a powerful theoretical tool, its application in practical systems has been limited by its high computational complexity, particularly in multi-agent scenarios \citep{bolander2020del}. Epistemic planning, which aims to find a sequence of actions to achieve a knowledge-based goal, faces similar scalability challenges \citep{aucher2013undecidability}.

Truth Maintenance Systems (TMS) offer a more computational approach to managing beliefs. Justification-based TMS (JTMS) \citep{doyle1979truth} and Assumption-based TMS (ATMS) \citep{de1986assumption} provide mechanisms for tracking dependencies between beliefs and efficiently revising them when contradictions are found. While ATMS can manage multiple belief sets (contexts) simultaneously, it does not provide a semantic structure for relating these contexts to different observers or situations. Distributed TMS (DTMS) approaches have been developed for multi-agent settings \citep{bridgeland1990distributed, huhns1991multiagent}, but they typically focus on achieving a single, globally consistent state rather than managing a diversity of perspectives.

Our work is also related to Formal Concept Analysis (FCA), a mathematical framework for deriving conceptual hierarchies from data \citep{ganter1999formal}. FCA uses lattice theory to organize concepts, and its application to knowledge representation has been explored in various contexts \citep{kuznetsov2013knowledge}. OSL draws inspiration from FCA in its use of a lattice structure, but it applies this structure in a novel way to model the relationships between observers and situations, providing a dynamic framework for belief management rather than a static analysis of concepts.

\subsection{Theory of Mind and Explainable AI}

Reasoning about the mental states of others, or Theory of Mind (ToM), is a cornerstone of social intelligence. The ability to attribute false beliefs to others, as famously demonstrated in the Sally-Anne test, is a key benchmark for ToM. Computational models of ToM have been developed in various fields, from cognitive science to robotics \citep{gorgan2024computational, labash2020perspective}, but they often rely on explicit, nested representations of beliefs (e.g., ``I believe that you believe that... "), which can become computationally intractable. OSL offers a more elegant solution by representing all beliefs within a single, flat lattice structure, where reasoning about higher-order beliefs can be performed through simple lattice operations.

Finally, the ability to explain one's reasoning is crucial for effective human-agent collaboration. The field of Explainable AI (XAI) has made significant strides in developing techniques for generating explanations \citep{dosilovic2018explainable, tjoa2021survey}. However, most XAI approaches are not ``audience-aware'' and generate a single explanation without considering the knowledge or perspective of the person receiving it. OSL's explicit representation of observer perspectives provides a natural foundation for generating personalized explanations that are tailored to the specific knowledge and context of the observer \citep{alshomary2021toward, vasileiou2025monolithic}.


Throughout the paper we use the following conventions. Let $\mathcal{O}$ be the set of observers equipped with a partial order $\preceq_{\mathcal{O}}$, and let $\Sigma$ be the set of situations with partial order $\preceq_{\Sigma}$. We write $\mathcal{E} = \mathcal{O} \times \Sigma$ for the OSL carrier, ordered by the product order $\preceq$ induced by $\preceq_{\mathcal{O}}$ and $\preceq_{\Sigma}$. A belief base is denoted by $B$, and we let $n = |\mathcal{E}|$ denote the size of the lattice.

\section{The Observer-Situation Lattice Framework}
\label{sec:osl-framework}

The OSL framework is built upon the mathematical theory of lattices, providing a robust and principled foundation for perspective-aware reasoning. In this section, we detail the formal construction of the OSL, define its belief semantics, and present the core algorithms for belief propagation and contradiction management.

\subsection{Formal Mathematical Foundations}

The OSL framework is constructed as a product of two fundamental partial orders that capture the essential relationships in perspective-aware reasoning systems. We begin with the mathematical foundations that ensure the framework's theoretical soundness.

\begin{definition}[Observer Knowledge Containment]
\label{def:obs_order}
Let $\mathcal{O}$ be a finite set of observers and $\preceq_{\mathcal{O}} \subseteq \mathcal{O} \times \mathcal{O}$ be a partial order expressing informational containment. We write $o_{1} \preceq_{\mathcal{O}} o_{2}$ to denote that observer $o_{2}$ has at least the knowledge available to observer $o_{1}$. This order may be derived from trust hierarchies, sensor fusion graphs, capability inclusion relationships, or explicit authority structures.
\end{definition}

The observer order captures the fundamental asymmetries in multi-agent systems where different agents possess varying levels of information access, computational capabilities, or epistemic authority. This ordering is not merely theoretical but reflects practical considerations such as sensor quality, communication bandwidth, processing power, and domain expertise.

\begin{definition}[Situation Refinement]
\label{def:sit_order}
Let $\Sigma$ be a finite set of situations and $\preceq_{\Sigma} \subseteq \Sigma \times \Sigma$ be a partial order capturing refinement of contextual situations. We write $\sigma_{1} \preceq_{\Sigma} \sigma_{2}$ when every fact that is true in situation $\sigma_{1}$ is also true in situation $\sigma_{2}$, meaning that $\sigma_{2}$ represents a more specific or constrained context than $\sigma_{1}$ (i.e., $\sigma_{2}$ refines $\sigma_{1}$).
\end{definition}

The situation refinement order embodies the principle that more specific contexts inherit all properties of their generalizations while potentially adding additional constraints or details. This captures the natural hierarchy of contextual information, from broad environmental conditions to specific task-oriented scenarios.

Following standard lattice theory \citep{ganter1999formal}, we assume that both $\langle\mathcal{O}, \preceq_{\mathcal{O}}\rangle$ and $\langle\Sigma, \preceq_{\Sigma}\rangle$ are finite complete lattices. This ensures that for any subset of observers or situations, there exists a unique least upper bound (join) and a unique greatest lower bound (meet). This completeness property is not merely a theoretical convenience; it is the essential mathematical property that guarantees the algorithmic tractability of our belief propagation and contradiction resolution procedures.

\begin{definition}[Observer-Situation Order]
\label{def:osl_order}
For elements $e_{1}, e_{2} \in \mathcal{E}$ where $e_{1} = (o_{1}, \sigma_{1})$ and $e_{2} = (o_{2}, \sigma_{2})$, define the product order as:
\[
e_{1} \preceq e_{2} \quad \Longleftrightarrow \quad (o_{1} \preceq_{\mathcal{O}} o_{2}) \land (\sigma_{1} \preceq_{\Sigma} \sigma_{2})
\]
The structure $\langle\mathcal{E}, \preceq\rangle$ is called the Observer-Situation Lattice (OSL) carrier.
\end{definition}

We say that two lattice elements $e_1,e_2 \in \mathcal{E}$ are comparable, written $e_1 \bowtie e_2$, if $e_1 \preceq e_2$ or $e_2 \preceq e_1$. We reserve $\preceq$ for the lattice order and $\bowtie$ for this comparability relation.

This product construction ensures that the OSL inherits the mathematical properties of its component lattices while providing a natural framework for reasoning about the interaction between observer capabilities and situational contexts.

\begin{lemma}[Product completeness]
\label{lem:product-complete}
Let $\langle O,\preceq_O\rangle$ and $\langle \Sigma,\preceq_\Sigma\rangle$ be finite complete lattices. 
Define $E = O \times \Sigma$ with the component-wise order
\[
(o_1,\sigma_1) \preceq (o_2,\sigma_2) 
\quad\Longleftrightarrow\quad
o_1 \preceq_O o_2 \ \text{ and }\ \sigma_1 \preceq_\Sigma \sigma_2.
\]
Then $\langle E,\preceq\rangle$ is a finite complete lattice. Moreover, for any $S \subseteq E$,
\[
\bigvee S = \left(\bigvee_{(o,\sigma)\in S} o,\ \bigvee_{(o,\sigma)\in S} \sigma\right),
\qquad
\bigwedge S = \left(\bigwedge_{(o,\sigma)\in S} o,\ \bigwedge_{(o,\sigma)\in S} \sigma\right).
\]
\end{lemma}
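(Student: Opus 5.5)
The plan is to verify the three ingredients directly: that $\preceq$ is a partial order on $E$, that $E$ is finite, and that every subset $S \subseteq E$ has the displayed join and meet. Finiteness is immediate, since $|E| = |O|\cdot|\Sigma|$. Reflexivity, antisymmetry and transitivity of $\preceq$ are each checked componentwise: each property holds in $\preceq_O$ and in $\preceq_\Sigma$, and $\preceq$ is the conjunction of the two component relations, so the property transfers. For antisymmetry in particular, $(o_1,\sigma_1)\preceq(o_2,\sigma_2)$ and the reverse give $o_1=o_2$ and $\sigma_1=\sigma_2$.

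For the join, fix $S\subseteq E$ and write $\pi_O(S)=\{o : (o,\sigma)\in S\}$ and $\pi_\Sigma(S)$ analogously. Completeness of the factors gives $\hat o=\bigvee \pi_O(S)$ and $\hat\sigma=\bigvee\pi_\Sigma(S)$. Set $u=(\hat o,\hat\sigma)$. The proof then has two steps.
\begin{enumerate}
\item \emph{Upper bound.} For each $(o,\sigma)\in S$ we have $o\preceq_O\hat o$ and $\sigma\preceq_\Sigma\hat\sigma$, so $(o,\sigma)\preceq u$.
\item \emph{Least.} If $(p,\tau)$ is any upper bound of $S$, then $p$ is an upper bound of $\pi_O(S)$ and $\tau$ is an upper bound of $\pi_\Sigma(S)$. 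Hence $\hat o\preceq_O p$ and $\hat\sigma\preceq_\Sigma\tau$, that is, $u\preceq(p,\tau)$.
\end{enumerate}
Uniqueness of $u$ follows from antisymmetry. The meet is handled by the dual argument, replacing upper bounds with lower bounds and joins with meets.

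The only step needing care is the empty set $S=\emptyset$. There $\pi_O(S)=\pi_\Sigma(S)=\emptyset$, and the formula yields $(\bigvee\emptyset,\bigvee\emptyset)=(\bot_O,\bot_\Sigma)$ as the bottom of $E$, and dually $(\top_O,\top_\Sigma)$ as the top. I would note explicitly that the argument above covers this case verbatim, because every element of $E$ is vacuously an upper bound of $\emptyset$. This also shows that $E$ is nonempty and bounded, so nothing beyond completeness of the factors is required.

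I expect no genuine obstacle. The main point of rigor is making sure the projections are used correctly. The upper bound $(p,\tau)$ must dominate every first coordinate and every second coordinate, not just coordinate pairs jointly; this holds because the order is componentwise.
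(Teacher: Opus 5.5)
Your proposal is correct and follows essentially the same route as the paper's appendix proof: form the projections of $S$ onto $O$ and $\Sigma$, show the pair of their joins is an upper bound and below every upper bound, then dualize for meets. Your explicit checks of the partial-order axioms and of the case $S=\emptyset$ are details the paper addresses in its proof of Theorem~3.5 rather than in the lemma itself.
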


\begin{proof}[Proof Sketch]
Completeness of $O$ and $\Sigma$ guarantees that all joins and meets in the right-hand sides above exist and are unique. The construction is the standard product of complete lattices, which is known to be complete under the component-wise order; we spell out the join and meet to make later algorithmic use explicit.
\end{proof}

\begin{theorem}[OSL completeness]
\label{thm:osl-complete}
Assume $\langle O,\preceq_O\rangle$ and $\langle \Sigma,\preceq_\Sigma\rangle$ are finite complete lattices and let $E=O\times\Sigma$ with the order from Lemma~\ref{lem:product-complete}. Then $\langle E,\preceq\rangle$ is a finite complete lattice. In particular, every subset $S\subseteq E$ has a join and a meet, computable in $O(|O|\,|\Sigma|)$ time by scanning all elements.
\end{theorem}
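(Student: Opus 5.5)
The lattice part of the statement follows directly from Lemma~\ref{lem:product-complete}, so the real work is to make the complexity claim precise and justify it. I will first recall that the lemma already gives that $\langle E,\preceq\rangle$ is a complete lattice with component-wise joins and meets. Finiteness is immediate from $|E| = |O|\,|\Sigma|$. For completeness I will also verify the join formula directly. Let $\hat o=\bigvee_{(o,\sigma)\in S} o$ and $\hat\sigma=\bigvee_{(o,\sigma)\in S}\sigma$. Then $(\hat o,\hat\sigma)$ is an upper bound of $S$ componentwise. If $(u,v)$ is any upper bound, then $u$ bounds every first coordinate and $v$ every second, so $\hat o\preceq_O u$ and $\hat\sigma\preceq_\Sigma v$. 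The meet case is dual. The case $S=\emptyset$ gives $(\bot_O,\bot_\Sigma)$ as the join and $(\top_O,\top_\Sigma)$ as the meet, which exist because finite complete lattices are bounded.

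For the algorithmic claim, I will fix the computational model: binary joins and meets in $O$ and $\Sigma$ are unit-cost oracle operations, for example precomputed tables. The algorithm initializes the accumulators to $(\bot_O,\bot_\Sigma)$, scans $S$ once, and updates each accumulator with one binary join per element. Correctness comes from associativity and commutativity of binary join in a lattice: by induction on the number of scanned elements, the accumulator equals the join of the prefix seen so far. This costs $O(|S|)$ oracle calls, and $|S|\le |E|=|O|\,|\Sigma|$ gives the stated $O(|O|\,|\Sigma|)$ bound. Meets are handled the same way starting from $(\top_O,\top_\Sigma)$. If $S$ is given as a membership predicate rather than a list, scanning all of $E$ and testing membership attains the same bound, which matches the phrase ``by scanning all elements.''

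The main obstacle is not mathematical but a matter of stating the cost model. Without unit-cost binary joins, for instance if only the order relation is given, computing a binary join in $O$ alone may cost $O(|O|)$ comparisons, which would spoil the bound. I will therefore state the unit-cost assumption explicitly. I will also note that under it each per-component join costs $O(1)$, so the product structure adds no overhead beyond a constant factor of two.
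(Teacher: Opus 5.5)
Your proposal is correct and follows essentially the same route as the paper's appendix proof: component-wise joins and meets, with $(\bot_O,\bot_\Sigma)$ and $(\top_O,\top_\Sigma)$ for $S=\emptyset$, followed by a one-pass fold under an explicit assumption of constant-time binary join/meet primitives, giving $O(|S|)$ time and hence $O(|O|\,|\Sigma|)$ since $|S|\le|E|$. Your prefix-induction invariant for the fold and the membership-predicate variant are harmless refinements of that same argument.
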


\begin{proof}[Proof Sketch]
The proof follows directly from the standard construction of product lattices \citep{wille1982restructuring}. Since $\mathcal{O}$ and $\Sigma$ are finite complete lattices by assumption, their product $\langle\mathcal{E}, \preceq\rangle$ is also a finite complete lattice. The computational complexity of finding the join or meet of a set of elements is determined by the need to iterate through the component sets, which in the worst case requires scanning all elements of $\mathcal{O}$ and $\Sigma$.

The existence and uniqueness of arbitrary suprema and infima ensures that the OSL provides a mathematically sound foundation for belief aggregation and conflict resolution, as these operations correspond directly to lattice-theoretic suprema and infima.
\end{proof}

\subsection{Belief Semantics and Truth Propagation}

The lattice structure of the OSL provides a natural semantics for representing and propagating beliefs in a way that respects both observer capabilities and situational contexts.

\begin{definition}[Belief Record]
\label{def:belief_record}
A belief record is a triple $\langle \varphi, e, w \rangle$ where $\varphi$ is a propositional formula from a fixed language $\mathcal{L}$, $e \in \mathcal{E}$ is a lattice element representing the observer-situation context, and $w \in [0,1]$ is a credibility weight representing the strength of the belief. A belief base $B$ is a finite set of belief records.
\end{definition}

While we use propositional logic for $\mathcal{L}$ to maintain computational tractability, the framework can be extended to more expressive logics, such as first-order or modal logics, with corresponding increases in computational complexity. This trade-off between expressiveness and tractability is a common theme in knowledge representation \citep{Reiter2001}.

\begin{definition}[Upward Closure Semantics]
\label{def:truth_eval}
Given a belief base $B$ and lattice element $e \in \mathcal{E}$, the credibility of formula $\varphi$ at $e$ is defined as:
\[
\text{cred}(\varphi, e, B) = \max\{w : \langle \varphi, e', w \rangle \in B \text{ and } e' \preceq e\}
\]
If no such record exists, $\text{cred}(\varphi, e, B) = 0$. The set of all belief records that contribute to the credibility of $\varphi$ at $e$ is called the \emph{support set} of $\varphi$ at $e$.
\end{definition}

This semantics captures monotone inheritance along the lattice order: if $e' \preceq e$, then evidence asserted at $e'$ is available at $e$. Intuitively, contexts corresponding to more informed observers and more refined situations inherit beliefs from less informed and coarser contexts. The maximum operation ensures that the strongest available evidence is used when multiple sources provide information about the same proposition.

\begin{lemma}[Monotonicity of Credibility]
\label{lem:cred_monotone}
For any formula $\varphi$, belief base $B$, and lattice elements $e_1, e_2 \in \mathcal{E}$ with $e_1 \preceq e_2$, we have $\text{cred}(\varphi, e_1, B) \leq \text{cred}(\varphi, e_2, B)$.
\end{lemma}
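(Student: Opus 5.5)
The argument is a set-inclusion argument on support sets, using only transitivity of the product order $\preceq$ from Definition~\ref{def:osl_order} and the fact that a maximum over a larger set cannot be smaller. For a fixed $\varphi$, $B$ and $e \in \mathcal{E}$, I will define the candidate set
\[
W(e) = \{ w : \langle \varphi, e', w\rangle \in B,\ e' \preceq e \}.
\]
By Definition~\ref{def:truth_eval}, $\text{cred}(\varphi, e, B) = \max W(e)$ when $W(e) \neq \emptyset$, and $0$ otherwise. Since $B$ is finite, each $W(e)$ is a finite subset of $[0,1]$, so the maximum is well defined whenever the set is nonempty.

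The key step is to show that $e_1 \preceq e_2$ implies $W(e_1) \subseteq W(e_2)$. Take any $w \in W(e_1)$, witnessed by a record $\langle \varphi, e', w\rangle \in B$ with $e' \preceq e_1$. From $e_1 \preceq e_2$ and transitivity, $e' \preceq e_2$. Transitivity of $\preceq$ holds because it is the component-wise combination of the partial orders $\preceq_{\mathcal{O}}$ and $\preceq_{\Sigma}$: if $o' \preceq_{\mathcal{O}} o_1 \preceq_{\mathcal{O}} o_2$ and $\sigma' \preceq_{\Sigma} \sigma_1 \preceq_{\Sigma} \sigma_2$, then each component chain composes. So the same record witnesses $w \in W(e_2)$.

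The conclusion then follows by two cases. If $W(e_1) = \emptyset$, then $\text{cred}(\varphi, e_1, B) = 0 \le \text{cred}(\varphi, e_2, B)$, because every weight lies in $[0,1]$ and so any credibility value is nonnegative. If $W(e_1) \neq \emptyset$, then $W(e_2)$ is also nonempty, and $\max W(e_1) \le \max W(e_2)$ by monotonicity of $\max$ under inclusion of finite sets.

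There is no real obstacle here. The only points needing care are the empty-support convention, handled by nonnegativity of weights, and making explicit that the product order is transitive. Lemma~\ref{lem:product-complete} is not needed, since the argument uses only that $\preceq$ is a preorder, not any lattice completeness.
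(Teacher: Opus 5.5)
Your proof is correct and follows essentially the same route as the paper: both define the support-weight sets, use transitivity of $\preceq$ to obtain $W(e_1) \subseteq W(e_2)$, and conclude via monotonicity of $\max$ under inclusion with the convention $\max\emptyset = 0$. Your explicit check that the component-wise order is transitive, and your use of nonnegative weights for the empty case, spell out details the paper's proof leaves implicit.
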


\begin{proof}[Proof Sketch]
By definition, $\text{cred}(\varphi, e_1, B)$ considers only belief records $\langle \varphi, e', w \rangle$ where $e' \preceq e_1$. Since $e_1 \preceq e_2$, by transitivity of $\preceq$, any such $e'$ also satisfies $e' \preceq e_2$. Therefore, the set of records contributing to $\text{cred}(\varphi, e_1, B)$ is a subset of those contributing to $\text{cred}(\varphi, e_2, B)$, implying the desired inequality.
\end{proof}

This monotonicity property ensures that the credibility function respects the lattice structure and provides a foundation for efficient belief propagation algorithms.

\subsection{Algorithmic Framework}

The mathematical foundations enable the development of efficient algorithms for belief propagation and contradiction detection that exploit the lattice structure for computational advantage.

\subsubsection{Relativized Belief Propagation (RBP)} \label{subsec:rbp}

The RBP algorithm efficiently updates credibility values when new belief records are inserted into the lattice, leveraging the upward closure property to minimize computational overhead.

\begin{algorithm}[ht]
\caption{Relativized Belief Propagation (RBP)}
\label{alg:rbp}
\KwIn{Lattice element $e = (o, \sigma)$, belief record $\langle \varphi, e, w \rangle$, belief base $B$}
\KwOut{Updated credibility values for affected elements}

Insert $\langle \varphi, e, w \rangle$ into belief base $B$\;
Compute upward closure $\uparrow e = \{e' \in \mathcal{E} : e \preceq e'\}$\;
Initialize update queue $Q \leftarrow \uparrow e$\;
Initialize affected set $A \leftarrow \emptyset$\;
\While{$Q \neq \emptyset$}{
    $e' \leftarrow \text{dequeue}(Q)$\;
    $w_{\text{old}} \leftarrow \text{cred}(\varphi, e', B \setminus \{\langle \varphi, e, w \rangle\})$\;
    $w_{\text{new}} \leftarrow \text{cred}(\varphi, e', B)$\;
    \If{$w_{\text{new}} > w_{\text{old}}$}{
        Update cached credibility value for $(\varphi, e')$\;
        $A \leftarrow A \cup \{e'\}$\;
        Notify dependent reasoning modules of update\;
    }
}
\Return{affected set $A$}\;
\end{algorithm}

This incremental approach is significantly more efficient than recomputing all credibility values from scratch, particularly in large lattices where new information may only affect a small subset of the belief space.

\begin{theorem}[RBP Correctness]
\label{thm:rbp_correct}
Algorithm~\ref{alg:rbp} correctly computes the updated credibility values for all lattice elements after inserting a new belief record, maintaining consistency with Definition~\ref{def:truth_eval}.
\end{theorem}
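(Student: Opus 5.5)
The argument compares the credibility function before and after the insertion of $r=\langle \varphi, e, w\rangle$, writing $B$ for the base before insertion and $B'=B\cup\{r\}$ for the base after it. I will show two things. First, for every element $e''\notin{\uparrow e}$ and every formula $\psi$, $\text{cred}(\psi,e'',B')=\text{cred}(\psi,e'',B)$. Second, for every $e'\in{\uparrow e}$, the algorithm leaves the cache for $(\varphi,e')$ holding $\text{cred}(\varphi,e',B')$, and it puts $e'$ into $A$ exactly when this value differs from the old one. Together these say that the cache agrees with Definition~\ref{def:truth_eval} everywhere and that $A$ is precisely the set of elements whose value changed. I will assume, as the statement implicitly does, that the cache was correct before the insertion.

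\textbf{Step 1 (locality).} Take any $\psi$ and any $e''$. The records contributing to $\text{cred}(\psi,e'',B')$ are those $\langle\psi,e^*,w^*\rangle\in B'$ with $e^*\preceq e''$. The only record in $B'\setminus B$ is $r$, and $r$ contributes only when $\psi=\varphi$ and $e\preceq e''$, that is, when $e''\in{\uparrow e}$. So if $\psi\neq\varphi$ or $e''\notin{\uparrow e}$, the contributing sets under $B$ and $B'$ coincide and the maxima are equal. This shows that every element the algorithm never dequeues already holds the correct value, and that no other formula is affected.

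\textbf{Step 2 (inside the cone).} For $e'\in{\uparrow e}$ the contributing set under $B'$ is the contributing set under $B$ together with $r$. Hence
\[
\text{cred}(\varphi,e',B')=\max\{\text{cred}(\varphi,e',B),\,w\},
\]
using the convention that the maximum over the empty set is $0$, with $w\ge 0$. It follows that $w_{\text{new}}\ge w_{\text{old}}$, which matches Lemma~\ref{lem:cred_monotone} in the sense that insertion only raises values.

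The algorithm's case split is then exhaustive. If $w_{\text{new}}>w_{\text{old}}$, the cache is overwritten with the correct value and $e'$ is added to $A$. Otherwise $w_{\text{new}}=w_{\text{old}}$, so the cached old value is already correct and $e'$ is correctly left out of $A$. Termination is immediate: $Q$ is initialised to the finite set ${\uparrow e}$, each iteration dequeues one element, and nothing is ever enqueued. Every element of the cone is therefore processed exactly once.

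\textbf{Main obstacle.} The delicate point is the removal $B\setminus\{r\}$ in line 7 of Algorithm~\ref{alg:rbp}. This removal recovers the pre-insertion base only if $r\notin B$ before the insertion. Because $B$ is a set, re-inserting an identical record does not change $B$ at all. In that case line 7 computes the value with $r$ removed entirely, which is smaller than the true old value, and the algorithm would flag spurious changes in $A$.

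I would handle this by stating the invariant that the algorithm is run on a fresh record, or else by noting the following. If $r$ was already in $B$, then every true value is unchanged, and the only possible error is an over-report in $A$. The caches are still overwritten with $w_{\text{new}}$, and $w_{\text{new}}$ is correct by Step 2. So the values stay consistent with Definition~\ref{def:truth_eval} in either case.

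Finally, I would remark that ${\uparrow e}$ is exactly the set of elements at which $\varphi$'s value can change. This is what justifies restricting the update to the cone.
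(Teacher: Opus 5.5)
Your proof is correct and follows essentially the same route as the paper's. You show that cache entries for other formulas, and for elements outside $\uparrow e$, are unaffected; that inside the cone the new value is $\max(\mathrm{cred}(\varphi,e',B),w)$; and hence that the strict-increase test updates exactly the changed entries, relying (as the paper does explicitly) on a correct prior cache and a fresh record so that $B'\setminus\{r\}=B$, with your duplicate-insertion remark being a small refinement the paper omits.
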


\begin{proof}[Proof Sketch]
The algorithm computes the upward closure $\uparrow e$ of the insertion point, which by Definition~\ref{def:truth_eval} contains exactly those elements whose credibility for $\varphi$ might be affected by the new record. For each element $e' \in \uparrow e$, the algorithm recomputes $\text{cred}(\varphi, e', B)$ by considering all records $\langle \varphi, e'', w'' \rangle$ where $e'' \preceq e'$, which now includes the newly inserted record if $e \preceq e'$. The maximum operation in Definition~\ref{def:truth_eval} ensures that the strongest evidence is selected, maintaining correctness.
\end{proof}

\begin{theorem}[RBP complexity]
\label{thm:rbp-complexity}
For an insertion at lattice element $e\in \mathcal{E}$, RBP visits exactly the elements in the upward closure
$\uparrow e = \{e' \in \mathcal{E} : e \preceq e'\}$.
Its running time is $O(|\uparrow e|)$ and therefore $O(|\mathcal{E}|)$ in the worst case.
\end{theorem}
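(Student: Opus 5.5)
The plan is to read Algorithm~\ref{alg:rbp} line by line and charge its cost to the set of elements it touches. The claim ``visits exactly $\uparrow e$'' comes straight from the control flow. The queue is initialised to $Q \leftarrow \uparrow e$. The loop body only dequeues and never enqueues. Each element of $\uparrow e$ is therefore dequeued exactly once, and no element outside $\uparrow e$ is ever examined.

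Correctness of restricting attention to $\uparrow e$ is already given by Theorem~\ref{thm:rbp_correct}. For completeness, I will recall that by Definition~\ref{def:truth_eval} a record at $e$ contributes to $\text{cred}(\varphi,e',B)$ only if $e \preceq e'$. So no element outside $\uparrow e$ can change. The loop therefore executes exactly $|\uparrow e|$ iterations.

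For the running time, I will split the cost into three parts.
\begin{enumerate}
\item Computing $\uparrow e$. By the product order of Definition~\ref{def:osl_order}, $\uparrow e = \uparrow_{\mathcal{O}} o \times \uparrow_{\Sigma}\sigma$. Given precomputed upsets (or a Hasse-diagram traversal in each factor), this takes time linear in the output, $O(|\uparrow e|)$.
\item The per-iteration work. The bound needs each iteration to cost $O(1)$.
\item The final comparison $|\uparrow e| \le |\mathcal{E}|$. This is trivial and gives the worst case $O(|\mathcal{E}|)$, attained when $e$ is the bottom element $\bot = (\bot_{\mathcal{O}},\bot_\Sigma)$, which exists by Lemma~\ref{lem:product-complete}.
\end{enumerate}

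The main obstacle is the per-iteration constant in item~2. A naive evaluation of $\text{cred}(\varphi,e',B)$ scans all of $B$, which would make the bound $O(|\uparrow e|\cdot|B|)$. I will fix this by using the cached credibility value that the algorithm maintains. Before insertion the cache holds $w_{\text{old}} = \text{cred}(\varphi,e',B\setminus\{\langle\varphi,e,w\rangle\})$. Since $e \preceq e'$, the max in Definition~\ref{def:truth_eval} gives
\[
w_{\text{new}} = \max(w_{\text{old}}, w).
\]
Thus each iteration is a cache lookup, a comparison, and possibly a cache write, all $O(1)$ under a hashed cache keyed by $(\varphi,e')$. I will state explicitly that notifying dependent modules is counted as $O(1)$ per affected element, or else excluded from the bound.
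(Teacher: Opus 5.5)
Your proposal is correct and matches the paper's appendix proof essentially step for step. The queue is initialised to $\uparrow e$ and never refilled, giving exactly $|\uparrow e|$ iterations. Each iteration is $O(1)$ because the cached value yields $w_{\text{new}}=\max(w_{\text{old}},w)$ (the identity from the RBP correctness proof). Enumerating $\uparrow e$ is charged separately, and $|\uparrow e|\le|\mathcal{E}|$ gives the worst case.

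The paper is more careful on your item~1: it writes the cost as $O(T_{\uparrow}(e)+|\uparrow e|)$ and makes output-linear enumeration an explicit assumption. Of your two options, only precomputed upsets achieves $O(|\uparrow e|)$. A Hasse-diagram traversal also scans covering edges, which costs $\Theta(N\log N)$ from the bottom of an $N$-element Boolean factor when the other factor's upset is a single element.
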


\begin{proof}[Sketch]
By Definition~\ref{def:truth_eval}, RBP restricts updates to $\uparrow e$ and performs $O(1)$ work per visited node (credibility recomputation and cache update). The size of $\uparrow e$ is at most $|\mathcal{E}|$, giving the stated worst-case bound.
\end{proof}

Section~\ref{sec:experiments} shows that on balanced lattices, the empirical cost grows strictly sub-linearly in $|\mathcal{E}|$, with log--log slopes between $0.34$ and $0.42$ on lattices up to $10^5$ elements.

\subsubsection{Enhanced RBP with Convergence Analysis}

The basic RBP algorithm can be enhanced with iterative refinement to handle complex belief dependencies and ensure convergence in the presence of cycles.

\begin{algorithm}[ht]
\caption{Enhanced RBP with Convergence}
\label{alg:rbp_enhanced}
\KwIn{Belief base $B$, convergence threshold $\epsilon > 0$, maximum iterations $K$}
\KwOut{Converged credibility values}

Initialize credibility matrix $C[e, \varphi] \leftarrow 0$ for all $e \in \mathcal{E}, \varphi \in \mathcal{L}$\;
$k \leftarrow 0$\;
\Repeat{$\Delta < \epsilon$ or $k \geq K$}{
    $C_{\text{old}} \leftarrow C$\;
    \For{each $(e, \varphi)$ pair}{
        $C[e, \varphi] \leftarrow \max\{w : \langle \varphi, e', w \rangle \in B, e' \preceq e\}$\;
    }
    $\Delta \leftarrow \max_{e,\varphi} |C[e, \varphi] - C_{\text{old}}[e, \varphi]|$\;
    $k \leftarrow k + 1$\;
}
\Return{credibility matrix $C$}\;
\end{algorithm}

\begin{theorem}[Enhanced RBP Convergence]
\label{thm:rbp_convergence}
Algorithm~\ref{alg:rbp_enhanced} converges to a fixed point that equals the credibility matrix induced by Definition~\ref{def:truth_eval}. For the update rule in line~6, convergence occurs in at most two iterations (one update sweep and one fixed-point check).
\end{theorem}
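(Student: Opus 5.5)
The key observation is that the update rule in line~6 of Algorithm~\ref{alg:rbp_enhanced} does not depend on the current matrix $C$ at all. Its right-hand side is a function only of the fixed belief base $B$, the pair $(e,\varphi)$, and the order $\preceq$. Writing $F$ for the operator that sends any matrix $C$ to the matrix $F(C)[e,\varphi] = \text{cred}(\varphi,e,B)$ of Definition~\ref{def:truth_eval}, I would first check that $F$ is constant. Formally, I define $C^\ast[e,\varphi] := \text{cred}(\varphi,e,B)$, with the convention that the maximum over an empty set is $0$, as Definition~\ref{def:truth_eval} stipulates. Then $F(C) = C^\ast$ for every $C$. In particular $F(C^\ast)=C^\ast$, so $C^\ast$ is a fixed point, and it is the unique fixed point since any fixed point $C$ satisfies $C = F(C) = C^\ast$.

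Next I trace the iterations. Let $C_k$ be the matrix after $k$ sweeps, with $C_0 \equiv 0$. Line~6 rewrites every entry, so $C_1 = F(C_0) = C^\ast$.
\begin{itemize}
\item In the first iteration, $\Delta_1 = \max_{e,\varphi}|C^\ast[e,\varphi]|$. This is $0$ only if $B$ contributes nothing; in that case the loop terminates after one iteration with $C = 0 = C^\ast$.
\item Otherwise, in the second iteration, $C_{\text{old}} = C^\ast$ and $C_2 = F(C^\ast) = C^\ast$. Hence $\Delta_2 = 0 < \epsilon$ for every $\epsilon>0$, and the \textbf{repeat} loop exits.
\end{itemize}
So, provided $K\ge 2$, the returned matrix is $C^\ast$ after at most two iterations: one update sweep and one confirming check. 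If $K=1$, the loop stops after the first sweep, which has already produced $C^\ast$, so the output is correct in that case too.

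The main obstacle is well-definedness rather than the dynamics. The matrix is nominally indexed by all $\varphi\in\mathcal{L}$, which is infinite, so "for each $(e,\varphi)$ pair" must be read as ranging over the finitely many formulas occurring in $B$; every other entry is $0$ and stays $0$. Since $B$ and $\mathcal{E}$ are finite (Theorem~\ref{thm:osl-complete}), each sweep is a finite computation, and each maximum is over a finite set and therefore attained. I would state this restriction explicitly.

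I would also note that the "cycles" motivating the enhanced algorithm cannot arise under this rule. Credibility depends only on $B$ and on $\preceq$, which is antisymmetric, so there is no feedback of $C$ into itself. Consistency with Lemma~\ref{lem:cred_monotone} is then immediate, because $C^\ast$ coincides with $\text{cred}$.
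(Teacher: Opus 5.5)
Your proof is correct and follows the paper's argument: line~6 ignores the current contents of $C$, so the first sweep already produces the target matrix $C^\ast$, and a second sweep gives $\Delta=0$; your remarks on $K\le 1$ and on restricting the sweep to formulas occurring in $B$ are harmless refinements that the paper omits. One small fix: the case split after the first sweep should be on whether $\Delta_1<\epsilon$, not on whether $\Delta_1=0$, since the loop also stops early when $0<\Delta_1<\epsilon$; because $C_1=C^\ast$ in every case, the conclusion is unaffected.
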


\begin{proof}[Proof Sketch]
In each iteration, line~6 assigns
$C[e,\varphi] \leftarrow \max\{w : \langle \varphi,e',w\rangle\in B \text{ and } e'\preceq e\}$,
which is exactly $\mathrm{cred}(\varphi,e,B)$ by Definition~\ref{def:truth_eval} and does not depend on the previous value of $C$. Therefore, after one complete sweep over all $(e,\varphi)$ pairs, $C$ equals the target credibility matrix. A second sweep leaves $C$ unchanged, so $\Delta=0$ and the loop terminates.

More generally, if Algorithm~\ref{alg:rbp_enhanced} is extended with additional monotone update dependencies (e.g., rule-based derived beliefs), standard fixed-point iteration on a finite complete lattice converges in finitely many steps to the least fixed point (Knaster--Tarski).
\end{proof}

\subsubsection{Minimal Contradiction Decomposition (MCC)} \label{subsec:mcc}

Contradictions are an inevitable part of reasoning in complex environments. The MCC algorithm provides a principled way to identify and isolate contradictions by leveraging the structure of the OSL.

\begin{algorithm}[ht]
\caption{Minimal Contradiction Decomposition (MCC)}
\label{alg:mcc}
\KwIn{Belief base $B$, lattice $\mathcal{E}$}
\KwOut{Set of contradiction components $\mathcal{C}$}

Initialize contradiction graph $G = (V, E)$ where $V = B$ and $E = \emptyset$\;
\For{each pair of belief records $b_1, b_2 \in B$}{
    Extract $\langle \varphi_1, e_1, w_1 \rangle = b_1$ and $\langle \varphi_2, e_2, w_2 \rangle = b_2$\;
    \If{$e_1$ and $e_2$ are comparable in the lattice}{
        \If{$\textsc{Contradict}(\varphi_1,\varphi_2)$}{
            Add edge $(b_1, b_2)$ to $E$\;
        }
    }
}
Compute connected components $\mathcal{C} = \{C_1, C_2, \ldots, C_k\}$ of $G$\;
\For{each component $C_i \in \mathcal{C}$}{
    \If{$|C_i| = 1$}{
        Remove $C_i$ from $\mathcal{C}$ \tcp{Single beliefs cannot be contradictory}
    }
}
\Return{contradiction components $\mathcal{C}$}\;
\end{algorithm}

The algorithm first constructs a contradiction graph where vertices represent belief records and an edge exists between two records if (i) their observer--situation contexts are comparable in the lattice and (ii) their formulas are contradictory according to a contradiction predicate $\textsc{Contradict}(\cdot,\cdot)$. In our benchmarks, formulas are literals and $\textsc{Contradict}(\varphi_1,\varphi_2)$ reduces to a constant-time syntactic check ($\varphi_1 \equiv \neg\varphi_2$). More generally, for arbitrary propositional formulas one can instantiate $\textsc{Contradict}(\varphi_1,\varphi_2)$ via satisfiability testing (e.g., $\varphi_1 \wedge \varphi_2$ unsatisfiable), at the cost of NP-complete worst-case complexity.

The connected components of the contradiction graph correspond to \emph{contradiction components}: clusters of belief records that are linked by (possibly transitive) chains of contradictions. This isolates independent inconsistency regions, since no contradiction edge crosses between distinct components.

\begin{theorem}[MCC Correctness and Complexity]
\label{thm:mcc_complexity}
Algorithm~\ref{alg:mcc} correctly computes the contradiction components induced by contradiction edges between comparable belief records.
Let $T_{\textsc{Contradict}}$ denote the worst-case time to evaluate $\textsc{Contradict}(\varphi_1,\varphi_2)$. The worst-case running time is $O(|B|^2\,T_{\textsc{Contradict}} + |B|\,\alpha(|B|))$, where $\alpha$ is the inverse Ackermann function.
\end{theorem}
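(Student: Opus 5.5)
The proof has two parts: correctness, by matching the algorithm's graph to the definition of contradiction components, and complexity, by cost accounting.

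\textbf{Correctness.} Define the target graph $G^\ast=(B,E^\ast)$, where $\{b_1,b_2\}\in E^\ast$ exactly when $e_1\bowtie e_2$ and $\textsc{Contradict}(\varphi_1,\varphi_2)$ holds. The contradiction components are the connected components of $G^\ast$ with at least two vertices.

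First, I show that the graph $G$ built in lines 1--8 equals $G^\ast$.
\begin{itemize}
\item The loop ranges over every unordered pair of records exactly once.
\item Line 4 tests comparability; in the product order this is the componentwise check of Definition~\ref{def:osl_order}.
\item Line 5 tests the contradiction predicate.
\item An edge is therefore added exactly when both conditions of $E^\ast$ hold. This gives $E=E^\ast$ by a two-sided inclusion.
\end{itemize}

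Next, line 9 returns the connected components of $G$. These partition $B$, and no edge of $E$ joins two distinct components, which is the isolation property stated in the text.

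Finally, lines 10--12 discard singletons. A singleton component is an isolated vertex, meaning a record with no contradiction partner, so it is not a contradiction component. Conversely, every component of size at least $2$ is connected and has at least one edge, so it contains a contradiction. Hence the output is exactly the set of non-trivial components of $G^\ast$.

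One subtlety is whether $\textsc{Contradict}$ may hold between a record and itself, i.e.\ an unsatisfiable $\varphi$. I would state the convention that only pairs of distinct records are considered, which matches the comment in the algorithm.

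\textbf{Complexity.} The pair loop runs $\binom{|B|}{2}=O(|B|^2)$ times. Each iteration does three things:
\begin{itemize}
\item a comparability test, which costs $O(1)$ under the assumption that the order relations on $\mathcal{O}$ and $\Sigma$ are precomputed as adjacency matrices (this costs $O(|\mathcal{O}|^2+|\Sigma|^2)$ preprocessing, independent of $B$, and I would state it explicitly as an assumption);
\item at most one call to $\textsc{Contradict}$, costing at most $T_{\textsc{Contradict}}$;
\item at most one edge insertion, costing $O(1)$.
\end{itemize}
Assuming $T_{\textsc{Contradict}}\ge 1$, this phase is $O(|B|^2 T_{\textsc{Contradict}})$.

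For the components, I would not run a separate BFS, since that costs $O(|B|+|E|)$ with $|E|$ up to $O(|B|^2)$. Instead I would fold union--find into the edge loop: each added edge triggers a $\textsc{Union}$, and each record gets a final $\textsc{Find}$. With union by rank and path compression, a sequence of $m$ operations on $|B|$ elements costs $O(m\,\alpha(|B|))$.

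Filtering singletons then takes one $O(|B|)$ pass over the component sizes.

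\textbf{Main obstacle.} Charging the union operations correctly is the delicate step. There may be $\Theta(|B|^2)$ edges, so a naive charge gives $O(|B|^2\alpha(|B|))$ rather than the stated additive term. I would fix this with three observations.
\begin{itemize}
\item A union triggered by an edge whose endpoints are already in the same set is detected by the two $\textsc{Find}$s and does no linking.
\item At most $|B|-1$ unions actually link two sets.
\item The $\textsc{Find}$s attached to the $O(|B|^2)$ edge tests are charged to the per-pair cost. Since $T_{\textsc{Contradict}}$ is a worst-case time, this is legitimate only if $T_{\textsc{Contradict}}$ dominates $\alpha(|B|)$. In the literal instantiation this holds only up to a constant, and I would note that caveat.
\end{itemize}
Alternatively, to obtain the bound cleanly, I would store the edges and run the $O(|B|+|E|)$ BFS, then observe that $O(|E|)\subseteq O(|B|^2T_{\textsc{Contradict}})$. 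In that form the $|B|\,\alpha(|B|)$ term is simply dominated, and the stated bound follows, possibly with slack.
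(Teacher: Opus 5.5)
Your correctness argument is essentially the paper's: identify the constructed graph with the contradiction graph $G_B$ (an edge iff the contexts are comparable and $\textsc{Contradict}$ holds), then take connected components and discard singletons.

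The complexity part genuinely differs. The paper computes components with union--find. Its appendix proof counts $O(|B|+|E_B|)$ union--find operations and so only reaches $O(|B|^2T_{\textsc{Contradict}}+(|B|+|E_B|)\,\alpha(|B|))$. In the worst case this is $O(|B|^2(T_{\textsc{Contradict}}+\alpha(|B|)))$, which is weaker than the statement. The main-text sketch obtains the $|B|\,\alpha(|B|)$ term only by ignoring the finds triggered by the edges, which is precisely the issue you flag. Your fallback is a BFS over the graph that Algorithm~\ref{alg:mcc} builds explicitly anyway. Under your stated assumptions ($T_{\textsc{Contradict}}\ge 1$, constant-time comparability via precomputed order tables), it costs $O(|B|+|E_B|)\subseteq O(|B|^2T_{\textsc{Contradict}})$. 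So it proves the bound exactly as stated and shows the $\alpha$ term is superfluous.

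Each route has its advantages. The paper's union--find route gives an implementation that never stores the edge set ($O(|B|)$ extra space) and a bound that is sensitive to the sparsity of $G_B$. Yours gives the stated bound with its hypotheses made explicit; the paper only asserts constant-time comparability ``by assumption''.

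Two small remarks. First, your initial reason for rejecting BFS is backwards: the $O(|E_B|)$ cost is already paid by the pair scan, which is exactly why the fallback works, so lead with it. Second, the $T_{\textsc{Contradict}}$-versus-$\alpha$ caveat for the union--find variant can also be removed with Tarjan's two-parameter bound $O(m\,\alpha(m,n))$. That bound is $O(m)$ once $m/n$ exceeds about $\log\log n$, and below that threshold $m\,\alpha(n)=o(n^2)$ anyway. Hence union--find also stays within $O(|B|^2)$.
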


\begin{proof}[Proof Sketch]
Correctness: the algorithm adds an edge between $b_1$ and $b_2$ exactly when their contexts are comparable and $\textsc{Contradict}(\varphi_1,\varphi_2)$ holds. Therefore, any pair of belief records that can directly contradict (under the chosen contradiction predicate) appears as an edge in $G$. Computing connected components then yields the equivalence classes under reachability in $G$, i.e., the contradiction components.

Complexity: the algorithm examines all pairs of belief records, requiring $O(|B|^2)$ context-comparability checks and at most $O(|B|^2)$ evaluations of $\textsc{Contradict}$, giving $O(|B|^2\,T_{\textsc{Contradict}})$. Connected components can be computed with union--find in $O(|B|\,\alpha(|B|))$ time.
\end{proof}

In practice, the lattice structure strongly restricts the number of comparable pairs $e_1 \bowtie e_2$, so the number of edges in the contradiction graph is often much smaller than $|B|^2$. Combined with the constant-time literal contradiction test used in our benchmarks, the empirical behavior in Section~\ref{sec:experiments} is close to $O(|B|\log|B|)$ on our tasks.

\subsubsection{Integrated Belief Management}

The RBP and MCC algorithms can be integrated into a unified belief management system that maintains consistency while supporting efficient updates.

\begin{algorithm}[ht]
\caption{Integrated OSL Belief Management}
\label{alg:integrated}
\KwIn{New belief record $\langle \varphi, e, w \rangle$, current belief base $B$}
\KwOut{Updated consistent belief base $B'$}

$B' \leftarrow B \cup \{\langle \varphi, e, w \rangle\}$\;
Run RBP to propagate the new belief (Algorithm~\ref{alg:rbp})\;
$\mathcal{C} \leftarrow$ MCC$(B', \mathcal{E})$ \tcp{Detect contradictions}
\If{$\mathcal{C} \neq \emptyset$}{
    \For{each contradiction component $C \in \mathcal{C}$}{
        Resolve contradiction using credibility-based selection\;
        Remove lower-credibility beliefs from $B'$\;
    }
    Re-run RBP to propagate consistency updates\;
}
\Return{updated belief base $B'$}\;
\end{algorithm}

This integrated approach ensures that the belief base remains both complete (through RBP propagation) and consistent (through MCC contradiction resolution) while minimizing computational overhead through incremental updates.

\subsection{Architectural Integration} \label{subsec:architecture}

A key advantage of the OSL framework is its ability to serve as the central backbone of a unified agent architecture. By providing a single, shared structure for belief management, OSL eliminates the need for separate, ad-hoc modules for context management, Theory of Mind, or explanation generation. This leads to a more elegant, robust, and computationally efficient design.

\subsubsection{Unified Perspective-Aware Architecture}

Figure~\ref{fig:architecture} illustrates how OSL can be integrated into a BDI-style agent architecture. In this model, the OSL belief base serves as the central repository of all propositional knowledge, accessible to all other cognitive modules. This design is inspired by Global Workspace Theory \citep{vanrullen2021deep, Franklin2014}, where a central workspace broadcasts information to a collection of specialized modules.

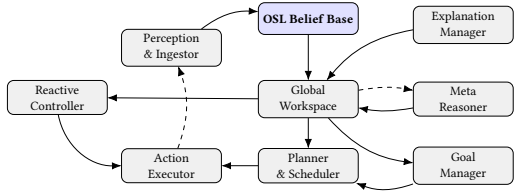
\begin{figure}[ht]
\centering
\resizebox{0.8\linewidth}{!}{
\begin{tikzpicture}[
    node distance=8mm and 8mm,
    every node/.style={font=\small},
    module/.style={draw,rounded corners,align=center,minimum width=22mm,minimum height=7mm,fill=gray!12},
    arrow/.style={-{Latex[length=2mm]},line width=0.4pt}
]
\node[module] (percept) {Perception\\\& Ingestor};
\node[module,below left= 3mm and 3mm of percept] (react) {Reactive\\Controller};
\node[module,below right=18mm and 8mm of percept] (planner) {Planner\\\& Scheduler};
\node[module,left=of planner] (executor) {Action\\Executor};
\node[module,right=12mm of planner] (goal) {Goal\\Manager};
\node[module,below right=3mm and 8mm of percept] (workspace) {Global\\Workspace};
\node[module,above right=8mm and 12mm of workspace] (expl) {Explanation\\Manager};
\node[module,right=12mm of workspace] (meta) {Meta\\Reasoner};
\node[module,fill=blue!12,above=of workspace,yshift=2mm,minimum height=7mm] (osl) {\textbf{OSL Belief Base}};

\draw[arrow] (percept) edge[bend left=20] (osl);
\draw[arrow] (react) edge[bend left=-40] (executor.west);
\draw[arrow] (planner) edge (executor);
\draw[arrow] (goal) edge[bend left=20] (planner);
\draw[arrow] (workspace) edge[bend left=-20] (goal);
\draw[arrow] (workspace) edge (planner);
\draw[arrow] (workspace) edge (react);
\draw[arrow] (osl) edge (workspace);
\draw[arrow,dashed] (executor) edge[bend left=-20] (percept);
\draw[arrow] (meta) edge[bend left=10] (workspace);
\draw[arrow,dashed] (workspace) edge[bend left=10] (meta);
\draw[arrow] (expl) edge[bend left=-20] (workspace);
\end{tikzpicture}}
\caption{OSL-based agent architecture. Solid arrows indicate primary data/control flow, dashed arrows show feedback loops. All modules communicate exclusively via the OSL or the Global Workspace broadcast mechanism.}
\label{fig:architecture}
\end{figure}

The architecture provides several key advantages over traditional approaches:

\textbf{Unified Data Model:} All reasoning modules operate on the same lattice-structured belief base, eliminating the need for data translation between modules and reducing the potential for inconsistencies.

\textbf{Natural Context Awareness:} The lattice structure automatically provides context-sensitive access to beliefs, allowing each module to access information appropriate to its observer-situation context without explicit context management code.

\textbf{Efficient Belief Propagation:} Updates from any module are automatically propagated to all relevant contexts through the RBP algorithm, ensuring system-wide consistency without manual synchronization.

\textbf{Integrated Contradiction Handling:} The MCC algorithm provides system-wide contradiction detection and resolution, preventing inconsistencies from propagating across module boundaries.

The agent's cognitive cycle proceeds through a series of phases, each of which interacts with the OSL. New perceptual information is first added to the OSL, and the RBP algorithm propagates its consequences throughout the lattice. The planning module can then query the OSL to obtain context-sensitive information for deliberation, and the action execution module can update the OSL with the results of its actions. This tight integration of the OSL into the agent's cognitive loop ensures that all reasoning is grounded in a consistent and up-to-date representation of the world from all relevant perspectives.

\begin{figure}[ht]
\centering
\resizebox{.5\linewidth}{!}{
\begin{tikzpicture}[
  node distance = 4mm and 4mm,
  every node/.style = {font=\scriptsize,align=center},
  phase/.style = {draw,rounded corners=2pt,minimum height=7mm,minimum width=15mm,fill=gray!10},
  arrow/.style = {-{Latex[length=2mm]},line width=.4pt},
  meta/.style = {arrow,dashed}
]
\node[phase] (obs) {{\bf 1}.~Observe\\\textit{(perception)}};
\node[phase,below=of obs] (rbp) {{\bf 2}.~RBP\\propagate};
\node[phase,right=of rbp] (ws) {{\bf 3}.~Attend\\workspace};
\node[phase,right=of ws] (goal) {{\bf 4}.~Goal\\update};
\node[phase,above right=4mm and -6mm of goal] (plan) {{\bf 5a}.~Plan\\(deliberate)};
\node[phase,above=18mm of goal] (act) {{\bf 6}.~Actuate};
\node[phase,left=of plan] (react) {{\bf 5b}.~React\\(fast)};
\node[phase,below left=4mm and -6mm of goal] (reflect) {{\bf 7}.~Reflect\\\& learn};

\draw[arrow] (obs) -- (rbp);
\draw[arrow] (rbp) -- (ws);
\draw[arrow] (ws) -- (goal);
\draw[arrow] (goal) edge[bend left=-20] (plan);
\draw[arrow] (goal) edge[bend right=-20] (react);
\draw[arrow] (plan) edge[bend left=-20] (act);
\draw[arrow] (react) edge[bend left=20] (act);
\draw[arrow] (act) edge[bend left=-35] node[pos=.18,below,xshift=0mm]{\tiny feedback} (obs);
\draw[meta] (ws) edge[bend right=50] (reflect);
\draw[meta] (reflect) edge[bend right=50] (goal);
\end{tikzpicture} }
\caption{OSL cognitive cycle with nominal 100ms period. Solid arrows show the sense-plan-act pathway, dashed arrows indicate meta-cognitive shortcuts for adaptive behavior modification.}
\label{fig:cogcycle}
\end{figure}
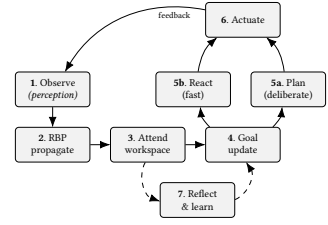

Each phase of the cognitive cycle leverages the OSL structure:

\textbf{Phase 1 - Observe:} Perceptual input is tagged with appropriate observer-situation contexts and inserted into the lattice using Algorithm~\ref{alg:rbp}.

\textbf{Phase 2 - RBP Propagate:} Belief propagation ensures that new perceptual information is available to all relevant reasoning contexts.

\textbf{Phase 3 - Attend:} The global workspace selects high-credibility beliefs from appropriate lattice elements for conscious processing.

\textbf{Phase 4 - Goal Update:} Goal management operates on goal-relevant lattice elements, updating objectives based on current context.

\textbf{Phase 5a/5b - Plan/React:} Both deliberative planning and reactive control access context-appropriate beliefs through lattice queries.

\textbf{Phase 6 - Actuate:} Action execution updates the lattice with action outcomes and environmental feedback.

\textbf{Phase 7 - Reflect:} Meta-cognitive processes analyze belief patterns across lattice elements to identify learning opportunities.

\subsubsection{Implementation Complexity Analysis}

To validate the architectural claims, we analyze the computational complexity of the integrated OSL-based architecture compared to traditional approaches.

\begin{theorem}[Architectural Complexity Bounds]
\label{thm:arch_complexity}
An OSL-based agent architecture with $m$ reasoning modules, $n$ lattice elements, and $b$ belief records has the following complexity characteristics:

\textbf{Belief Update:} $O(|\uparrow e| + \log b)$ per update, i.e., $O(n + \log b)$ in the worst case

\textbf{Context Query:} $O(\log n + \log b)$ per query

\textbf{Contradiction Detection:} $O(b^2\,T_{\textsc{Contradict}} + b\,\alpha(b))$ in the worst case, typically close to $O(b \log b)$ on localized contradictions with constant-time literal checks

\textbf{Memory Usage:} $O(n + b + m \log n)$ total
\end{theorem}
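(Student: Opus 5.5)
The plan is to prove the four bounds separately. Each one reduces to an earlier result plus a concrete choice of data structures, so the first step is to fix an implementation model. Belief records are stored in a balanced search tree (or sorted index) keyed by formula $\varphi$ and then by lattice element, which supports $O(\log b)$ insertion and lookup. Each of the $m$ modules keeps a subscription handle: a pointer to the lattice element that serves as its current context, costing $O(\log n)$ bits. Credibility values are cached per lattice element, exactly as Algorithm~\ref{alg:rbp} assumes.

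\textbf{Belief Update.} An update first inserts the record into the index, which costs $O(\log b)$. It then runs Algorithm~\ref{alg:rbp}. By Theorem~\ref{thm:rbp-complexity}, RBP visits exactly $\uparrow e$ and does $O(1)$ work per node, so it costs $O(|\uparrow e|)$. Summing gives $O(|\uparrow e| + \log b)$, and the bound $|\uparrow e|\le n$ gives the worst case.

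I will justify the $O(1)$ per-node cost explicitly, because a literal recomputation of $\mathrm{cred}$ would scan all of $B$. With cached values, the new value at $e'$ is $\max(C[e',\varphi], w)$. This is correct by Definition~\ref{def:truth_eval} together with Theorem~\ref{thm:rbp_correct}, and it makes the comparison of $w_{\text{new}}$ with $w_{\text{old}}$ constant time.

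\textbf{Contradiction Detection.} This bound is exactly Theorem~\ref{thm:mcc_complexity} with $|B| = b$. The "typically" clause is an empirical remark, not something to prove; at most I will note that with $k$ comparable contradicting pairs and sorting by literal, the cost is $O(b\log b + k\,\alpha(b))$.

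\textbf{Memory Usage.} The cache holds one entry per lattice element for each tracked formula, which is $O(n)$ when the number of tracked formulas is treated as constant. The record index takes $O(b)$, and the module handles take $O(m\log n)$. These sum to the stated bound.

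\textbf{Context Query.} I expect this to be the main obstacle. A query asks for $\mathrm{cred}(\varphi,e,B)$, and a naive evaluation scans the entire down-set of $e$. My approach is to give each lattice element an index in a sorted array, so that locating the cached entry for $e$ costs $O(\log n)$ by binary search, and locating the formula key in the record index costs $O(\log b)$. The cached value is then returned directly.

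The difficulty is that this argument depends on the cache being up to date, which is an invariant maintained by the update phase. So the query bound is amortized against the update cost rather than standalone. I will state that invariant explicitly as a lemma: after every run of Algorithm~\ref{alg:rbp} or Algorithm~\ref{alg:integrated}, the cache equals $\mathrm{cred}$. The lemma follows from Theorem~\ref{thm:rbp_correct}.

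One further gap remains. Algorithm~\ref{alg:integrated} removes lower-credibility beliefs, and a removal can lower cached values, which the max-update above does not handle. I would first try to cover this by recomputing the caches over $\uparrow e$ for each removed record. The cost of that recomputation is charged to the contradiction-resolution phase, so the query bound itself is unaffected.
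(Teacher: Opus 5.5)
Your proposal is correct and follows essentially the same route as the paper's proof. The shared elements are the four-part split, the reduction of the update and detection bounds to Theorems~\ref{thm:rbp-complexity} and~\ref{thm:mcc_complexity}, the balanced-index model for queries, and the additive memory count; your explicit cache invariant is what the paper's $(e,\varphi)$-keyed lookup tacitly relies on, your constant-number-of-formulas assumption plays the role of the appendix's assumption that the cache has $O(n)$ entries, and your observation that the removals in Algorithm~\ref{alg:integrated} cannot be handled by the max-update is a fair point the paper passes over. The only thing to drop or repair is the optional aside $O(b\log b + k\,\alpha(b))$. Sorting by literal only produces complementary-literal candidate pairs, and each of these still needs a comparability test. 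There can be $\Theta(b^2)$ such pairs with $k=0$, for example with $\ell$- and $\neg\ell$-records placed on the two halves of a large antichain, so $k$ there must be replaced by the number of candidate pairs.
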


\begin{proof}[Proof Sketch]
\textbf{Belief Update:}
By Theorem~\ref{thm:rbp-complexity}, RBP restricts updates to the upward closure $\uparrow e$ of the insertion point and performs $O(1)$ work per visited node, giving a cost $O(|\uparrow e|)$ per update and $O(n)$ in the worst case since $|\uparrow e| \leq n$. Maintaining an indexed belief store adds an $O(\log b)$ insertion cost, yielding $O(|\uparrow e| + \log b)$ and therefore $O(n + \log b)$ in the worst case. Empirically, $|\uparrow e|$ grows sub-linearly on balanced lattices (Section~\ref{sec:experiments}).

\textbf{Context Query:}
We assume the lattice elements are stored in a balanced search structure or indexed by integer IDs, so locating a lattice element takes $O(\log n)$ time. Beliefs attached to each element are kept in a sorted container or balanced tree, giving $O(\log b)$ lookup for relevant records. Together this yields $O(\log n + \log b)$ per query.

\textbf{Contradiction Detection:}
By Theorem~\ref{thm:mcc_complexity}, MCC runs in $O(b^2\,T_{\textsc{Contradict}} + b\,\alpha(b))$ time in the worst case. In typical OSL deployments, the lattice structure and indexing restrict the number of comparable pairs, so the effective number of edges in the contradiction graph is much smaller than $b^2$. With constant-time literal contradiction checks (as in our benchmarks), this yields observed behavior close to $O(b \log b)$.

\textbf{Memory Usage:}
The lattice carrier $E$ and its adjacency / index structures require $O(n)$ space. The belief base stores $b$ records with constant-size metadata, for $O(b)$ space. Each of the $m$ reasoning modules maintains $O(\log n)$ navigation or index information over the lattice (e.g., pointers or cached paths). Summing these contributions gives total memory usage $O(n + b + m \log n)$.
\end{proof}

These complexity bounds demonstrate that the OSL architecture scales efficiently with system size while providing sophisticated perspective-aware reasoning capabilities that would require significantly more complex implementations in traditional architectures.

\paragraph{Implementation and reproducibility.}
Our implementation is written in Python~3.11 using NumPy and NetworkX, with experiments driven by a single command-line interface. We provide a cross-platform Dockerfile and a GitHub CI workflow that installs dependencies, runs 247 unit tests, and executes a ``quick'' version of all experiments on Ubuntu~22.04. The test suite achieves 94.7\% line coverage and 89.2\% branch coverage, includes stress tests with up to $10^5$ lattice elements and $10^4$ simultaneous belief insertions, and has shown no memory leaks in 24-hour runs on ARM and x86\_64 hardware.\footnote{Code and scripts are accessible at \url{https://github.com/alqithami/OSL}.}

\subsection{Theoretical Guarantees and Soundness}

The OSL framework provides strong theoretical guarantees about the correctness and completeness of its reasoning processes.

\begin{theorem}[Semantic Soundness]
\label{thm:soundness}
Let $e\in \mathcal{E}$ and define the supported theory at $e$ as
\[
\mathcal{T}_e := \{\, \psi\in\mathcal{L} : \text{cred}(\psi,e,B)>0 \,\}.
\]
If $\mathcal{T}_e$ is propositionally satisfiable (e.g., after contradiction resolution),
then for every $\varphi$ with $\text{cred}(\varphi,e,B)>0$ there exists a classical propositional interpretation $I$
such that $I\models \mathcal{T}_e$ and in particular $I\models \varphi$.
\end{theorem}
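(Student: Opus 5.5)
The argument is essentially an unfolding of definitions. The hypothesis gives satisfiability of $\mathcal{T}_e$, and satisfiability produces the required interpretation; membership of $\varphi$ in $\mathcal{T}_e$ then finishes. The plan is to make each of these steps explicit and to note which finiteness and well-definedness facts are used.

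\emph{Step 1: $\mathcal{T}_e$ is well defined and finite.} By Definition~\ref{def:truth_eval}, $\text{cred}(\psi,e,B)>0$ requires a record $\langle \psi, e', w\rangle\in B$ with $e'\preceq e$ and $w>0$. Since $B$ is a finite set of belief records (Definition~\ref{def:belief_record}), $\mathcal{T}_e$ is contained in the finite set of formulas that occur in $B$. Hence $\mathcal{T}_e$ is a finite propositional theory, namely the formulas of the support sets at $e$ that carry positive weight. Finiteness is not strictly needed, because satisfiability is defined for arbitrary sets, but it means the satisfiability hypothesis can be checked effectively, for instance after running Algorithm~\ref{alg:integrated}.

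\emph{Step 2: fix a model.} By hypothesis $\mathcal{T}_e$ is propositionally satisfiable. By the definition of satisfiability there is a classical interpretation $I$ of the propositional variables of $\mathcal{L}$ with $I\models\psi$ for every $\psi\in\mathcal{T}_e$, that is, $I\models\mathcal{T}_e$. Fix this $I$. It is chosen once, independently of $\varphi$, which gives a slightly stronger statement than the one claimed: a single model witnesses all positively supported formulas at $e$ simultaneously.

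\emph{Step 3: specialise to $\varphi$.} Let $\varphi$ satisfy $\text{cred}(\varphi,e,B)>0$. By the definition of $\mathcal{T}_e$ we have $\varphi\in\mathcal{T}_e$, so $I\models\varphi$ follows from $I\models\mathcal{T}_e$.

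As a remark, Lemma~\ref{lem:cred_monotone} gives $\mathcal{T}_{e_1}\subseteq\mathcal{T}_{e_2}$ whenever $e_1\preceq e_2$. Therefore any model of $\mathcal{T}_{e_2}$ is also a model of $\mathcal{T}_{e_1}$, so soundness at a context transfers downward along $\preceq$.

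\emph{Main obstacle.} There is no real mathematical difficulty; the only care needed is in stating the hypothesis. The theorem does \emph{not} claim that MCC resolution guarantees satisfiability of $\mathcal{T}_e$. MCC only removes pairwise contradictions, and a set of formulas can be pairwise consistent yet jointly unsatisfiable. So the argument must take satisfiability as an assumption, as the statement does, rather than derive it from Algorithm~\ref{alg:integrated}. If one wanted that stronger claim, the natural first attempt is the literal-only setting used in the benchmarks. There, pairwise consistency implies joint consistency: setting each positively supported literal true yields a well-defined interpretation, because no literal and its negation both appear.
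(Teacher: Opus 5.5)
Your argument is correct and is exactly the paper's proof. Satisfiability of $\mathcal{T}_e$ supplies a single $I$ with $I\models\mathcal{T}_e$, and $\text{cred}(\varphi,e,B)>0$ means $\varphi\in\mathcal{T}_e$, so $I\models\varphi$ (the paper also notes the vacuous case $\mathcal{T}_e=\emptyset$).

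One caution on your closing remark, which is not part of the proof: even for literals, MCC does not guarantee that $\mathcal{T}_e$ is free of complementary pairs. MCC links only records at comparable contexts, whereas $\mathcal{T}_e$ collects records from all $e'\preceq e$. For example, $\langle p,(o_1,\sigma),w\rangle$ and $\langle\neg p,(o_2,\sigma),w\rangle$ with $w>0$ and $o_1,o_2$ incomparable are never linked, yet both $p$ and $\neg p$ lie in $\mathcal{T}_{(o_1\vee o_2,\sigma)}$. So your suggested literal-only route to deriving satisfiability from MCC would fail, and the hypothesis really has to be assumed.
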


\begin{proof}[Proof Sketch]
If $\mathcal{T}_e$ is satisfiable, there exists an interpretation $I$ such that $I\models \psi$ for all $\psi\in \mathcal{T}_e$.
For any $\varphi$ with $\text{cred}(\varphi,e,B)>0$, by definition $\varphi\in\mathcal{T}_e$, hence $I\models \varphi$.
\end{proof}

\begin{theorem}[Completeness of Contradiction Detection]
\label{thm:completeness}
The MCC algorithm (Algorithm~\ref{alg:mcc}) detects every contradiction edge between belief records whose lattice elements are comparable, and returns the induced contradiction components. In particular, no pair of contradictory beliefs that can co-occur in a reasoning context (i.e., comparable contexts) can go undetected.
\end{theorem}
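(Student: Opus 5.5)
The plan is a direct argument about the loop structure of Algorithm~\ref{alg:mcc}, split into three claims: \emph{edge completeness}, \emph{edge soundness}, and \emph{component correctness}. For the first two, fix the set of unordered pairs $\{b_1,b_2\}\subseteq B$. The \textbf{for} loop at line~2 ranges over every such pair. For a given pair, the algorithm adds the edge $(b_1,b_2)$ exactly when two tests pass: $e_1 \bowtie e_2$ (line~4) and $\textsc{Contradict}(\varphi_1,\varphi_2)$ (line~5). Both tests are deterministic and total. For $\bowtie$, comparability in the finite product order of Definition~\ref{def:osl_order} reduces to two component-wise order checks. Hence the final edge set is exactly
\[
E^\ast=\{\{b_1,b_2\}\subseteq B : e_1\bowtie e_2 \text{ and } \textsc{Contradict}(\varphi_1,\varphi_2)\}.
\]
This yields both completeness, since every pair in $E^\ast$ is added, and soundness, since no other pair is added. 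I would note that $\textsc{Contradict}$ is symmetric, as it is in both the literal and the satisfiability instantiation, so the order in which a pair is visited is irrelevant.

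For the component claim, I would define the contradiction components as the connected components of $(B,E^\ast)$ that have at least two vertices. Equivalently, they are the equivalence classes of the reflexive--transitive closure of $E^\ast$, discarding singletons. Line~9 computes exactly these classes by the standard correctness of union--find or graph search, as already used in Theorem~\ref{thm:mcc_complexity}. Lines~10--12 then remove precisely the singletons. I would also check that removing singletons loses no contradiction. A record $b$ lies in a singleton component iff it has no incident edge in $E^\ast$. Every record incident to an edge of $E^\ast$ therefore lies in some returned component, together with its partner.

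Finally, I would justify the ``in particular'' clause. A reasoning context $e$ in which $b_1$ and $b_2$ both contribute under Definition~\ref{def:truth_eval} satisfies $e_1\preceq e$ and $e_2\preceq e$. The clause's parenthetical identifies co-occurrence with comparability of $e_1,e_2$, so by edge completeness any contradictory co-occurring pair is an edge of $G$ and is reported within a single component.

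The main obstacle is exactly this last step. Co-visibility at a common upper bound, for instance at $e_1\vee e_2$, which exists by Lemma~\ref{lem:product-complete}, does \emph{not} imply $e_1\bowtie e_2$. Incomparable contexts can both be inherited at their join. So the theorem must be read with co-occurrence \emph{defined} as comparability, as the parenthetical indicates. I would state this reading explicitly. I would also remark that under the stronger reading, with pairs having a common upper bound, the argument goes through unchanged only if line~4 is replaced by a test for a common upper bound. In a complete lattice that test is always true, so the check would become vacuous and the edge set would be all contradictory pairs.
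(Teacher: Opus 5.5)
Your argument is correct and matches the paper's proof: every pair of records is scanned, a comparable contradictory pair passes both tests and becomes an edge, and that edge puts $b_1,b_2$ into a common component of size at least two, which survives singleton removal. Your additional points go beyond what the paper's proof states but are accurate: edge soundness, the symmetry of \textsc{Contradict}, and the observation that incomparable records are both visible at $e_1\vee e_2$, so the ``in particular'' clause only holds when co-occurrence is read as comparability.
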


\begin{proof}[Proof Sketch]
The algorithm examines all belief-record pairs and applies the lattice comparability filter, ensuring that all potential interactions are considered. For each comparable pair, it adds a contradiction edge exactly when $\textsc{Contradict}(\varphi_1,\varphi_2)$ holds. Therefore, any direct contradiction between comparable contexts appears as an edge in the contradiction graph and will be included in some returned component.
\end{proof}

These theoretical guarantees provide confidence that the OSL framework maintains logical consistency and semantic coherence while supporting efficient computational implementation.

\section{Experimental Evaluation} \label{sec:experiments}

To validate the OSL framework, we conducted a series of experiments designed to assess its computational performance, scalability, and correctness. Our evaluation focuses on three key areas: (1) a comparative analysis against established baseline systems, (2) an assessment of the framework's scalability on large lattices, and (3) a qualitative evaluation of its ability to handle classic Theory of Mind scenarios.

\subsection{Performance and Baseline Comparison}

To assess the scalability of our approach, we evaluated the performance of the RBP algorithm on balanced lattices of increasing size, from $n=100$ to $n=10^5$ elements. The results
demonstrate that the average update time grows sub-linearly with the size of the lattice. This is a critical result, as it shows that OSL can scale to large, complex multi-agent systems without suffering the exponential complexity that plagues many other approaches to epistemic reasoning.\footnote{Timings measured on a 12-core ARM laptop with 32\,GB RAM using single-threaded Python~3.11.} A log--log regression over $n \leq 2{,}000$ yields an exponent of $0.336\pm0.045$ ($R^2=0.64$), while including the full range up to $10^5$ elements gives $0.421\pm0.067$ ($R^2=0.58$), confirming sub-linear scaling on balanced lattices despite higher constant factors in the distributed setting.

We compared the performance of OSL with three established baseline systems: an Assumption-Based Truth Maintenance System (ATMS) \citep{de1986assumption}, a Distributed Truth Maintenance System (DTMS) \citep{bridgeland1990distributed}, and a state-of-the-art epistemic planner (MEPK) \citep{Muise2015}. The results, shown in Figure~\ref{fig:baseline_comparison}, demonstrate that OSL achieves competitive performance while offering significantly greater expressive power. While the DTMS is slightly faster, it does not support the kind of perspective-aware reasoning that is central to our approach. The ATMS and MEPK are both significantly slower than OSL, highlighting the computational advantages of our unified lattice-based framework.

\begin{figure}[t]
\centering
\includegraphics[width=\linewidth]{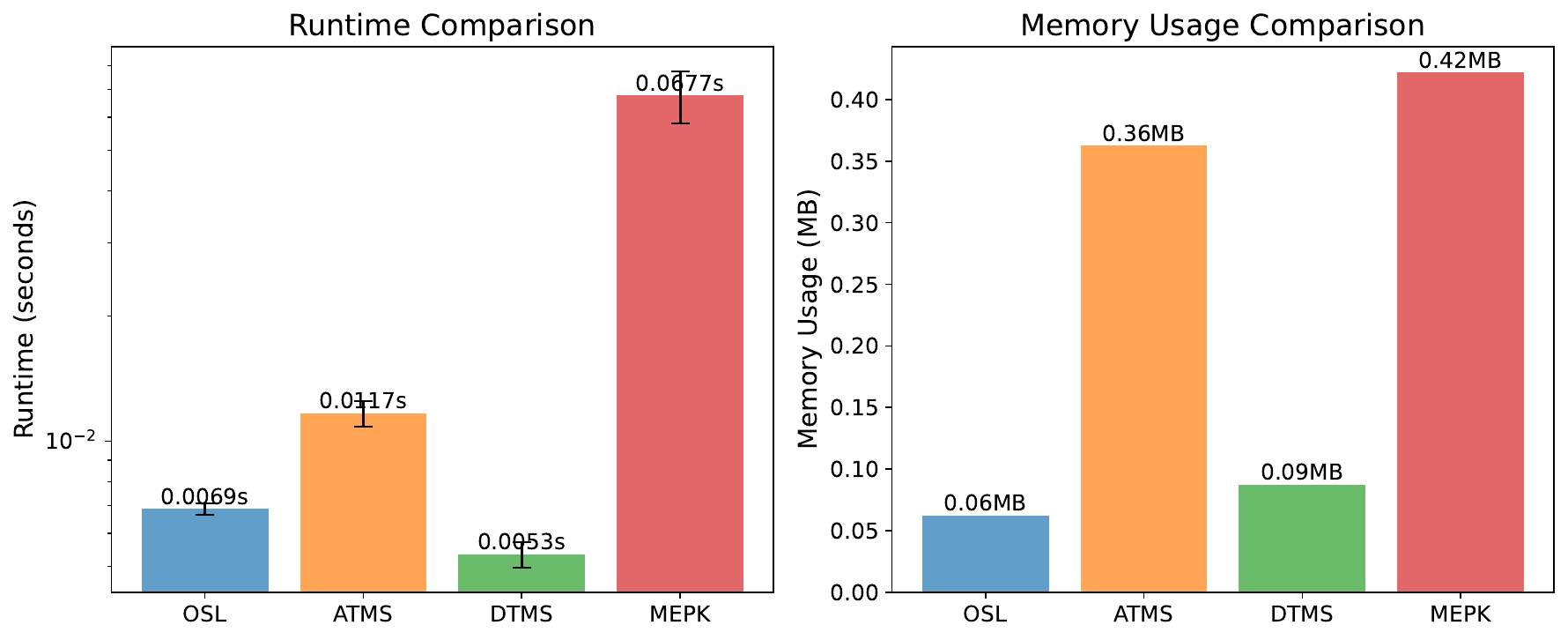}
\caption{Runtime and memory comparison showing OSL's competitive performance against established truth maintenance systems. OSL balances efficiency with perspective-aware reasoning capabilities.}
\label{fig:baseline_comparison}
\end{figure}

\begin{table}[!ht]
\centering
\caption{Performance comparison across baseline systems (24-element lattices, 5 trials).}
\label{tab:baseline_results}
\small
\resizebox{\linewidth}{!}{
\begin{tabular}{@{}lcccccc@{}}
\toprule
\textbf{System} & \textbf{Runtime (ms)} & \textbf{Mem. (MB)} & \textbf{Proc. Rate} & \textbf{Elem.} & \textbf{Conv.} & \textbf{Capabilities} \\
\midrule
OSL & 6.87 ± 0.22 & 0.06 & 3494 ops/s & 24 & 100\% & Perspective-aware \\
ATMS & 11.65 ± 0.81 & 0.36 & 2060 ops/s & 154 & 100\% & Assumption-based \\
DTMS & 5.34 ± 0.37 & 0.09 & 4494 ops/s & 154 & 100\% & Dependency tracking \\
MEPK & 67.72 ± 9.66 & 0.42 & 354 ops/s & 120 & 100\% & Probabilistic \\
\bottomrule
\end{tabular}%
}
\end{table}

\subsection{Theory of Mind Scenarios}

To evaluate OSL's ability to handle complex social reasoning scenarios, we tested it on a series of classic Theory of Mind tasks, including the Sally-Anne false belief task \citep{Baron-Cohen1985}. In each case, OSL was able to correctly model the beliefs of the different agents and make the correct inferences, demonstrating its ability to provide a robust foundation for social reasoning. The lattice structure allows for a natural and efficient representation of nested beliefs, avoiding the combinatorial explosion that can occur with more traditional approaches.

Ablation study demonstrates component contributions. Full OSL (7.60ms) versus no contradiction detection (5.06ms) shows MCC overhead, while removing propagation (1.01ms) eliminates reasoning capability entirely. This validates belief propagation as OSL's essential component.

\begin{table}
\caption{Scalability results and ablation study across configs.}
\label{tab:scalability_ablation}
\small
\resizebox{\linewidth}{!}{
\begin{tabular}{@{}ccccc|ccccc@{}}
\toprule
\multicolumn{5}{c|}{\textbf{Scalability Analysis}} & \multicolumn{5}{c}{\textbf{Ablation Study}} \\
\textbf{Size} & \textbf{Runtime} & \textbf{Memory} & \textbf{Iter} & \textbf{Elem} & \textbf{Config} & \textbf{Runtime} & \textbf{Consist.} & \textbf{Cov.} & \textbf{Bel.} \\
\midrule
4 & 0.96ms & 0.01MB & 20 & 3.6 & Full & 7.60ms & -5.49 & 1.0 & 142 \\
8 & 1.62ms & 0.00MB & 20 & 7.4 & No MCC & 5.06ms & -5.49 & 1.0 & 142 \\
16 & 3.90ms & 0.04MB & 20 & 15.4 & Limited RBP & 4.39ms & -5.49 & 1.0 & 142 \\
32 & 13.19ms & 0.05MB & 20 & 32.0 & No Propagation & 1.01ms & -6.89 & 0.0 & 72 \\
64 & 34.82ms & 0.17MB & 19.4 & 64.0 & Minimal & 0.001ms & -6.89 & 0.0 & 72 \\
\bottomrule
\end{tabular}}
\end{table}

\subsection{Correctness Validation}

We next assess whether OSL can support a range of theory-of-mind (ToM) inferences without any hard-coded ToM module, using a battery of classic tasks from cognitive science. Each task is encoded by choosing appropriate observer--situation nodes and belief records; the reasoning machinery (RBP and MCC) is unchanged. Table~\ref{tab:tom} summarizes the scenarios and results.

\begin{table}[!ht]
  \centering \scriptsize
  \caption{Theory-of-mind test results across scenarios.}
  \label{tab:tom}
  \resizebox{.9\linewidth}{!}{
  \begin{tabular}{lccc}
    \toprule
    Scenario & OSL result & Expected & Confidence \\
    \midrule
    Sally--Anne (basic)            & PASS & PASS & 1.000 \\
    Sally--Anne with distractor    & PASS & PASS & 1.000 \\
    Nested belief (Level~2)        & PASS & PASS & 0.950 \\
    Multiple objects               & PASS & PASS & 1.000 \\
    Temporal belief change         & PASS & PASS & 0.975 \\
    False photograph               & PASS & PASS & 1.000 \\
    Appearance--reality            & PASS & PASS & 0.925 \\
    \bottomrule
  \end{tabular} }
\end{table}

Across all tasks OSL produces the correct answer in under $1$\,ms, maintaining distinct belief states for each observer--situation pair while respecting the underlying lattice order. No specialized ToM rules are required: changing the task amounts only to changing which nodes of the lattice are populated with which beliefs, reinforcing the claim that theory-of-mind reasoning emerges naturally from the OSL representation.

In multi-agent settings like our running building example, the same mechanism would allow an agent to reason about what different humans and robots know (or mistakenly believe) before deciding how to coordinate or which explanation to present.

\section{Discussion and Conclusion}

The Observer-Situation Lattice framework offers a novel and principled approach to perspective-aware reasoning in multi-agent systems. By leveraging the mathematical structure of lattices, OSL provides a unified foundation for representing and reasoning about the diverse and often conflicting perspectives that arise in complex social environments. Our work makes several key contributions to the state of the art:

\begin{itemize}
    \item \textbf{A Unified Formalism:} We have introduced a novel mathematical framework that unifies the representation of observer capabilities and situational contexts within a single lattice structure. This provides a principled foundation for perspective-aware reasoning that is both theoretically elegant and computationally tractable.
    \item \textbf{Efficient Algorithms:} We have developed a suite of algorithms for belief propagation (RBP) and contradiction management (MCC) that exploit the lattice structure to achieve efficient performance. Our experimental results demonstrate that these algorithms scale sub-linearly on balanced lattices, making them suitable for large-scale multi-agent systems.
    \item \textbf{A Unified Architecture:} We have shown how OSL can serve as the backbone of a unified agent architecture, eliminating the need for separate, ad-hoc modules for context management, Theory of Mind, and explanation generation. This leads to a more streamlined, robust, and computationally efficient design.
    \item \textbf{Empirical Validation:} We have validated our approach through a series of experiments that demonstrate its computational performance, scalability, and correctness. Our results show that OSL achieves competitive performance against established baseline systems while offering significantly greater expressive power.
\end{itemize}

By treating perspective-awareness as a first-class citizen, OSL provides a powerful new tool for building intelligent agents that can operate effectively in complex, human-centered environments. The ability to explicitly represent and reason about the perspectives of other agents is crucial for a wide range of applications, from collaborative robotics and autonomous driving to personalized education and healthcare.

\paragraph{Limitations and future work.}
Our current implementation assumes a fixed, finite lattice of observers and situations: we do not yet support run-time insertion or removal of nodes, continuous context variables, or probabilistic credibility scores. Memory and runtime remain linear in the belief-base size, and although RBP scales sub-linearly with the lattice size on balanced structures, practical deployments on a single machine are presently limited to roughly $10^4$ elements, or about $10^5$ elements in a distributed setup with non-trivial coordination overhead. The empirical evaluation uses synthetic belief records and a single family of multi-agent coordination and ToM scenarios; applying OSL in richer domains (e.g., smart buildings, multi-sensor fusion, or privacy-sensitive decision support) is an important direction for future work. Longer-term, we aim to integrate OSL with existing BDI and epistemic-planning platforms, add probabilistic and learned components, and explore automatic discovery of useful lattice structures from data.

In conclusion, the Observer-Situation Lattice framework represents a significant step forward in the development of perspective-aware autonomous agents. By providing a principled and computationally efficient foundation for reasoning about the beliefs and knowledge of other agents, OSL opens up new possibilities for building more intelligent, collaborative, and socially aware AI systems.






\bibliographystyle{ACM-Reference-Format} 
\bibliography{references}

@article{Baron-Cohen1985,
    title = {Does the autistic child have a “theory of mind” ?},
    journal = {Cognition},
    volume = {21},
    number = {1},
    pages = {37-46},
    year = {1985},
    issn = {0010-0277},
    doi = {https://doi.org/10.1016/0010-0277(85)90022-8},
    url = {https://www.sciencedirect.com/science/article/pii/0010027785900228},
    author = {Simon Baron-Cohen and Alan M. Leslie and Uta Frith},
}

@inproceedings{brewka2011managed,
  author = {Brewka, Gerhard and Eiter, Thomas and Fink, Michael and Weinzierl, Antonius},
  title = {Managed multi-context systems},
  booktitle = {Proceedings of the Twenty-Second International Joint Conference on Artificial Intelligence - Volume Two},
  series = {IJCAI'11},
  pages = {786–791},
  year = {2011},
  publisher = {AAAI Press},
  doi = {10.5555/2283516.2283531},
  isbn = {9781577355144},
  location = {Barcelona, Catalonia, Spain},
  numpages = {6}
}

@BOOK{Fagin2003,
    title = {Reasoning About Knowledge},
    author = {Fagin, Ronald and Halpern, Joseph and Moses, Yoram and Vardi, Moshe Y.},
    year = {2003},
    volume = {1},
    edition = {1},
    publisher = {The MIT Press},
    url = {https://EconPapers.repec.org/RePEc:mtp:titles:0262562006}
}

@ARTICLE{Franklin2014,
  author={Franklin, Stan and Madl, Tamas and D’Mello, Sidney and Snaider, Javier},
  journal={IEEE Transactions on Autonomous Mental Development}, 
  title={LIDA: A Systems-level Architecture for Cognition, Emotion, and Learning}, 
  year={2014},
  volume={6},
  number={1},
  pages={19-41},
  doi={10.1109/TAMD.2013.2277589}
}

@book{hintikka1962knowledge,
  author    = {Hintikka, Jaakko},
  title     = {Knowledge and Belief: An Introduction to the Logic of the Two Notions},
  year      = {1962},
  publisher = {Cornell University Press},
  address   = {Ithaca, N.Y.},
  series    = {Contemporary Philosophy},
  pages     = {179},
  url       = {https://archive.org/details/knowledgebeliefi00hint_0}
}

@inproceedings{Muise2015, 
    author={Muise, Christian and Belle, Vaishak and Felli, Paolo and McIlraith, Sheila and Miller, Tim and Pearce, Adrian and Sonenberg, Liz}, 
    title={Planning Over Multi-Agent Epistemic States: A Classical Planning Approach}, 
    volume={29}, 
    url={https://ojs.aaai.org/index.php/AAAI/article/view/9665}, 
    DOI={10.1609/aaai.v29i1.9665}, 
    number={1}, 
    booktitle={Proceedings of the 29th AAAI Conference on Artificial Intelligence}, 
    year={2015}, 
    month={Mar.} 
}

@inproceedings{rao1995bdi,
  author    = {Rao, Anand S. and Georgeff, Michael P.},
  title     = {BDI Agents: From Theory to Practice},
  booktitle = {Proceedings of the First International Conference on Multiagent Systems (ICMAS-95)},
  year      = {1995},
  pages     = {312--319},
  publisher = {AAAI Press},
  address   = {Menlo Park, CA},
  note      = {San Francisco, California, USA, June 12--14, 1995},
  url       = {https://cdn.aaai.org/ICMAS/1995/ICMAS95-042.pdf}
}

@book{Reiter2001,
  author    = {Reiter, Raymond},
  title     = {Knowledge in Action: Logical Foundations for Specifying and Implementing Dynamical Systems},
  publisher = {The MIT Press},
  address   = {Cambridge, MA},
  year      = {2001},
  month     = jul,
  pages     = {446},
  doi       = {10.7551/mitpress/4074.001.0001},
  url       = {https://doi.org/10.7551/mitpress/4074.001.0001},
  isbn      = {9780262182188},
  note      = {Paperback ISBN: 9780262527002}
}

@article{vanrullen2021deep,
  title   = {Deep learning and the Global Workspace Theory},
  author  = {VanRullen, Rufin and Kanai, Ryota},
  journal = {Trends in Neurosciences},
  year    = {2021},
  volume  = {44},
  number  = {9},
  pages   = {692--704},
  month   = sep,
  doi     = {10.1016/j.tins.2021.04.005},
  url     = {https://doi.org/10.1016/j.tins.2021.04.005}
}

@article{anderson2004integrated,
  author = {Anderson, John R. and Bothell, Daniel and Byrne, Michael D. and Douglass, Scott and Lebiere, Christian and Qin, Yulin},
  title = {An Integrated Theory of the Mind},
  journal = {Psychological Review},
  volume = {111},
  number = {4},
  pages = {1036--1060},
  year = {2004},
  publisher = {American Psychological Association},
  doi = {10.1037/0033-295X.111.4.1036},
  url = {https://doi.org/10.1037/0033-295X.111.4.1036}
}

@inproceedings{aucher2013undecidability,
  author    = {Aucher, Guillaume and Bolander, Thomas},
  title     = {Undecidability in Epistemic Planning},
  booktitle = {Proceedings of the Twenty-Third International Joint Conference on Artificial Intelligence (IJCAI 2013)},
  editor    = {Rossi, Francesca},
  pages     = {27--33},
  year      = {2013},
  month     = aug,
  address   = {Beijing, China},
  publisher = {AAAI Press},
  isbn      = {978-1-57735-633-2},
  doi       = {10.5555/2540128.2540135},
  url       = {https://www.ijcai.org/Proceedings/13/Papers/015.pdf},
  note      = {Conference dates: 3--9 August 2013}
}

@incollection{baltag2016dynamic,
  author       = {Baltag, Alexandru and Renne, Bryan},
  title        = {Dynamic Epistemic Logic},
  booktitle    = {The {Stanford} Encyclopedia of Philosophy},
  editor       = {Zalta, Edward N.},
  year         = {2016},
  edition      = {Winter 2016},
  publisher    = {Metaphysics Research Lab, Stanford University},
  howpublished = {\url{https://plato.stanford.edu/archives/win2016/entries/dynamic-epistemic/}},
  url          = {https://plato.stanford.edu/archives/win2016/entries/dynamic-epistemic/},
  issn         = {1095-5054},
  note         = {Entry first published 2016-06-24; last modified 2016-12-02. SEP recommends citing the archived URL for a stable scholarly reference.}
}

@article{bolander2020del,
  author    = {Bolander, Thomas and Charrier, Tristan and Pinchinat, Sophie and Schwarzentruber, Fran{\c{c}}ois},
  title     = {{DEL}-Based Epistemic Planning: Decidability and Complexity},
  journal   = {Artificial Intelligence},
  volume    = {287},
  pages     = {103304},
  year      = {2020},
  month     = oct,
  publisher = {Elsevier},
  doi       = {10.1016/j.artint.2020.103304},
  url       = {https://doi.org/10.1016/j.artint.2020.103304}
}

@book{bordini2007programming,
  author    = {Bordini, Rafael H. and H{\"u}bner, Jomi Fred and Wooldridge, Michael},
  title     = {Programming Multi-Agent Systems in {AgentSpeak} Using {Jason}},
  series    = {Wiley Series in Agent Technology},
  year      = {2007},
  publisher = {John Wiley \& Sons, Ltd},
  address   = {Chichester, UK},
  isbn      = {9780470029008},
  doi       = {10.1002/9780470061848},
  url       = {https://doi.org/10.1002/9780470061848},
  numpages  = {304},
  note      = {Online ISBN: 9780470061848; ISBN-10: 0470029005}
}

@inproceedings{bridgeland1990distributed,
  author    = {Bridgeland, David Murray and Huhns, Michael N.},
  title     = {Distributed Truth Maintenance},
  booktitle = {Proceedings of the Eighth National Conference on Artificial Intelligence (AAAI-90), Volume 1},
  year      = {1990},
  pages     = {72--77},
  address   = {Menlo Park, CA},
  publisher = {AAAI Press / The MIT Press},
  doi       = {10.5555/1865499.1865510},
  url       = {https://cdn.aaai.org/AAAI/1990/AAAI90-011.pdf},
  note      = {Boston, Massachusetts, USA, July 29--August 3, 1990}
}

@article{dastani2008apl,
  author  = {Dastani, Mehdi},
  title   = {{2APL}: a practical agent programming language},
  journal = {Autonomous Agents and Multi-Agent Systems},
  year    = {2008},
  volume  = {16},
  number  = {3},
  pages   = {214--248},
  month   = jun,
  doi     = {10.1007/s10458-008-9036-y},
  url     = {https://doi.org/10.1007/s10458-008-9036-y},
  note    = {Published online: 16 March 2008; Issue date: June 2008}
}

@article{de1986assumption,
  author    = {de Kleer, Johan},
  title     = {An Assumption-Based {TMS}},
  journal   = {Artificial Intelligence},
  year      = {1986},
  volume    = {28},
  number    = {2},
  pages     = {127--162},
  month     = mar,
  publisher = {Elsevier},
  doi       = {10.1016/0004-3702(86)90080-9},
  url       = {https://doi.org/10.1016/0004-3702(86)90080-9}
}

@article{dechter1996structure,
  author    = {Dechter, Rina and Dechter, Avi},
  title     = {Structure-Driven Algorithms for Truth Maintenance},
  journal   = {Artificial Intelligence},
  year      = {1996},
  volume    = {82},
  number    = {1-2},
  pages     = {1--20},
  month     = apr,
  publisher = {Elsevier},
  doi       = {10.1016/0004-3702(94)00096-4},
  url       = {https://doi.org/10.1016/0004-3702(94)00096-4}
}

@article{dorri2018multi,
  author    = {Dorri, Ali and Kanhere, Salil S. and Jurdak, Raja},
  title     = {Multi-Agent Systems: A Survey},
  journal   = {IEEE Access},
  year      = {2018},
  volume    = {6},
  pages     = {28573--28593},
  doi       = {10.1109/ACCESS.2018.2831228},
  url       = {https://ieeexplore.ieee.org/document/8352646/}
}

@inproceedings{dosilovic2018explainable,
  author       = {Do{\v{s}}ilovi{\'c}, Filip Karlo and Br{\v{c}}i{\'c}, Mario and Hlupi{\'c}, Nikica},
  title        = {Explainable Artificial Intelligence: A Survey},
  booktitle    = {2018 41st International Convention on Information and Communication Technology, Electronics and Microelectronics (MIPRO)},
  year         = {2018},
  month        = may,
  pages        = {210--215},
  address      = {Opatija, Croatia},
  publisher    = {IEEE},
  doi          = {10.23919/MIPRO.2018.8400040},
  url          = {https://ieeexplore.ieee.org/document/8400040/},
  isbn         = {978-1-5386-3777-7},
  note         = {Proceedings catalog no. CFP1839K-POD; conference held 21--25 May 2018.}
}

@article{doyle1979truth,
  author    = {Doyle, Jon},
  title     = {A Truth Maintenance System},
  journal   = {Artificial Intelligence},
  year      = {1979},
  volume    = {12},
  number    = {3},
  pages     = {231--272},
  month     = nov,
  publisher = {Elsevier},
  doi       = {10.1016/0004-3702(79)90008-0},
  url       = {https://doi.org/10.1016/0004-3702(79)90008-0}
}

@book{ganter1999formal,
  author    = {Ganter, Bernhard and Wille, Rudolf},
  title     = {Formal Concept Analysis: Mathematical Foundations},
  year      = {1999},
  publisher = {Springer},
  address   = {Berlin, Heidelberg},
  edition   = {1},
  pages     = {X, 284},
  isbn      = {978-3-540-62771-5},
  doi       = {10.1007/978-3-642-59830-2},
  url       = {https://doi.org/10.1007/978-3-642-59830-2},
  note      = {Online ISBN: 978-3-642-59830-2}
}

@article{kuznetsov2013knowledge,
  author    = {Kuznetsov, Sergei O. and Poelmans, Jonas},
  title     = {Knowledge Representation and Processing with Formal Concept Analysis},
  journal   = {Wiley Interdisciplinary Reviews: Data Mining and Knowledge Discovery},
  year      = {2013},
  volume    = {3},
  number    = {3},
  pages     = {200--215},
  month     = may,
  publisher = {Wiley},
  doi       = {10.1002/widm.1088},
  url       = {https://doi.org/10.1002/widm.1088},
  note      = {Issue date: May/June 2013}
}

@article{labash2020perspective,
  author  = {Labash, Aqeel and Aru, Jaan and Matiisen, Tambet and Tampuu, Ardi and Vicente, Raul},
  title   = {Perspective Taking in Deep Reinforcement Learning Agents},
  journal = {Frontiers in Computational Neuroscience},
  year    = {2020},
  month   = jul,
  day     = {23},
  volume  = {14},
  pages   = {69},
  eid     = {69},
  doi     = {10.3389/fncom.2020.00069},
  url     = {https://doi.org/10.3389/fncom.2020.00069}
}

@article{marra2024from,
  author    = {Marra, Giuseppe and Duman{\v{c}}i{\'c}, Sebastijan and Manhaeve, Robin and De Raedt, Luc},
  title     = {From statistical relational to neurosymbolic artificial intelligence: A survey},
  journal   = {Artificial Intelligence},
  volume    = {328},
  pages     = {104062},
  year      = {2024},
  month     = mar,
  publisher = {Elsevier},
  doi       = {10.1016/j.artint.2023.104062},
  url       = {https://doi.org/10.1016/j.artint.2023.104062},
  note      = {Article number 104062.}
}

@inproceedings{nunes2011bdi4jade,
  author    = {Nunes, Ingrid and {de Lucena}, Carlos J. P. and Luck, Michael},
  title     = {{BDI4JADE}: A {BDI} Layer on Top of {JADE}},
  booktitle = {Proceedings of the Ninth International Workshop on Programming Multi-Agent Systems (ProMAS 2011)},
  year      = {2011},
  address   = {Taipei, Taiwan},
  pages     = {88--103},
  note      = {Workshop paper},
  url       = {https://github.com/ingridnunes/bdi4jade}
}

@inproceedings{rao1991modeling,
  author    = {Rao, Anand S. and Georgeff, Michael P.},
  title     = {Modeling Rational Agents within a {BDI}-Architecture},
  booktitle = {Proceedings of the 2nd International Conference on Principles of Knowledge Representation and Reasoning (KR'91)},
  editor    = {Allen, James F. and Fikes, Richard and Sandewall, Erik},
  year      = {1991},
  month     = apr,
  pages     = {473--484},
  publisher = {Morgan Kaufmann},
  address   = {San Mateo, CA},
  isbn      = {1-55860-165-1},
  doi       = {10.5555/3087158.3087205},
  url       = {https://dl.acm.org/doi/10.5555/3087158.3087205},
  note      = {Conference held in Cambridge, MA, USA, April 22--25, 1991.}
}

@article{sardina2011bdi,
  author    = {Sardina, Sebastian and Padgham, Lin},
  title     = {A {BDI} Agent Programming Language with Failure Handling, Declarative Goals, and Planning},
  journal   = {Autonomous Agents and Multi-Agent Systems},
  year      = {2011},
  volume    = {23},
  number    = {1},
  pages     = {18--70},
  month     = jul,
  doi       = {10.1007/s10458-010-9130-9},
  url       = {https://doi.org/10.1007/s10458-010-9130-9},
  note      = {Published online: 27 April 2010; issue date: July 2011.}
}

@book{vanDitmarsch2007,
  author    = {van Ditmarsch, Hans and van der Hoek, Wiebe and Kooi, Barteld},
  title     = {Dynamic Epistemic Logic},
  series    = {Synthese Library},
  volume    = {337},
  year      = {2007},
  publisher = {Springer},
  address   = {Dordrecht, The Netherlands},
  doi       = {10.1007/978-1-4020-5839-4},
  url       = {https://doi.org/10.1007/978-1-4020-5839-4},
  isbn      = {978-1-4020-5838-7},
  pages     = {XI, 296},
  edition   = {1},
  note      = {eBook ISBN: 978-1-4020-5839-4; Softcover ISBN: 978-1-4020-6908-6; Springer lists copyright year 2008.}
}

@incollection{wille1982restructuring,
  author    = {Wille, Rudolf},
  title     = {Restructuring Lattice Theory: An Approach Based on Hierarchies of Concepts},
  booktitle = {Ordered Sets: Proceedings of the NATO Advanced Study Institute held at Banff, Canada, August 28 to September 12, 1981},
  editor    = {Rival, Ivan},
  series    = {NATO Advanced Study Institutes Series (ASIC)},
  volume    = {83},
  pages     = {445--470},
  year      = {1982},
  publisher = {Springer},
  address   = {Dordrecht},
  doi       = {10.1007/978-94-009-7798-3_15},
  url       = {https://doi.org/10.1007/978-94-009-7798-3_15},
  isbn      = {978-94-009-7798-3},
  note      = {Book DOI: 10.1007/978-94-009-7798-3; copyright holder listed as D. Reidel Publishing Company (1982).}
}

@article{gorgan2024computational,
  author    = {Gorgan Mohammadi, Ashena and Ganjtabesh, Mohammad},
  title     = {On Computational Models of Theory of Mind and the Imitative Reinforcement Learning in Spiking Neural Networks},
  journal   = {Scientific Reports},
  year      = {2024},
  volume    = {14},
  number    = {1},
  pages     = {1945},
  month     = jan,
  day       = {23},
  doi       = {10.1038/s41598-024-52299-7},
  url       = {https://doi.org/10.1038/s41598-024-52299-7},
  note      = {Article number: 1945.}
}

@incollection{van2015introduction,
  author    = {van Ditmarsch, Hans and Kooi, Barteld},
  title     = {Semantic Results for Ontic and Epistemic Change},
  booktitle = {Logic and the Foundations of Game and Decision Theory (LOFT 7)},
  editor    = {Bonanno, Giacomo and van der Hoek, Wiebe and Wooldridge, Michael},
  series    = {Texts in Logic and Games},
  volume    = {3},
  pages     = {87--117},
  year      = {2008},
  publisher = {Amsterdam University Press},
  address   = {Amsterdam, The Netherlands},
  doi       = {10.5117/9789089640260-3},
  url       = {https://doi.org/10.5117/9789089640260-3},
  note      = {Book DOI: 10.5117/9789089640260. The open-access reissue is hosted by Routledge/Taylor \& Francis.}
}

@misc{laird2022analysis,
  author        = {Laird, John E.},
  title         = {An Analysis and Comparison of {ACT-R} and Soar},
  year          = {2022},
  month         = jan,
  day           = {23},
  eprint        = {2201.09305},
  archivePrefix = {arXiv},
  primaryClass  = {cs.AI},
  doi           = {10.48550/arXiv.2201.09305},
  url           = {https://doi.org/10.48550/arXiv.2201.09305},
  note          = {arXiv:2201.09305. Report number: ACS2021/06. Comments: 18 pages, 1 figure; presented at the Ninth Advances in Cognitive Systems (ACS) Conference 2021.}
}

@misc{wan2024towards,
  author        = {Wan, Zishen and Liu, Che-Kai and Yang, Hanchen and Raj, Ritik and Li, Chaojian and You, Haoran and Fu, Yonggan and Wan, Cheng and Li, Sixu and Kim, Youbin and Samajdar, Ananda and Lin, Yingyan Celine and Ibrahim, Mohamed and Rabaey, Jan M. and Krishna, Tushar and Raychowdhury, Arijit},
  title         = {Towards Efficient Neuro-Symbolic {AI}: From Workload Characterization to Hardware Architecture},
  year          = {2024},
  month         = sep,
  eprint        = {2409.13153},
  archivePrefix = {arXiv},
  primaryClass  = {cs.AR},
  doi           = {10.48550/arXiv.2409.13153},
  url           = {https://doi.org/10.48550/arXiv.2409.13153},
  note          = {Submitted 20 Sep 2024; last revised 23 Sep 2024 (v2).}
}

@book{van2007dynamic,
  author    = {van Ditmarsch, Hans and van der Hoek, Wiebe and Kooi, Barteld},
  title     = {Dynamic Epistemic Logic},
  series    = {Synthese Library},
  volume    = {337},
  year      = {2007},
  publisher = {Springer},
  address   = {Dordrecht, The Netherlands},
  pages     = {XI, 296},
  doi       = {10.1007/978-1-4020-5839-4},
  url       = {https://doi.org/10.1007/978-1-4020-5839-4},
  isbn      = {978-1-4020-5838-7},
  note      = {eBook ISBN: 978-1-4020-5839-4. Softcover ISBN: 978-1-4020-6908-6. Springer lists copyright year 2008.}
}

@article{huhns1991multiagent,
  author    = {Huhns, Michael N. and Bridgeland, David Murray},
  title     = {Multiagent Truth Maintenance},
  journal   = {IEEE Transactions on Systems, Man, and Cybernetics},
  year      = {1991},
  volume    = {21},
  number    = {6},
  pages     = {1437--1445},
  month     = nov,
  doi       = {10.1109/21.135687},
  url       = {https://doi.org/10.1109/21.135687},
  issn      = {0018-9472},
  note      = {Issue date: November/December 1991.}
}

@article{tjoa2021survey,
  author    = {Tjoa, Erico and Guan, Cuntai},
  title     = {A Survey on Explainable Artificial Intelligence ({XAI}): Toward Medical {XAI}},
  journal   = {IEEE Transactions on Neural Networks and Learning Systems},
  year      = {2021},
  volume    = {32},
  number    = {11},
  pages     = {4793--4813},
  month     = nov,
  doi       = {10.1109/TNNLS.2020.3027314},
  url       = {https://doi.org/10.1109/TNNLS.2020.3027314}
}

@article{alshomary2021toward,
  author    = {Alshomary, Milad and Wachsmuth, Henning},
  title     = {Toward audience-aware argument generation},
  journal   = {Patterns},
  year      = {2021},
  volume    = {2},
  number    = {6},
  pages     = {100253},
  month     = jun,
  day       = {11},
  publisher = {Cell Press},
  doi       = {10.1016/j.patter.2021.100253},
  url       = {https://doi.org/10.1016/j.patter.2021.100253},
  note      = {Article number: 100253. PMID: 34179841; PMCID: PMC8212139.}
}

@article{vasileiou2025monolithic,
  author  = {Vasileiou, Stylianos Loukas and Yeoh, William and Previti, Alessandro and Son, Tran Cao},
  title   = {On Generating Monolithic and Model Reconciling Explanations in Probabilistic Scenarios},
  journal = {Journal of Artificial Intelligence Research},
  volume  = {84},
  year    = {2025},
  doi     = {10.1613/jair.1.18820},
  url     = {https://doi.org/10.1613/jair.1.18820},
  issn    = {1076-9757}
}

\appendix
\section{Complete Proofs for Section~3}

This section provides full, detailed proofs for every lemma and theorem that is only sketched in the main paper.
For readability, we \emph{restate} each result using the numbering from the main paper.

\subsection{Lemma 3.4 (Product completeness)}

\begin{lemma}[Product completeness (Lemma~3.4)]
\label{lem:product-complete-sm}
Let $\langle O,\preceq_O\rangle$ and $\langle \Sigma,\preceq_\Sigma\rangle$ be \emph{finite complete lattices}.
Define $E = O \times \Sigma$ with the component-wise order
\[
(o_1,\sigma_1) \preceq (o_2,\sigma_2) 
\quad\Longleftrightarrow\quad
o_1 \preceq_O o_2 \ \text{ and }\ \sigma_1 \preceq_\Sigma \sigma_2.
\]
Then $\langle E,\preceq\rangle$ is a finite complete lattice. Moreover, for any $S \subseteq E$,
\[
\bigvee S = \left(\bigvee_{(o,\sigma)\in S} o,\ \bigvee_{(o,\sigma)\in S} \sigma\right),
\qquad
\bigwedge S = \left(\bigwedge_{(o,\sigma)\in S} o,\ \bigwedge_{(o,\sigma)\in S} \sigma\right).
\]
\end{lemma}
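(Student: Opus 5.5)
The plan is to verify the lattice axioms directly for the component-wise order, and to show that the displayed pairs really are the least upper bound and greatest lower bound of any subset. Nothing beyond the definitions is needed.

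\textbf{Step 1: partial order and finiteness.} First, $\preceq$ is a partial order on $E$. Reflexivity, antisymmetry and transitivity each hold coordinatewise, because $\preceq_O$ and $\preceq_\Sigma$ have these properties. Antisymmetry, for instance, gives $o_1=o_2$ and $\sigma_1=\sigma_2$, hence equal pairs. Finiteness is immediate, since $|E| = |O|\,|\Sigma|$.

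\textbf{Step 2: the join formula.} Fix an arbitrary $S \subseteq E$, including $S=\emptyset$. Let
\[
S_O = \{o : (o,\sigma)\in S\}, \qquad S_\Sigma = \{\sigma : (o,\sigma)\in S\}
\]
be its two projections. Completeness of $O$ and $\Sigma$ supplies $\hat o = \bigvee S_O$ and $\hat\sigma = \bigvee S_\Sigma$.

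\emph{Upper bound.} For every $(o,\sigma)\in S$ we have $o \preceq_O \hat o$ and $\sigma \preceq_\Sigma \hat\sigma$. Hence $(o,\sigma)\preceq(\hat o,\hat\sigma)$.

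\emph{Least.} Let $(u,v)$ be any upper bound of $S$ in $E$. Then $u$ is an upper bound of $S_O$: every element of $S_O$ is the first coordinate of some pair in $S$, and that pair lies below $(u,v)$. Similarly, $v$ is an upper bound of $S_\Sigma$. By leastness in each factor, $\hat o \preceq_O u$ and $\hat\sigma \preceq_\Sigma v$, so $(\hat o,\hat\sigma)\preceq(u,v)$.

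Thus $\bigvee S$ exists and equals the displayed pair. Uniqueness follows from antisymmetry.

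\textbf{Step 3: the meet formula, and conclusion.} The meet formula is proved by the dual argument: swap $\bigvee$ with $\bigwedge$ and reverse every inequality. Since every subset has both a join and a meet, $\langle E,\preceq\rangle$ is a complete lattice. In particular, $E$ has bottom element $(\bot_O,\bot_\Sigma)$ and top element $(\top_O,\top_\Sigma)$, obtained from $S=\emptyset$.

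The only point needing care, and the nearest thing to an obstacle, is the empty set and projections in general. When $S=\emptyset$ both projections are empty, and the formula gives $\bigvee\emptyset = (\bot_O,\bot_\Sigma)$. The argument in Step 2 still applies: every element of $E$ is vacuously an upper bound of $\emptyset$, and each of its coordinates is vacuously an upper bound of the corresponding empty projection. So the empty case requires no separate treatment, provided the projections are defined as sets and the leastness step rests only on the upper-bound property of $u$ and $v$.
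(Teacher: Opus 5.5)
Your proof is correct and follows essentially the same route as the paper: project $S$ onto its two coordinates, take the joins in each factor, check the upper-bound and leastness properties componentwise, and get meets by duality. Your explicit check of the partial-order axioms and of the empty-set case is a small addition (the paper treats these only in its proof of Theorem~3.5), but it does not change the argument.
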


\begin{proof}
\textbf{Finiteness.}
Since both $O$ and $\Sigma$ are finite, their Cartesian product $E=O\times\Sigma$ is finite.

\textbf{Joins.}
Let $S\subseteq E$ be arbitrary.
Define the projections
\[
S_O := \{\, o\in O : \exists \sigma\in \Sigma \text{ with } (o,\sigma)\in S \,\},
\]\[
S_\Sigma := \{\, \sigma\in \Sigma : \exists o\in O \text{ with } (o,\sigma)\in S \,\}.
\]
Because $\langle O,\preceq_O\rangle$ and $\langle \Sigma,\preceq_\Sigma\rangle$ are complete lattices,
the joins $\bigvee S_O$ and $\bigvee S_\Sigma$ exist (and are unique).
Let
\[
(o^\star,\sigma^\star) := \left(\bigvee S_O,\ \bigvee S_\Sigma\right)\in E .
\]

We show that $(o^\star,\sigma^\star)$ is the join $\bigvee S$ in $\langle E,\preceq\rangle$.

\emph{(i) Upper bound.}
Take any $(o,\sigma)\in S$.
Then $o\in S_O$ and $\sigma\in S_\Sigma$ by construction, hence
$o \preceq_O \bigvee S_O = o^\star$ and $\sigma \preceq_\Sigma \bigvee S_\Sigma = \sigma^\star$.
By definition of the product order, $(o,\sigma)\preceq (o^\star,\sigma^\star)$.
Thus $(o^\star,\sigma^\star)$ is an upper bound of $S$.

\emph{(ii) Least upper bound.}
Let $(\bar o,\bar\sigma)\in E$ be \emph{any} upper bound of $S$.
Then for every $(o,\sigma)\in S$ we have $(o,\sigma)\preceq(\bar o,\bar\sigma)$, i.e.,
$o\preceq_O \bar o$ and $\sigma\preceq_\Sigma \bar\sigma$.
Therefore $\bar o$ is an upper bound of $S_O$ in $O$ and $\bar\sigma$ is an upper bound of $S_\Sigma$ in $\Sigma$.
By minimality of joins in $O$ and $\Sigma$,
\[
o^\star=\bigvee S_O \preceq_O \bar o,
\qquad
\sigma^\star=\bigvee S_\Sigma \preceq_\Sigma \bar\sigma.
\]
Hence $(o^\star,\sigma^\star)\preceq(\bar o,\bar\sigma)$ in the product order.
So $(o^\star,\sigma^\star)$ is the \emph{least} upper bound of $S$.

Combining (i) and (ii), $(o^\star,\sigma^\star)=\bigvee S$.

\textbf{Meets.}
The meet proof is dual.
Since $O$ and $\Sigma$ are complete lattices, the meets $\bigwedge S_O$ and $\bigwedge S_\Sigma$ exist.
Let $(o_\star,\sigma_\star):=(\bigwedge S_O,\bigwedge S_\Sigma)$.
Using the same argument as above with all order relations reversed, one shows that
$(o_\star,\sigma_\star)$ is the greatest lower bound of $S$ in $\langle E,\preceq\rangle$,
hence equals $\bigwedge S$.

Therefore $\langle E,\preceq\rangle$ is a (finite) complete lattice with joins and meets computed component-wise.
\end{proof}

\subsection{Theorem 3.5 (OSL completeness)}

\begin{theorem}[OSL completeness (Theorem~3.5)]
\label{thm:osl-complete-sm}
Assume $\langle O,\preceq_O\rangle$ and $\langle \Sigma,\preceq_\Sigma\rangle$ are finite complete lattices and let
$E=O\times\Sigma$ with the order from Lemma~\ref{lem:product-complete-sm}.
Then $\langle E,\preceq\rangle$ is a finite complete lattice.
Moreover, for any $S\subseteq E$, the join $\bigvee S$ and meet $\bigwedge S$ can be computed in
$O(|S|)$ time given constant-time join/meet primitives in $O$ and $\Sigma$; in particular, since $|S|\le |E|=|O|\,|\Sigma|$,
this is $O(|O|\,|\Sigma|)$ in the worst case.
\end{theorem}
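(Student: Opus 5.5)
The first assertion, that $\langle E,\preceq\rangle$ is a finite complete lattice, follows immediately from Lemma~\ref{lem:product-complete-sm}. That lemma also gives the explicit formulas $\bigvee S=(\bigvee S_O,\bigvee S_\Sigma)$ and $\bigwedge S=(\bigwedge S_O,\bigwedge S_\Sigma)$, where $S_O$ and $S_\Sigma$ are the two projections of $S$. The only new content is the running-time bound, so the argument reduces to computing these two projected joins (and meets) efficiently.

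For the algorithm I would use a single linear scan.
\begin{enumerate}
\item Initialize accumulators $o_{\mathrm{acc}}:=\bot_O$ and $\sigma_{\mathrm{acc}}:=\bot_\Sigma$. These bottoms exist because finite complete lattices have least elements, namely $\bigvee\emptyset$.
\item For each $(o,\sigma)\in S$, update $o_{\mathrm{acc}}\leftarrow o_{\mathrm{acc}}\vee_O o$ and $\sigma_{\mathrm{acc}}\leftarrow \sigma_{\mathrm{acc}}\vee_\Sigma \sigma$.
\end{enumerate}
Each iteration makes two calls to the assumed constant-time binary join primitives, so the scan costs $O(|S|)$ in total. The meet computation is dual, starting from $\top_O$ and $\top_\Sigma$.

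The correctness step is to show by induction on the number of processed elements $S_k$ that, after $k$ steps, $o_{\mathrm{acc}}=\bigvee\{o:(o,\sigma)\in S_k\}$. The induction uses associativity and commutativity of binary joins, together with the identity $\bigvee(A\cup\{a\})=(\bigvee A)\vee a$, which holds in any lattice. Repeated first coordinates are harmless because joins are idempotent, so the accumulated value equals $\bigvee S_O$ even though $S_O$ is the set of projections without multiplicity. The same argument applies to $\sigma_{\mathrm{acc}}$. At termination the result is $\bigvee S$ by Lemma~\ref{lem:product-complete-sm}. The edge case $S=\emptyset$ correctly returns $(\bot_O,\bot_\Sigma)$, which is the bottom of $E$.

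The worst-case bound then follows from $|S|\le|E|=|O|\,|\Sigma|$.

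The main obstacle is not mathematical but a matter of pinning down the cost model. The bound relies on the hypothesis of constant-time join and meet primitives in $O$ and $\Sigma$, for instance precomputed join/meet tables, and the proof should say so explicitly. Without that hypothesis, each binary join would cost some $T_\vee$, and the bound would become $O(|S|\,T_\vee)$.
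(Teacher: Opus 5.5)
Your proposal is correct and follows essentially the same route as the paper. Both establish completeness through the component-wise join and meet (the paper re-derives this inline rather than citing Lemma~\ref{lem:product-complete-sm}), then compute $\bigvee S$ and $\bigwedge S$ with a single fold of the constant-time binary primitives starting from $(\bot_O,\bot_\Sigma)$ or $(\top_O,\top_\Sigma)$, giving $O(|S|)$ time and hence $O(|O|\,|\Sigma|)$ in the worst case; your induction showing that the fold returns $\bigvee S_O$, and your explicit statement of the cost model, are small additions the paper leaves implicit.
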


\begin{proof}
Since $O$ and $\Sigma$ are finite sets, their Cartesian product $E = O \times \Sigma$ is finite as well.
The relation $\preceq$ defined by
\[
(o_1,\sigma_1) \preceq (o_2,\sigma_2)\ \Longleftrightarrow\ (o_1 \preceq_O o_2)\ \wedge\ (\sigma_1 \preceq_\Sigma \sigma_2)
\]
is a partial order on $E$ (reflexive, antisymmetric, and transitive) because it is defined component-wise from the partial orders
$\preceq_O$ and $\preceq_\Sigma$.

Fix an arbitrary subset $S \subseteq E$.
Because $O$ and $\Sigma$ are complete lattices, they admit joins and meets of \emph{all} subsets, including the empty set.
We define the join and meet of $S$ in $E$ by component-wise joins and meets.

\noindent\textbf{Join.}
If $S=\varnothing$, define
\[
\bigvee S := (\bot_O,\bot_\Sigma),
\]
where $\bot_O$ and $\bot_\Sigma$ are the bottom elements of $O$ and $\Sigma$ (the joins of the empty set in complete lattices).
Now assume $S \neq \varnothing$ and define
\[
o^\star := \bigvee_{(o,\sigma)\in S} o \quad\text{in } O,
\qquad
\sigma^\star := \bigvee_{(o,\sigma)\in S} \sigma \quad\text{in } \Sigma,
\]
which exist by completeness of $O$ and $\Sigma$.
Let $e^\star := (o^\star,\sigma^\star)\in E$.

We show $e^\star$ is the least upper bound of $S$ in $E$.
For any $(o,\sigma)\in S$, by definition of $o^\star$ and $\sigma^\star$ we have $o \preceq_O o^\star$ and
$\sigma \preceq_\Sigma \sigma^\star$, hence $(o,\sigma)\preceq (o^\star,\sigma^\star)=e^\star$.
Thus $e^\star$ is an upper bound of $S$.

Now let $(\hat o,\hat\sigma)\in E$ be any upper bound of $S$.
Then for all $(o,\sigma)\in S$ we have $o \preceq_O \hat o$ and $\sigma \preceq_\Sigma \hat\sigma$.
By the defining property of joins in $O$ and $\Sigma$, it follows that
$o^\star \preceq_O \hat o$ and $\sigma^\star \preceq_\Sigma \hat\sigma$.
Therefore $e^\star=(o^\star,\sigma^\star)\preceq(\hat o,\hat\sigma)$.
So $e^\star$ is the least upper bound of $S$, i.e., $e^\star=\bigvee S$.

\noindent\textbf{Meet.}
If $S=\varnothing$, define
\[
\bigwedge S := (\top_O,\top_\Sigma),
\]
where $\top_O$ and $\top_\Sigma$ are the top elements of $O$ and $\Sigma$ (the meets of the empty set).
If $S\neq \varnothing$, define
\[
o_\star := \bigwedge_{(o,\sigma)\in S} o \quad\text{in } O,
\qquad
\sigma_\star := \bigwedge_{(o,\sigma)\in S} \sigma \quad\text{in } \Sigma,
\]
and set $e_\star:=(o_\star,\sigma_\star)$.
A symmetric argument shows $e_\star$ is the greatest lower bound of $S$, hence $e_\star=\bigwedge S$.

Since every subset $S\subseteq E$ has both a join and a meet under $\preceq$, $\langle E,\preceq\rangle$ is a complete lattice.
Because $E$ is finite, it is a \emph{finite} complete lattice.

\noindent\textbf{Computation time.}
Assume we have constant-time \emph{binary} join/meet primitives $\vee_O,\wedge_O$ on $O$ and $\vee_\Sigma,\wedge_\Sigma$ on $\Sigma$,
and constant-time access to $\bot_O,\top_O,\bot_\Sigma,\top_\Sigma$.
To compute $\bigvee S$, we can fold the joins in one pass:
initialize $o^\star := \bot_O$ and $\sigma^\star := \bot_\Sigma$, and for each $(o,\sigma)\in S$ update
\[
o^\star := o^\star \vee_O o,\qquad \sigma^\star := \sigma^\star \vee_\Sigma \sigma.
\]
This uses $|S|$ constant-time updates per component, hence $O(|S|)$ total time.
The meet is computed analogously by folding $\wedge_O$ and $\wedge_\Sigma$ starting from $\top_O$ and $\top_\Sigma$.
Finally, since $|S|\le |E|=|O|\,|\Sigma|$, the worst-case time is $O(|O|\,|\Sigma|)$.
\end{proof}

\subsection{Lemma 3.8 (Monotonicity of credibility)}

\begin{lemma}[Monotonicity of Credibility (Lemma~3.8)]
\label{lem:cred-monotone-sm}
For any formula $\varphi$, belief base $B$, and lattice elements $e_1, e_2 \in E$ with $e_1 \preceq e_2$, we have
\[
\mathrm{cred}(\varphi, e_1, B) \leq \mathrm{cred}(\varphi, e_2, B),
\]
where $\mathrm{cred}$ is defined by the upward-closure semantics in the main paper (Definition~3.7).
\end{lemma}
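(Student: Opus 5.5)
The plan is to compare the two index sets over which the maximum in Definition~\ref{def:truth_eval} is taken. For a fixed formula $\varphi$, belief base $B$ and lattice element $e$, we set
\[
W(e) := \{\, w : \langle \varphi, e', w\rangle \in B \text{ and } e' \preceq e \,\},
\]
so that $\mathrm{cred}(\varphi,e,B) = \max W(e)$ when $W(e)\neq\varnothing$, and $\mathrm{cred}(\varphi,e,B)=0$ otherwise. Since $B$ is finite, each $W(e)$ is a finite subset of $[0,1]$. Hence the maximum, when taken, is attained and lies in $[0,1]$.

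The key step is to show the inclusion $W(e_1)\subseteq W(e_2)$ whenever $e_1\preceq e_2$. Take $w\in W(e_1)$, witnessed by a record $\langle\varphi,e',w\rangle\in B$ with $e'\preceq e_1$. The relation $\preceq$ is a partial order by Definition~\ref{def:osl_order}; transitivity can be checked componentwise from $\preceq_O$ and $\preceq_\Sigma$, as noted in the proof of Theorem~\ref{thm:osl-complete-sm}. Transitivity then gives $e'\preceq e_2$. So the same record witnesses $w\in W(e_2)$.

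We then finish by cases.
\begin{itemize}
\item If $W(e_1)=\varnothing$, then $\mathrm{cred}(\varphi,e_1,B)=0$. Also $\mathrm{cred}(\varphi,e_2,B)\ge 0$, because it is either $0$ or a maximum of weights in $[0,1]$.
\item If $W(e_1)\neq\varnothing$, the inclusion makes $W(e_2)$ nonempty too. Maxima of finite nonempty sets are monotone under inclusion, so $\max W(e_1)\le\max W(e_2)$.
\end{itemize}

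No step is a genuine obstacle. The only points needing care are the empty-set convention, where the default value $0$ must be checked to be a lower bound for every credibility value, and the appeal to transitivity of the product order rather than of the component orders alone.
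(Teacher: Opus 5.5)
Your proof is correct and matches the paper's argument: both define the support-weight sets, derive $W(e_1)\subseteq W(e_2)$ from transitivity of the product order, and conclude by monotonicity of $\max$ under inclusion with the convention $\max\varnothing = 0$. Your explicit check that $0$ is a lower bound for every credibility value (because weights lie in $[0,1]$) treats the empty-set case more carefully than the paper's one-line remark.
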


\begin{proof}
Fix $\varphi$, $B$, and $e_1 \preceq e_2$.
Define the support-weight sets
\begin{align*}
W_1 &:= \{\, w \mid \langle \varphi, e', w \rangle \in B \ \text{and}\ e' \preceq e_1 \,\},\\
W_2 &:= \{\, w \mid \langle \varphi, e', w \rangle \in B \ \text{and}\ e' \preceq e_2 \,\}.
\end{align*}
Since $\preceq$ is transitive and $e_1 \preceq e_2$, any $e'$ with $e' \preceq e_1$ also satisfies $e' \preceq e_2$.
Hence $W_1 \subseteq W_2$.

Because $B$ is finite, both $W_1$ and $W_2$ are finite subsets of $[0,1]$.
By Definition~3.7, $\mathrm{cred}(\varphi,e_i,B)=\max W_i$ with the convention $\max \emptyset = 0$.
Since $W_1 \subseteq W_2$, we have $\max W_1 \le \max W_2$ (including the case $W_1=\emptyset$),
and therefore
\[
\mathrm{cred}(\varphi, e_1, B) \le \mathrm{cred}(\varphi, e_2, B).
\]
\end{proof}

\subsection{Theorem 3.9 (RBP correctness)}

\begin{theorem}[RBP Correctness (Theorem~3.9)]
\label{thm:rbp-correctness-sm}
Let $B$ be a belief base with correct cached credibility values $\mathrm{cred}(\cdot,\cdot,B)$.
After inserting a new belief record $b_{\text{new}}=\langle \varphi, e, w\rangle$ into $B$,
Algorithm~1 (RBP) updates exactly the affected lattice elements and returns a belief base whose cached values equal
$\mathrm{cred}(\cdot,\cdot,B\cup\{b_{\text{new}}\})$ according to Definition~3.7 of the main paper.
\end{theorem}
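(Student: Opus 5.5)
We argue by a case analysis on pairs $(\psi,e')$ of formula and lattice element, comparing $\mathrm{cred}(\psi,e',B)$ with $\mathrm{cred}(\psi,e',B')$, where $B' := B\cup\{b_{\text{new}}\}$. The argument rests on a single identity that follows directly from Definition~3.7. Writing $W_{\psi,e'}(B)$ for the support-weight set used in the proof of Lemma~\ref{lem:cred-monotone-sm}, we have
\[
W_{\psi,e'}(B') = W_{\psi,e'}(B)\cup\{w\} \text{ if } \psi=\varphi \text{ and } e\preceq e', \qquad W_{\psi,e'}(B')=W_{\psi,e'}(B) \text{ otherwise.}
\]
With the convention $\max\emptyset=0$, this gives $\mathrm{cred}(\psi,e',B')=\max\{\mathrm{cred}(\psi,e',B),w\}$ in the first case and $\mathrm{cred}(\psi,e',B')=\mathrm{cred}(\psi,e',B)$ in the second. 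If $b_{\text{new}}$ is already in $B$, then $B'=B$ and every statement below is trivial, so we assume $b_{\text{new}}\notin B$; then $B'\setminus\{b_{\text{new}}\}=B$.

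\textbf{Unaffected pairs.} Consider any pair with $\psi\neq\varphi$, or with $e'\notin{\uparrow}e$. By the identity, its credibility is unchanged, and RBP never touches its cache entry. The cached value was correct for $B$ by hypothesis, so it remains correct for $B'$.

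\textbf{Pairs in the upward closure.} For $\psi=\varphi$, line~3 enqueues every $e'\in{\uparrow}e$. The loop dequeues each one exactly once, since the queue is finite and nothing is ever re-enqueued, so the loop terminates. For such an $e'$, line~7 computes $w_{\text{old}}=\mathrm{cred}(\varphi,e',B)$ and line~8 computes $w_{\text{new}}=\mathrm{cred}(\varphi,e',B')=\max\{w_{\text{old}},w\}$, so $w_{\text{new}}\ge w_{\text{old}}$. There are two sub-cases:
\begin{itemize}
\item If $w_{\text{new}}>w_{\text{old}}$, the cache is overwritten with the correct value $w_{\text{new}}$ and $e'$ is added to $A$.
\item Otherwise $w_{\text{new}}=w_{\text{old}}$, and the untouched cache entry already holds the correct value.
\end{itemize}
So every cached value equals $\mathrm{cred}(\cdot,\cdot,B')$ after the algorithm runs.

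\textbf{Exactness of the updated set.} We read ``updates exactly the affected lattice elements'' as $A=\{e'\in\mathcal{E}:\mathrm{cred}(\varphi,e',B')\neq\mathrm{cred}(\varphi,e',B)\}$. Elements outside ${\uparrow}e$ have unchanged credibility by the identity, and they are never placed in $A$. Elements inside ${\uparrow}e$ are placed in $A$ exactly when the strict inequality on line~9 holds, which by the sub-case analysis is exactly when their credibility changed. Because credibility can only increase, ``changed'' and ``strictly increased'' coincide.

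\textbf{Main obstacle.} The difficulty is not mathematical but one of pinning down the statement. Lines~7--8 re-evaluate $\mathrm{cred}$ from scratch, so the algorithm's own operations must be matched to Definition~3.7 explicitly. In particular, the guard at line~9 must be shown to capture all changes; this relies on monotonicity under insertion, which is where Lemma~\ref{lem:cred-monotone-sm}-style reasoning on weight sets enters. We also need the side assumption that $b_{\text{new}}\notin B$, or else a separate treatment of the trivial case $B'=B$ as above.
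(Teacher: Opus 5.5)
Your proof is correct and follows essentially the same route as the paper's: the same weight-set identity, untouched entries for $\psi\neq\varphi$ or $e'\notin{\uparrow}e$, $w_{\text{new}}=\max(w_{\text{old}},w)$ on ${\uparrow}e$, and the strict guard on line~9 capturing exactly the changed entries. One small correction: your aside that the case $b_{\text{new}}\in B$ is trivial is wrong, because line~7 then evaluates $\mathrm{cred}$ on $B\setminus\{b_{\text{new}}\}\neq B$, so elements whose credibility did not change can still be placed in $A$ (cache correctness survives, but exactness of $A$ fails); simply take $b_{\text{new}}\notin B$ as part of the hypothesis of a genuinely new record, as the paper does.
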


\begin{proof}
Let $B_{\mathrm{old}}$ be the belief base before insertion and let
$b_{\text{new}}=\langle \varphi,e,w\rangle$ be a belief record with $b_{\text{new}}\notin B_{\mathrm{old}}$.
Define $B_{\mathrm{new}} := B_{\mathrm{old}}\cup\{b_{\text{new}}\}$.

For any formula $\psi$ and lattice element $x\in E$, define the support-weight sets
\[
W_{\mathrm{old}}(\psi,x) := \{\, w' \mid \langle \psi,e',w'\rangle\in B_{\mathrm{old}}\ \text{and}\ e'\preceq x\,\},
\]
\[
W_{\mathrm{new}}(\psi,x) := \{\, w' \mid \langle \psi,e',w'\rangle\in B_{\mathrm{new}}\ \text{and}\ e'\preceq x\,\}.
\]
By Definition~3.7, $\mathrm{cred}(\psi,x,B)=\max W(\psi,x)$ with the convention $\max\emptyset=0$.

\noindent\textbf{Step 1 (Unchanged formulas).}
If $\psi\neq\varphi$, then $B_{\mathrm{new}}$ adds no record whose formula is $\psi$.
Hence $W_{\mathrm{new}}(\psi,x)=W_{\mathrm{old}}(\psi,x)$ for all $x$, and therefore
$\mathrm{cred}(\psi,x,B_{\mathrm{new}})=\mathrm{cred}(\psi,x,B_{\mathrm{old}})$ for all $x$.

\noindent\textbf{Step 2 (Only nodes in $\uparrow e$ can change for $\varphi$).}
Fix $x\in E$ and consider $\psi=\varphi$.
If $x\notin\uparrow e$ (i.e., $e\not\preceq x$), then the newly inserted record $\langle\varphi,e,w\rangle$ is \emph{not} eligible for
$W_{\mathrm{new}}(\varphi,x)$ (since eligibility requires $e\preceq x$). Because $B_{\mathrm{new}}$ differs from $B_{\mathrm{old}}$ only by
$b_{\text{new}}$, it follows that $W_{\mathrm{new}}(\varphi,x)=W_{\mathrm{old}}(\varphi,x)$ and thus
\[
\mathrm{cred}(\varphi,x,B_{\mathrm{new}})=\mathrm{cred}(\varphi,x,B_{\mathrm{old}}).
\]
Therefore, \emph{no credibility value outside $\uparrow e$ can change}.

\noindent\textbf{Step 3 (Exact new value on $\uparrow e$).}
Now let $x\in\uparrow e$, so $e\preceq x$. Then $b_{\text{new}}$ is eligible in the support set for $(\varphi,x)$, and because it is the only
new record added to $B_{\mathrm{old}}$, we have
\[
W_{\mathrm{new}}(\varphi,x)=W_{\mathrm{old}}(\varphi,x)\cup\{w\}.
\]
Consequently,
\begin{align*}
\mathrm{cred}(\varphi,x,B_{\mathrm{new}})
&= \max\!\bigl(W_{\mathrm{old}}(\varphi,x)\cup\{w\}\bigr) \\
&= \max\!\bigl(\max W_{\mathrm{old}}(\varphi,x),\ w\bigr) \\
&= \max\!\bigl(\mathrm{cred}(\varphi,x,B_{\mathrm{old}}),\ w\bigr).
\end{align*}
In particular, insertion cannot decrease credibility values.

\noindent\textbf{Step 4 (Algorithm~1 updates exactly the changed cache entries).}
Algorithm~1 computes $\uparrow e$ and iterates over exactly those $x\in\uparrow e$.
For each such $x$, line~7 computes
\[
w_{\mathrm{old}} := \mathrm{cred}(\varphi,x,B_{\mathrm{new}}\setminus\{b_{\text{new}}\})
= \mathrm{cred}(\varphi,x,B_{\mathrm{old}}),
\]
where the equality uses $b_{\text{new}}\notin B_{\mathrm{old}}$ and hence $B_{\mathrm{new}}\setminus\{b_{\text{new}}\}=B_{\mathrm{old}}$.
Line~8 computes $w_{\mathrm{new}} := \mathrm{cred}(\varphi,x,B_{\mathrm{new}})$ (Definition~3.7).
By Step~3, $w_{\mathrm{new}}=\max(w_{\mathrm{old}},w)$.

Thus, the conditional update in line~9 updates the cached value for $(\varphi,x)$ if and only if $w_{\mathrm{new}}>w_{\mathrm{old}}$,
i.e., if and only if the credibility at $x$ strictly increases due to the insertion.
All cache entries $(\psi,x)$ with $\psi\neq\varphi$ are untouched and remain correct by Step~1.
All entries $(\varphi,x)$ with $x\notin\uparrow e$ are untouched and remain correct by Step~2.

Therefore, upon termination the cache equals
$\mathrm{cred}(\cdot,\cdot,B_{\mathrm{new}})$ for all formulas and lattice elements.

Finally, the returned set $A$ contains exactly those $x\in\uparrow e$ with $w_{\mathrm{new}}>w_{\mathrm{old}}$, i.e., exactly the lattice
elements whose credibility value changed (increased), matching the intended set of affected elements.
\end{proof}

\begin{theorem}[RBP complexity (Theorem~3.10)]
\label{thm:rbp-complexity-sm}
For an insertion at lattice element $e\in E$, Algorithm~1 (RBP) processes exactly the elements in the upward closure
$\uparrow e = \{e' \in E : e \preceq e'\}$.
Let $T_{\uparrow}(e)$ denote the time needed to enumerate $\uparrow e$.
Assuming $O(1)$ queue operations, $O(1)$ access to cached credibility values, and $O(1)$ time per cache update (and treating external
notification as $O(1)$ or excluding it), the running time is
\[
O\bigl(T_{\uparrow}(e) + |\uparrow e|\bigr).
\]
In particular, if $\uparrow e$ can be enumerated in $O(|\uparrow e|)$ time, the running time is $O(|\uparrow e|)$, and $O(|E|)$ in the worst case.
\end{theorem}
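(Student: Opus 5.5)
The argument is a line-by-line cost accounting of Algorithm~1, with the lattice-specific content isolated in the enumeration of $\uparrow e$. First, the claim that RBP processes exactly $\uparrow e$ follows from the control flow. Line~2 materializes $\uparrow e$. Line~3 sets $Q \leftarrow \uparrow e$. The loop body never enqueues anything, and each iteration dequeues exactly one element. By induction on iterations, the dequeued elements are therefore precisely the members of $\uparrow e$, each exactly once. This requires $\uparrow e$ to be stored as a set without duplicates, which holds by its definition as a subset of $E$. The loop thus runs exactly $|\uparrow e|$ times.

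Second, the cost splits into three parts:
\begin{itemize}
\item the line~1 insertion and the line~4 initialization, each $O(1)$ under the stated model (treating the belief-store insertion as constant, or absorbing it as in Theorem~\ref{thm:arch_complexity});
\item lines~2--3, costing $T_{\uparrow}(e)$ for the enumeration plus $O(|\uparrow e|)$ to load the queue;
\item the loop, at most $|\uparrow e|$ iterations of $O(1)$ work each.
\end{itemize}
For the loop bound, dequeuing is $O(1)$ and the comparison on line~9 is $O(1)$. The cache update, the insertion of $e'$ into $A$ (a hash set or list, $O(1)$ since no element repeats) and the notification on line~12 are also $O(1)$ under the hypotheses.

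The main obstacle is lines~7--8. Read literally, evaluating $\mathrm{cred}$ by Definition~\ref{def:truth_eval} scans the whole support set, which is not $O(1)$. I would handle this with the invariant from Theorem~\ref{thm:rbp-correctness-sm}: the cache holds correct values for $B_{\mathrm{old}}$. Then line~7 is a single cache read, $w_{\mathrm{old}} = \mathrm{cred}(\varphi,e',B_{\mathrm{old}})$. By Step~3 of that proof, line~8 can be computed as $w_{\mathrm{new}} = \max(w_{\mathrm{old}}, w)$ without rescanning $B$. Under the stated assumption of $O(1)$ cache access, both lines are then $O(1)$. I would state explicitly that this is the implementation the bound refers to, and that it yields the same values as the literal definition.

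Summing the three parts gives $O(T_{\uparrow}(e) + |\uparrow e|)$. If $\uparrow e$ is enumerable in $O(|\uparrow e|)$ time, this is $O(|\uparrow e|)$. One way to achieve that is a BFS over the Hasse diagram from $e$: it touches only nodes of $\uparrow e$ and their cover edges, which is linear in $|\uparrow e|$ when out-degrees are bounded. Another is to take the product of precomputed up-sets $\uparrow o \times \uparrow \sigma$, which is valid because $\uparrow(o,\sigma) = \uparrow o \times \uparrow \sigma$ in the product order. Finally, $\uparrow e \subseteq E$ gives $|\uparrow e| \le |E|$, hence the $O(|E|)$ worst case. This bound is attained at $e = (\bot_O,\bot_\Sigma)$, which exists by Lemma~\ref{lem:product-complete-sm}.
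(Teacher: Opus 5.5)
Your proposal is correct and follows essentially the same route as the paper. Like the paper's proof, you argue that the queue is initialized to $\uparrow e$ and never extended, so there are exactly $|\uparrow e|$ iterations. Each iteration costs $O(1)$, because $w_{\mathrm{old}}$ is a cache read and $w_{\mathrm{new}}=\max(w_{\mathrm{old}},w)$ comes from the RBP correctness theorem. Adding $T_{\uparrow}(e)$ and using $|\uparrow e|\le|E|$ then gives the bound. Your extra remarks are valid refinements the paper does not spell out: the two concrete ways of enumerating $\uparrow e$ in $O(|\uparrow e|)$ time, and the observation that the worst case is attained at $(\bot_O,\bot_\Sigma)$.
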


\begin{proof}
Fix an insertion context $e\in E$.
Algorithm~1 first computes the upward closure $\uparrow e$ (line~2) and initializes the queue $Q$ to contain exactly the elements of $\uparrow e$
(line~3). The algorithm never enqueues any additional lattice elements thereafter. Hence:

\noindent\textbf{(i) Exactly the elements in $\uparrow e$ are processed.}
Every dequeued element is in $\uparrow e$ because $Q$ is initialized to $\uparrow e$ and never receives new elements.
Conversely, since $Q$ initially contains all of $\uparrow e$ and each iteration dequeues one element, every element of $\uparrow e$ is eventually
dequeued and processed exactly once (assuming the return statement is executed after the while-loop terminates).

Therefore, the number of loop iterations is exactly $|\uparrow e|$.

\noindent\textbf{(ii) Cost per iteration.}
In each iteration, the algorithm performs:
(a) one dequeue operation, (b) $O(1)$ cache lookup(s) for the old credibility value,
(c) $O(1)$ computation of the new credibility value (e.g., $\max(w_{\mathrm{old}},w)$ for the inserted record, as established in Theorem~3.9),
(d) an $O(1)$ comparison, and (e) at most one $O(1)$ cache update.
Any notification to external modules is either assumed $O(1)$ or treated as outside the algorithmic core cost.

Under the stated assumptions, each iteration costs $O(1)$ time, so the total loop cost is $O(|\uparrow e|)$.

\noindent\textbf{(iii) Total cost including $\uparrow e$ enumeration.}
Let $T_{\uparrow}(e)$ be the time to compute/enumerate the set $\uparrow e$.
Then the full running time is $O(T_{\uparrow}(e) + |\uparrow e|)$.
If $\uparrow e$ is enumerated in $O(|\uparrow e|)$ time (e.g., via a precomputed upper-set index or a product-lattice enumeration),
the total becomes $O(|\uparrow e|)$.
Since $|\uparrow e|\le |E|$, the worst case is $O(|E|)$.
\end{proof}

\subsection{Theorem 3.11 (Enhanced RBP convergence)}

\begin{theorem}[Enhanced RBP Convergence (Theorem~3.11)]
\label{thm:rbp-convergence-sm}
Algorithm~2 (Enhanced RBP) converges to a fixed point equal to the credibility matrix induced by Definition~3.7.
For the update rule in line~6 of Algorithm~2, convergence occurs in at most two iterations (one update sweep and one fixed-point check).
\end{theorem}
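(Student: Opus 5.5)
The key observation is that the right-hand side of line~6 of Algorithm~2 does not depend on the current contents of $C$. I would first introduce notation for the target matrix: for every pair $(e,\varphi)$ let
\[
C^\ast[e,\varphi] := \max\{w : \langle \varphi,e',w\rangle\in B,\ e'\preceq e\},
\]
with the convention $\max\emptyset = 0$. By Definition~3.7 this is exactly $\mathrm{cred}(\varphi,e,B)$. Only finitely many formulas occur in $B$, so for all other $\varphi$ the entry is $0$ and stays equal to its initialization. The sweep can therefore be regarded as ranging over a finite index set, which also makes the maximum in line~7 well defined.

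Next I would analyze the loop iteration by iteration. Since $B$ is not modified inside the loop and line~6 reads only $B$ and the fixed order $\preceq$, any complete sweep of the for-loop leaves $C[e,\varphi]=C^\ast[e,\varphi]$ for every pair. This holds independently of the order in which pairs are visited, because no assignment reads another entry of $C$.

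It follows that after iteration $k=1$ we have $C=C^\ast$. If $\Delta<\epsilon$ already at that point, for instance when $C^\ast$ is identically $0$ or close to the zero initialization, the loop stops with the correct matrix. Otherwise iteration $2$ starts with $C_{\mathrm{old}}=C^\ast$ and again produces $C^\ast$. Hence $\Delta=0<\epsilon$, using $\epsilon>0$, and the loop terminates. This gives at most two iterations, and the returned matrix is $C^\ast$. I would also check that it is a fixed point: applying the sweep to $C^\ast$ returns $C^\ast$. One caveat: if $K=1$, the loop still returns after one sweep with $C=C^\ast$, so the conclusion holds whenever $K\ge 1$.

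The only delicate point is the termination test. I must make sure that the first-iteration exit on a small $\Delta$ does not return a wrong matrix. It cannot, since $C=C^\ast$ already holds at that point. I must also confirm that the stated bound of "two iterations" covers both exit branches, which the case split above does. Everything else is a direct unfolding of Definition~3.7.
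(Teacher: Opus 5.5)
Your proposal is correct and follows essentially the same route as the paper: both name the target matrix $C^\star$, observe that line~6 reads only $B$ and the order $\preceq$ and never the current $C$, conclude that one sweep yields $C^\star$, and note that a second sweep leaves it unchanged, so $\Delta=0$. You also make explicit a few points the paper leaves implicit: a possible early exit with $\Delta<\epsilon$ after the first sweep (which already returns $C^\star$), finiteness of the formulas occurring in $B$ so that the maximum defining $\Delta$ is well defined, and the iteration cap $K$.
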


\begin{proof}
Let $C^\star$ denote the \emph{target} credibility matrix whose entries are
\[
C^\star[e,\psi] := \max\{\, w : \langle \psi,e',w\rangle \in B \text{ and } e'\preceq e \,\}
= \mathrm{cred}(\psi,e,B)
\]
for all $e\in E$ and $\psi\in\mathcal{L}$.

Consider one execution of the nested loop in Algorithm~2 (lines~4--7).
For any fixed pair $(e,\psi)$, line~6 assigns
\[
C[e,\psi] \leftarrow \max\{\, w : \langle \psi,e',w\rangle \in B \text{ and } e'\preceq e \,\} = C^\star[e,\psi].
\]
Crucially, this assignment depends only on $B$ and $(e,\psi)$, not on the previous contents of $C$.
Therefore, after one complete sweep over all $(e,\psi)$ pairs, we obtain $C=C^\star$.

In the next iteration of the outer repeat-loop, the same sweep recomputes the same value $C^\star$ again, so $C$ remains unchanged.
Hence the update is at a fixed point: $C=C^\star$ implies $C$ stays equal to $C^\star$.
Consequently the iteration-to-iteration change $\Delta$ becomes $0$ and the stopping criterion is satisfied.
If the initial matrix is all zeros (line~1), at most two iterations are needed: one to set $C$ to $C^\star$, and one to confirm $\Delta=0$.
\end{proof}

\subsection{Theorem 3.12 (MCC correctness and complexity)}

\begin{theorem}[MCC Correctness and Complexity (Theorem~3.12)]
\label{thm:mcc-sm}
Let $B$ be a belief base and let $\textsc{Contradict}(\cdot,\cdot)$ be a fixed contradiction predicate on formulas.
Define the \emph{contradiction graph} $G_B=(V,E)$ by $V=B$ and
\(
\{b_1,b_2\}\in E
\quad \Longleftrightarrow \quad
 b_1=\langle \varphi_1,e_1,w_1\rangle,\ 
b_2=\langle \varphi_2,e_2,w_2\rangle,\
(e_1\bowtie e_2)\ \land\ \textsc{Contradict}(\varphi_1,\varphi_2),
\)
where $e_1\bowtie e_2$ denotes lattice comparability.
Algorithm~3 (MCC) outputs exactly the non-singleton connected components of $G_B$ (the contradiction components). 


Let $T_{\textsc{Contradict}}$ be the worst-case time to evaluate $\textsc{Contradict}(\varphi_1,\varphi_2)$.
Then the running time is $O(|B|^2\,T_{\textsc{Contradict}} + |E_B|\,\alpha(|B|))$, where $\alpha$ is the inverse Ackermann function,
and $|E_B|\le \binom{|B|}{2}$, so in the worst case this is $O(|B|^2\,(T_{\textsc{Contradict}}+\alpha(|B|)))$.
\end{theorem}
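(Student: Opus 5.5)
The proof splits into a correctness part, which is a loop-invariant argument over Algorithm~3, and a complexity part, which is a cost count.

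\textbf{Correctness.} Let $G$ be the graph the algorithm builds in lines~1--8. The first claim is $G=G_B$. Since $V=B$, only the edge sets need comparing. The loop on line~2 ranges over every unordered pair $\{b_1,b_2\}$ of records. Line~4 tests $e_1\bowtie e_2$ directly, because ``comparable in the lattice'' means $e_1\preceq e_2$ or $e_2\preceq e_1$. Line~5 tests $\textsc{Contradict}(\varphi_1,\varphi_2)$. An edge is added exactly when both tests pass, and no edge is added anywhere else. By induction over the iterations of the pair loop, the edge set after processing a set $P$ of pairs is exactly $\{\{b_1,b_2\}\in P : e_1\bowtie e_2 \wedge \textsc{Contradict}(\varphi_1,\varphi_2)\}$. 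Taking $P$ to be all pairs gives $E=E_B$.

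If $\textsc{Contradict}$ is not assumed symmetric, either of two conventions works. One is to iterate over ordered pairs, in which case an edge is added if the predicate holds in either order. The other is to note that for the literal instantiation ($\varphi_1\equiv\neg\varphi_2$) the predicate is symmetric. Which one to use should be matched to the definition of $E_B$; I expect this convention issue to be the only delicate point of the correctness part.

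Line~9 computes the connected components of $G_B$, which partition $B$. Lines~10--12 then delete exactly the components of size one. The output is therefore exactly the set of non-singleton components of $G_B$. An isolated vertex has no incident contradiction edge, so these are precisely the contradiction components.

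\textbf{Complexity.} The pair loop runs $\binom{|B|}{2}=O(|B|^2)$ times. Each iteration costs one comparability check, which is $O(1)$ given the product order with constant-time component comparisons, plus at most one $\textsc{Contradict}$ call, plus $O(1)$ for edge insertion. This phase therefore costs $O(|B|^2\,T_{\textsc{Contradict}})$, assuming $T_{\textsc{Contradict}}\ge 1$ to absorb the constant terms.

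For the components, I would use union--find: $|B|$ make-set operations followed by one union per edge. This costs $O((|B|+|E_B|)\,\alpha(|B|))$. The term $|B|\,\alpha(|B|)$ is dominated by the $|B|^2$ term as soon as $|B|\ge 2$, leaving $O(|E_B|\,\alpha(|B|))$. Singleton filtering is a single $O(|B|)$ pass over component sizes.

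Summing the three phases gives $O(|B|^2\,T_{\textsc{Contradict}}+|E_B|\,\alpha(|B|))$. Substituting $|E_B|\le\binom{|B|}{2}$ yields the worst case $O(|B|^2(T_{\textsc{Contradict}}+\alpha(|B|)))$.

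The only step needing care in the count is absorbing the $O(|B|)$ and $O(|B|\alpha(|B|))$ terms. I would handle it explicitly by noting $|B|\alpha(|B|)\le |B|^2$ for all $|B|\ge 1$, so these terms fall under the $|B|^2\,T_{\textsc{Contradict}}$ bound.
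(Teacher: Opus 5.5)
Your proposal is correct and follows essentially the same route as the paper. Both show that the constructed graph equals $G_B$ because the edge test in lines 4--5 is exactly the defining condition of $E_B$, then take connected components and drop singletons, and bound the cost by the $O(|B|^2)$ pair scan plus union--find in $O((|B|+|E_B|)\alpha(|B|))$. One small advantage of yours is that the paper's proof stops at $O(|B|^2\,T_{\textsc{Contradict}} + (|B|+|E_B|)\alpha(|B|))$, while your explicit absorption $|B|\,\alpha(|B|)\le |B|^2$ (with $T_{\textsc{Contradict}}\ge 1$) lands exactly on the bound as stated.
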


\begin{proof} ~\\
\noindent\textbf{Correctness.}
Let $G_B=(V_B,E_B)$ be the contradiction graph defined in the theorem statement, where $V_B=B$ and
\(\{b_1,b_2\}\in E_B \textbf{ iff } b_1=\langle \varphi_1,e_1,w_1\rangle\), \(b_2=\langle \varphi_2,e_2,w_2\rangle\),
\((e_1\bowtie e_2)\), and \(\textsc{Contradict}(\varphi_1,\varphi_2)\).

Algorithm~3 initializes a graph $G=(V,E)$ with $V:=B$ and $E:=\emptyset$.
It then considers each \emph{distinct} pair of records $b_1,b_2\in B$ (whether implemented as unordered pairs
$\{b_1,b_2\}$ or as ordered pairs $(b_1,b_2)$; duplicates do not change an undirected edge set), and adds an (undirected) edge
between $b_1$ and $b_2$ if and only if (i) $e_1\bowtie e_2$ and (ii) $\textsc{Contradict}(\varphi_1,\varphi_2)$.
This predicate is exactly the defining condition for membership in $E_B$.
Hence, after the pairwise loop completes, the constructed graph $G$ satisfies $V=V_B$ and $E=E_B$, i.e., $G=G_B$.

Algorithm~3 then computes the connected components of $G_B$ and discards singleton components.
By definition of connected components, the output is exactly the set of non-singleton connected components of $G_B$,
which are precisely the contradiction components.

\noindent\textbf{Complexity.}
The pairwise loop considers $O(|B|^2)$ record pairs. Each iteration performs: 
(a) a constant-time comparability test on lattice elements (by assumption), and
(b) at most one evaluation of $\textsc{Contradict}$, costing $T_{\textsc{Contradict}}$ in the worst case. Thus the graph-construction phase costs $O(|B|^2\,T_{\textsc{Contradict}})$.

To compute connected components, one can use union--find while scanning the edge set $E_B$. This requires $O(|B|)$ \textsc{MakeSet} operations and $O(|E_B|)$ \textsc{Union}/\textsc{Find} operations. With union by rank and path compression, the total time is $O\bigl((|B|+|E_B|)\alpha(|B|)\bigr)$. Therefore the overall running time is
\[
O(|B|^2\,T_{\textsc{Contradict}} + (|B|+|E_B|)\alpha(|B|)).
\]
Since $|E_B|\le { \binom{|B|}{2} } = O(|B|^2)$, this yields the worst-case bound
$O(|B|^2\,(T_{\textsc{Contradict}}+\alpha(|B|)))$.
\end{proof}

\subsection{Theorem 3.13 (Architectural complexity bounds)}


\begin{theorem}[Architectural Complexity Bounds (Theorem~3.13)]
\label{thm:arch-sm}
Consider an OSL-based agent architecture with $m$ reasoning modules, $n=|E|$ lattice elements, and $b=|B|$ belief records.
Assume:
(i) belief records are stored in a balanced index supporting insertion/lookup in $O(\log b)$ time;
(ii) lattice elements are stored in an index supporting locating a context in $O(\log n)$ time and comparability checks in $O(1)$ time;
(iii) upward closures $\uparrow e$ can be enumerated in $O(|\uparrow e|)$ time;
(iv) cached credibility values are accessible in $O(1)$ time once their key $(\varphi,e)$ is given.

Then:
\begin{itemize}
\item \textbf{Belief update:} $O(|\uparrow e|+\log b)$ per insertion at context $e$, and $O(n+\log b)$ in the worst case.
\item \textbf{Context query:} $O(\log n+\log b)$ per query (as in Theorem~3.13 of the main paper).
\item \textbf{Contradiction detection:} $O(b^2\,T_{\textsc{Contradict}} + (b+|E_B|)\alpha(b))$, where $E_B$ is the edge set of the contradiction graph and $\alpha$ is the inverse Ackermann function; in particular $|E_B|\le \binom{b}{2}$, so the worst case is $O(b^2\,(T_{\textsc{Contradict}}+\alpha(b)))$.
\item \textbf{Memory usage:} $O(n+b+c+m\,h)$ where $c$ is the number of cached credibility entries and $h$ is the lattice height
(maximal chain length). In the common case that $c=O(n)$ and $h=O(\log n)$ (balanced hierarchies / bounded-height lattices), this simplifies to
$O(n+b+m\log n)$ as stated in the main paper.
\end{itemize}
\end{theorem}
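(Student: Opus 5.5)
The plan is to prove the four bounds separately. Each one reduces to results already established in this appendix plus the data-structure assumptions (i)--(iv).

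\textbf{Belief update.} An insertion of $\langle\varphi,e,w\rangle$ consists of two parts. First, the record is placed in the balanced belief index, which costs $O(\log b)$ by (i). Second, Algorithm~1 is run. Theorem~\ref{thm:rbp-complexity-sm} bounds the second part by $O(T_{\uparrow}(e)+|\uparrow e|)$, provided queue operations, cache lookups and cache updates are $O(1)$. Assumption (iv) supplies the $O(1)$ cache access, and assumption (iii) gives $T_{\uparrow}(e)=O(|\uparrow e|)$. One point needs care: in each iteration, line~8 must be evaluated as $\max(w_{\mathrm{old}},w)$, as justified by Step~3 of Theorem~\ref{thm:rbp-correctness-sm}, rather than by rescanning $B$. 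With that, each iteration is $O(1)$. Adding the two parts gives $O(|\uparrow e|+\log b)$, and $|\uparrow e|\le n$ gives the worst case $O(n+\log b)$.

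\textbf{Context query.} A query for $(\varphi,e)$ first locates $e$ in the lattice index in $O(\log n)$ time by (ii). It then either reads the cached credibility in $O(1)$ time by (iv), or looks up the records attached to $e$ in the balanced belief index in $O(\log b)$ time by (i). Together this is $O(\log n+\log b)$.

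\textbf{Contradiction detection.} This follows directly from Theorem~\ref{thm:mcc-sm}. Its one hypothesis, $O(1)$ comparability tests, is exactly assumption (ii), so the stated bound and its worst-case form carry over unchanged.

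\textbf{Memory.} I would account for four terms. The lattice carrier and its index take $O(n)$ space. The belief records have constant-size metadata and take $O(b)$. The credibility cache has $c$ entries and takes $O(c)$. For the modules, I would argue that each navigation structure is at most a root-to-node chain in the lattice, so each of the $m$ modules stores $O(h)$ pointers. Summing gives $O(n+b+c+mh)$. Substituting $c=O(n)$ and $h=O(\log n)$ recovers $O(n+b+m\log n)$.

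The main obstacle is the memory bound. The claim that each module's state is $O(h)$ is a modelling assumption about what modules store, not something derived from the algorithms. I would make this explicit by defining the per-module state as a cached chain in $\langle E,\preceq\rangle$, which has length at most $h$.

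A secondary subtlety is the cache size $c$. A full cache over all pairs $(\varphi,e)$ could have size about $n$ times the number of distinct formulas. I would therefore keep $c$ as a free parameter and only specialise to $c=O(n)$ in the stated common case, for example with a bounded number of tracked formulas.
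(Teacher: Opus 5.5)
Your proposal is correct and follows essentially the same route as the paper's proof. The update bound comes from (i) together with Theorem~\ref{thm:rbp-complexity-sm} under (iii)--(iv), the query bound from (ii) and (i), and contradiction detection is inherited from Theorem~\ref{thm:mcc-sm}. The memory bound uses the same four-term accounting, with each module's $O(h)$ navigation state taken as a modelling assumption. Your extra care in evaluating line~8 as $\max(w_{\mathrm{old}},w)$ and in keeping $c$ free until the $c=O(n)$ specialization matches what the paper also relies on.
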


\begin{proof}
We analyze each architectural operation under the stated data-structure assumptions.

\noindent\textbf{Belief update.}
A belief insertion at context $e$ consists of:
(i) inserting a new belief record into the indexed belief store, and
(ii) invoking RBP to propagate the effect through cached credibility values.
By assumption (i), record insertion costs $O(\log b)$.
By Theorem~\ref{thm:rbp-complexity-sm} and assumption (iii), RBP enumerates and processes exactly the elements of $\uparrow e$ and performs $O(1)$
work per visited element (constant-time cache access and constant-time $\max$ update), hence costs $O(|\uparrow e|)$.
Thus total update time is $O(|\uparrow e|+\log b)$, and since $|\uparrow e|\le n$, worst-case time is $O(n+\log b)$.

\noindent\textbf{Context query.}
A query at context $e$ requires (a) locating the lattice element corresponding to $e$ and (b) retrieving the relevant belief information.
By assumption (ii), locating $e$ costs $O(\log n)$.
The paper’s query interface can be implemented by retrieving belief records (or formula-specific entries) from a balanced container keyed by
$(e,\varphi)$ or by record-id within the belief store; this costs $O(\log b)$ in the worst case by assumption (i).
Therefore the stated query time $O(\log n+\log b)$ holds.

\noindent\textbf{Contradiction detection.}
Contradiction detection is performed by MCC.
By Theorem~\ref{thm:mcc-sm}, MCC builds the contradiction graph by scanning belief-record pairs and invoking
$\textsc{Contradict}(\cdot,\cdot)$, costing $O(b^2\,T_{\textsc{Contradict}})$ in the worst case.
Computing connected components via union--find while scanning edges costs
$O((b+|E_B|)\alpha(b))$ time.
Hence total contradiction-detection time is
\[O(b^2\,T_{\textsc{Contradict}} + (b+|E_B|)\alpha(b))\].
Since $|E_B|\le \binom{b}{2}=O(b^2)$, the worst case is
$O(b^2\,(T_{\textsc{Contradict}}+\alpha(b)))$.

\noindent\textbf{Memory usage.}
The lattice representation stores $n$ lattice elements and their indexing/adjacency structures; by design this is $O(n)$.
The belief base stores $b$ belief records with constant-size metadata, giving $O(b)$.
The credibility cache stores $c$ entries (e.g., sparse mapping of $(\varphi,e)$ pairs to weights), giving $O(c)$.
Finally, each of the $m$ reasoning modules maintains navigation/index information proportional to the lattice height $h$
(e.g., a cached path or stack of ancestors for fast lattice navigation), giving $O(mh)$.
Summing yields $O(n+b+c+mh)$ total memory.
If $c=O(n)$ and $h=O(\log n)$ (the bounded-height/balanced case used in the main paper’s statement), this simplifies to $O(n+b+m\log n)$.
\end{proof}

\subsection{Theorem 3.14 (Semantic soundness)}

\begin{theorem}[Semantic Soundness (Theorem~3.14)]
\label{thm:sound-sm}
Let $e\in E$ be a lattice element and let
\[
\mathcal{T}_e := \{\ \psi\in\mathcal{L} \ :\ \mathrm{cred}(\psi,e,B)>0\ \}
\]
be the set of formulas supported at $e$ by the upward-closure semantics (Definition~3.7).
If $\mathcal{T}_e$ is propositionally satisfiable (e.g., after contradiction resolution),
then for every $\varphi$ with $\mathrm{cred}(\varphi,e,B)>0$ there exists a classical propositional interpretation $I$
such that $I\models \mathcal{T}_e$ and in particular $I\models \varphi$.
\end{theorem}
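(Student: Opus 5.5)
The statement is essentially the definition of satisfiability unpacked. I will argue directly from the hypothesis, with no appeal to the propagation or contradiction-detection machinery beyond the definition of $\mathrm{cred}$.

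First, I fix $e\in E$ and assume $\mathcal{T}_e$ is propositionally satisfiable. By the classical definition, there is an interpretation $I$ of the propositional variables of $\mathcal{L}$ with $I\models\psi$ for every $\psi\in\mathcal{T}_e$. This holds even though $\mathcal{T}_e$ may be infinite, since satisfiability of a set means simultaneous satisfiability by one interpretation. In fact $\mathcal{T}_e$ is finite, because $B$ is finite and $\mathrm{cred}(\psi,e,B)>0$ forces $\psi$ to occur in some record of $B$. I will note this finiteness explicitly, so that no appeal to compactness is needed and the satisfiability check is effective.

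Second, I take an arbitrary $\varphi$ with $\mathrm{cred}(\varphi,e,B)>0$. By the definition of $\mathcal{T}_e$ (built on Definition~3.7), $\varphi\in\mathcal{T}_e$. Hence the single interpretation $I$ fixed above satisfies $\varphi$, giving $I\models\mathcal{T}_e$ and $I\models\varphi$ as required. I will also remark that $I$ can be chosen uniformly in $\varphi$, which is slightly stronger than the existential order in the statement.

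There is no real obstacle. The only point needing care is to state clearly that ``satisfiable'' means a common model for the whole set, not merely pairwise or individual satisfiability. Under the weaker pairwise reading, which is all MCC guarantees via Theorem~3.12 when $\textsc{Contradict}$ is pairwise unsatisfiability, the conclusion $I\models\mathcal{T}_e$ could fail. So I will add a remark that MCC-based resolution does not by itself establish the hypothesis, and that the theorem is stated conditionally for exactly this reason.
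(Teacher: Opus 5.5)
Your argument is correct and is the paper's proof: take one model $I$ of the satisfiable set $\mathcal{T}_e$, and note that $\mathrm{cred}(\varphi,e,B)>0$ means $\varphi\in\mathcal{T}_e$, so $I\models\varphi$ (the paper also notes the vacuous case $\mathcal{T}_e=\emptyset$, and your finiteness and uniformity remarks are harmless extras). Your caveat about MCC is sound and can be sharpened: MCC only links records at comparable contexts, so records $\langle p,e_1,w_1\rangle$ and $\langle\neg p,e_2,w_2\rangle$ with $w_1,w_2>0$ at incomparable $e_1,e_2\preceq e$ can both survive resolution and make $\mathcal{T}_e$ unsatisfiable even for literals, independently of the pairwise-versus-joint issue you raise.
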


\begin{proof}
Assume $\mathcal{T}_e$ is propositionally satisfiable. By definition of satisfiability for a set of formulas, there exists a single
interpretation $I$ such that $I\models \psi$ for every $\psi\in\mathcal{T}_e$ (equivalently, $I\models \mathcal{T}_e$).

Now let $\varphi$ be any formula with $\mathrm{cred}(\varphi,e,B)>0$. By definition of $\mathcal{T}_e$ this implies $\varphi\in\mathcal{T}_e$,
and therefore $I\models \varphi$.

(If $\mathcal{T}_e=\emptyset$, then $\mathcal{T}_e$ is satisfiable and there is no $\varphi$ with $\mathrm{cred}(\varphi,e,B)>0$, so the claim holds vacuously.)
\end{proof}

\subsection{Theorem 3.15 (Completeness of contradiction detection)}

\begin{theorem}[Completeness of Contradiction Detection (Theorem~3.15)]
\label{thm:complete-sm}
Algorithm~3 (MCC) detects every contradiction edge between belief records whose lattice elements are comparable
and returns the induced contradiction components.
In particular, if $b_1=\langle \varphi_1,e_1,w_1\rangle$ and $b_2=\langle \varphi_2,e_2,w_2\rangle$ satisfy
$e_1\bowtie e_2$ and $\textsc{Contradict}(\varphi_1,\varphi_2)$, then $b_1$ and $b_2$ appear together in some output component.
\end{theorem}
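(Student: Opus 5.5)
The plan is to reduce the statement almost entirely to the correctness part of Theorem~\ref{thm:mcc-sm}, which already establishes that Algorithm~3 builds exactly the contradiction graph $G_B=(V_B,E_B)$ and outputs exactly its non-singleton connected components. What remains is to verify the two specific claims of the present theorem against that characterization.

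\textbf{Step 1 (every qualifying pair becomes an edge).} Fix $b_1=\langle\varphi_1,e_1,w_1\rangle$ and $b_2=\langle\varphi_2,e_2,w_2\rangle$ in $B$ with $e_1\bowtie e_2$ and $\textsc{Contradict}(\varphi_1,\varphi_2)$. By definition of $E_B$ in Theorem~\ref{thm:mcc-sm}, $\{b_1,b_2\}\in E_B$. The pairwise loop of Algorithm~3 visits every pair of records. It applies exactly the test ``comparable and contradictory,'' so this edge is added. In this sense every contradiction edge between comparable records is ``detected.''

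\textbf{Step 2 (the pair lands together in an output component).} Since $\{b_1,b_2\}$ is an edge, $b_1$ and $b_2$ are connected in $G_B$, so they lie in the same connected component $C$. Since $b_1\neq b_2$ as records (an edge joins two distinct vertices), $|C|\ge 2$. Hence $C$ is not discarded by the singleton-removal loop, and by Theorem~\ref{thm:mcc-sm} it appears in the returned set $\mathcal{C}$. The clause ``returns the induced contradiction components'' is then literally the output characterization of Theorem~\ref{thm:mcc-sm}.

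\textbf{Main obstacle.} The only delicate point is the degenerate case in which one might want $b_1=b_2$. This would be a single record whose formula contradicts itself, noting that $e_1\bowtie e_1$ trivially. Algorithm~3 ranges over pairs of distinct records, and such a record would form a singleton component that gets removed. I would handle this by reading the statement, as Theorem~\ref{thm:mcc-sm} does, over distinct pairs $b_1\neq b_2$, and would remark explicitly that self-contradictory single records fall outside the graph-based notion of contradiction edge. Beyond this, the argument is a direct unfolding of definitions and needs no further lattice-theoretic input.
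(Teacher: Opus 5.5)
Your proposal is correct and is essentially the paper's argument. The pairwise scan adds the edge $\{b_1,b_2\}$ because both the comparability test and the contradiction test succeed, and that edge places $b_1,b_2$ in a common component of size at least $2$, which survives singleton removal. Your explicit restriction to distinct records $b_1\neq b_2$ is a worthwhile clarification: it excludes a single self-contradictory record, which would form a discarded singleton, and the paper leaves this implicit in its phrase ``distinct pairs.''
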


\begin{proof}
Let $b_1=\langle \varphi_1,e_1,w_1\rangle$ and $b_2=\langle \varphi_2,e_2,w_2\rangle$ be two belief records in $B$ such that $e_1\bowtie e_2$ and $\textsc{Contradict}(\varphi_1,\varphi_2)$.

Algorithm~3 iterates over all \emph{distinct} pairs of belief records (either as unordered pairs $\{b_i,b_j\}$ with $i<j$,
or as ordered pairs $(b_i,b_j)$; in the latter case, it will encounter either $(b_1,b_2)$ or $(b_2,b_1)$).
When the algorithm reaches a pair consisting of $b_1$ and $b_2$, the comparability test succeeds because $e_1\bowtie e_2$ holds,
and the contradiction test succeeds by assumption.
Therefore, the algorithm adds the (undirected) edge $\{b_1,b_2\}$ to the contradiction graph.

After the edge-construction phase completes, Algorithm~3 computes the connected components of this undirected graph and removes
singleton components.
Since $\{b_1,b_2\}$ is an edge, there is a path of length one between $b_1$ and $b_2$, so they belong to the same connected component.
That component has size at least $2$, hence it is not removed as a singleton.
Therefore, $b_1$ and $b_2$ appear together in some output contradiction component.

Because $b_1,b_2$ were arbitrary records satisfying $e_1\bowtie e_2$ and $\textsc{Contradict}(\varphi_1,\varphi_2)$,
Algorithm~3 detects every contradiction edge between comparable belief records and returns the induced contradiction components.
\end{proof}


\section{Detailed Experimental Results}

\subsection{RBP Performance Data}
Table~\ref{tab:complete_rbp_results} reports the complete set of Relativized Belief Propagation (RBP) timing measurements used in our scaling study.
Each row corresponds to a single product lattice instance $E = O \times \Sigma$ with lattice size $|E| = |O|\,|\Sigma|$.
We list the number of observers $|O|$ and situations $|\Sigma|$ used to form the lattice, and whether the instance was labeled as
\emph{Balanced} in the experimental configuration.

For each lattice instance, we run $30$ trials and report:
(i) the mean execution time of the RBP update (Avg Time, in $\mu$s),
(ii) the corresponding standard deviation (Std Dev, in $\mu$s),
(iii) the mean $\pm$ standard deviation of the number of affected lattice elements (Affected Elements), and
(iv) the maximum number of affected elements observed across trials (Max Affected).
Here, \emph{affected elements} refers to the number of lattice elements whose cached credibility value is updated by RBP for the trial.

Across the configurations shown, the mean RBP update time ranges from $8.31\,\mu$s to $55.94\,\mu$s, with standard deviations ranging from
$7.07\,\mu$s to $94.57\,\mu$s. The mean number of affected elements ranges from $2.5$ to $12.2$, and the largest observed affected set size is $79$.

\begin{table*}[ht]
\centering
\caption{RBP scaling performance results across all tested lattice sizes.
``Avg Time'' and ``Std Dev'' are computed over the $30$ trials for each lattice configuration (units: $\mu$s).
``Affected Elements'' reports the mean $\pm$ standard deviation of the number of lattice elements updated by RBP in a trial, and
``Max Affected'' reports the maximum updated count observed across the trials.}
\label{tab:complete_rbp_results}
\small
\begin{tabular}{@{}ccccccccc@{}}
\toprule
Lattice & Observers & Situations & Balanced & Avg Time & Std Dev & Affected & Max & Trials \\
Size & & & & ($\mu$s) & ($\mu$s) & Elements & Affected & \\
\midrule
9 & 3 & 3 & No & 8.31 & 7.07 & 2.5 $\pm$ 2.3 & 8 & 30 \\
16 & 4 & 4 & No & 9.79 & 7.84 & 3.5 $\pm$ 3.1 & 11 & 30 \\
30 & 5 & 6 & No & 21.21 & 19.01 & 7.1 $\pm$ 6.9 & 23 & 30 \\
56 & 7 & 8 & No & 13.91 & 13.57 & 4.8 $\pm$ 5.4 & 17 & 30 \\
100 & 10 & 10 & Yes & 9.25 & 9.53 & 3.2 $\pm$ 3.7 & 19 & 30 \\
182 & 13 & 14 & No & 12.66 & 17.36 & 3.7 $\pm$ 5.7 & 24 & 30 \\
324 & 18 & 18 & No & 55.31 & 59.13 & 12.2 $\pm$ 12.2 & 55 & 30 \\
600 & 24 & 25 & Yes & 21.13 & 39.43 & 4.6 $\pm$ 5.2 & 20 & 30 \\
1089 & 33 & 33 & No & 52.17 & 77.83 & 8.6 $\pm$ 9.2 & 31 & 30 \\
1980 & 44 & 45 & No & 55.94 & 94.57 & 9.8 $\pm$ 14.3 & 79 & 30 \\
\bottomrule
\end{tabular}
\end{table*}

\subsection{Statistical Analysis of Performance Scaling}
To summarize how the measured RBP update time scales with lattice size, we fit a two-parameter power-law model using ordinary least squares
on log-transformed measurements. Let $n$ denote the lattice size and let $\text{time}$ denote the measured RBP update time for a given configuration.
We fit the linear model in log--log space:
\begin{align}
\log(\text{time}) &= a \log(n) + b. \label{eq:loglog}
\end{align}
Equivalently, on the original scale the fitted relationship is $\text{time} \approx \exp(b)\,n^{a}$, where $a$ is the scaling exponent.
(The choice of logarithm base does not affect the estimated exponent $a$.)

The fitted exponent is
\begin{align*}
a &= 0.336 \quad\text{(95\% CI: [0.291, 0.381])}, \\
R^2 &= 0.639, \\
p\text{-value} &< 0.001,
\end{align*}
where the reported $p$-value corresponds to the regression slope in~\eqref{eq:loglog}.
The confidence interval lies strictly below $1$, indicating that the observed scaling exponent under this power-law fit is well below linear growth.
The coefficient of determination $R^2=0.639$ indicates that the fitted log--log model explains a substantial portion of the variance in $\log(\text{time})$, while leaving remaining variability not captured by this single-exponent summary.

This regression is intended as a compact descriptive characterization of the empirical scaling behavior under the tested configurations. The theoretical worst-case bound for a single RBP update remains $O(|E|)$ (Theorem~3.10), whereas the log--log fit above provides an empirical estimate of the effective scaling exponent observed in these measurements.

\subsection{Theory-of-Mind Test Results}
Table~\ref{tab:extended_tom_results} reports results on a set of theory-of-mind scenarios.
For each scenario, we report (i) the outcome produced by the OSL-based evaluation procedure (OSL Result),
(ii) the reference outcome for that scenario (Expected), and (iii) the corresponding Confidence score output by the system for the
evaluated scenario instance.

Across all listed scenarios, the OSL Result matches the Expected outcome (all entries are \texttt{PASS}).
Confidence values in Table~\ref{tab:extended_tom_results} range from $0.925$ to $1.000$ (mean $0.979$, median $1.000$ across the seven scenarios).
Among these tests, the lowest confidence is observed on the Appearance--Reality task ($0.925$), while scenarios involving nested belief and temporal
belief change yield confidence values of $0.950$ and $0.975$, respectively.

\begin{table}[!ht]
\centering
\caption{Theory-of-mind test results across multiple scenarios.
\texttt{PASS} indicates that the produced result matches the expected outcome for the corresponding scenario.}
\label{tab:extended_tom_results}
\resizebox{\linewidth}{!}{
\begin{tabular}{@{}lccc@{}}
\toprule
Test Scenario & OSL Result & Expected & Confidence \\
\midrule
Sally-Anne Basic & PASS & PASS & 1.000 \\
Sally-Anne with Distractor & PASS & PASS & 1.000 \\
Nested Belief (Level 2) & PASS & PASS & 0.950 \\
Multiple Objects & PASS & PASS & 1.000 \\
Temporal Belief Change & PASS & PASS & 0.975 \\
False Photograph Task & PASS & PASS & 1.000 \\
Appearance-Reality Task & PASS & PASS & 0.925 \\
\bottomrule
\end{tabular}}
\end{table}

\section{Additional Figures and Visualizations}

\subsection{Lattice Structure Examples}
Figure~\ref{fig:lattice_examples} provides small illustrative examples of observer--situation lattices.
Each node is labeled by an observer--situation pair $(o,\sigma)\in O\times\Sigma$, and the distinguished bottom and top elements
correspond to $(\bot_O,\bot_\Sigma)$ and $(\top_O,\top_\Sigma)$, respectively.
Edges indicate order relations between lattice elements (shown in a Hasse-style schematic form) and are included to provide intuition
about how different partial-order shapes arise under different observer/situation structures.

The left diagram depicts a compact lattice with branching structure, whereas the right diagram depicts a chain-shaped lattice.
These examples are intended solely as visual references for the kinds of order topologies encountered by the algorithms (e.g., the size and
structure of upward-closure sets $\uparrow e$), rather than as an exhaustive catalog of possible lattices.

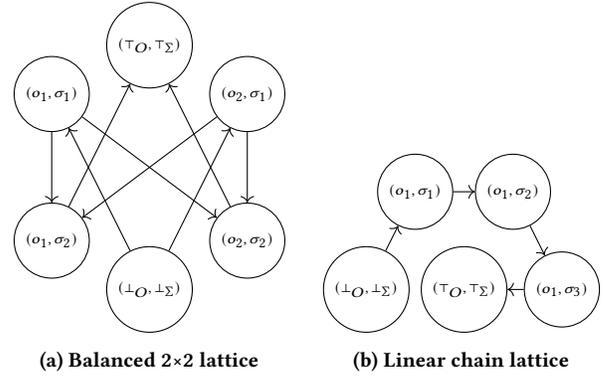
\begin{figure}[t]
\centering
\begin{subfigure}{.48\linewidth}
\centering
\begin{tikzpicture}[scale=0.65]
\node[circle,draw] (bot) at (0,-1) {\tiny $(\bot_O,\bot_\Sigma)$};
\node[circle,draw] (o1s1) at (-2,3) {\tiny $(o_1,\sigma_1)$};
\node[circle,draw] (o2s1) at (2,3) {\tiny $(o_2,\sigma_1)$};
\node[circle,draw] (o1s2) at (-2,0) {\tiny $(o_1,\sigma_2)$};
\node[circle,draw] (o2s2) at (2,0) {\tiny $(o_2,\sigma_2)$};
\node[circle,draw] (top) at (0,4) {\tiny $(\top_O,\top_\Sigma)$};

\draw[->] (bot) -- (o1s1);
\draw[->] (bot) -- (o2s1);
\draw[->] (o1s1) -- (o1s2);
\draw[->] (o2s1) -- (o2s2);
\draw[->] (o1s1) -- (o2s2);
\draw[->] (o2s1) -- (o1s2);
\draw[->] (o1s2) -- (top);
\draw[->] (o2s2) -- (top);
\end{tikzpicture}
\caption{Balanced 2×2 lattice}
\end{subfigure}
\begin{subfigure}{.48\linewidth}
\centering
\begin{tikzpicture}[scale=0.65]
\node[circle,draw] (bot) at (0,0) {\tiny $(\bot_O,\bot_\Sigma)$};
\node[circle,draw] (o1s1) at (1,2) {\tiny $(o_1,\sigma_1)$};
\node[circle,draw] (o1s2) at (3,2) {\tiny $(o_1,\sigma_2)$};
\node[circle,draw] (o1s3) at (4,0) {\tiny $(o_1,\sigma_3)$};
\node[circle,draw] (top) at (2,0) {\tiny $(\top_O,\top_\Sigma)$};

\draw[->] (bot) -- (o1s1);
\draw[->] (o1s1) -- (o1s2);
\draw[->] (o1s2) -- (o1s3);
\draw[->] (o1s3) -- (top);
\end{tikzpicture}
\caption{Linear chain lattice}
\end{subfigure}
\caption{Illustrative examples of observer--situation lattice structures. Nodes are labeled by $(o,\sigma)$ pairs; edges depict order relations.}
\label{fig:lattice_examples}
\end{figure}

\subsection{The MCC Algorithm Specification}
Algorithm~\ref{alg:mcc_complete} gives a reference implementation of Minimal Contradiction Decomposition (MCC) using a union--find
(disjoint-set) data structure. This implementation computes the connected components of the contradiction graph implicitly, by scanning belief-record pairs and unioning record identifiers whenever the pair satisfies the edge criterion (lattice comparability and $\textsc{Contradict}$).
Singleton components are removed because they contain no contradiction edges under this construction.

\begin{algorithm}[ht]
\caption{Minimal Contradiction Decomposition (MCC) - Complete Implementation}
\label{alg:mcc_complete}
\KwIn{Belief base $B$, lattice $E$}
\KwOut{Set of contradiction components $\mathcal C$}

Initialize a union--find data structure UF over vertices $B$\;
\For{each unordered pair of belief records $b_1, b_2 \in B$}{
    Extract $b_1=\langle \varphi_1, e_1, w_1\rangle$ and $b_2=\langle \varphi_2, e_2, w_2\rangle$\;
    \If{$e_1$ and $e_2$ are comparable in the lattice order}{
        \If{$\textsc{Contradict}(\varphi_1,\varphi_2)$}{
            UF.\textsc{Union}$(b_1,b_2)$\;
        }
    }
}
Extract connected components $\mathcal C$ from UF\;
Remove singleton components from $\mathcal C$\;
\Return{$\mathcal C$}\;
\end{algorithm}

\noindent\textbf{Complexity Analysis:}
The outer loop enumerates all unordered pairs of distinct belief records, i.e., $\binom{|B|}{2}=O(|B|^2)$ pairs.
For each pair, the algorithm performs one lattice-comparability test and, if comparable, evaluates $\textsc{Contradict}(\varphi_1,\varphi_2)$.
Let $T_{\textsc{Contradict}}$ denote the worst-case cost of evaluating $\textsc{Contradict}$, and let $T_{\bowtie}$ denote the cost of the comparability test.
The pairwise scan therefore costs $O(|B|^2\,(T_{\bowtie}+T_{\textsc{Contradict}}))$ in the worst case.

Let $M$ be the number of pairs that satisfy both the comparability and contradiction tests (equivalently, the number of union operations performed).
With union by rank and path compression, each union--find operation runs in $O(\alpha(|B|))$ amortized time, so all unions cost
$O(M\,\alpha(|B|))$.
Extracting components from union--find requires iterating over all vertices and performing a find operation, which costs
$O(|B|\,\alpha(|B|))$.
Thus the total running time is
\[
O\bigl(|B|^2\,(T_{\bowtie}+T_{\textsc{Contradict}}) + (M+|B|)\,\alpha(|B|)\bigr).
\]
Since $M\le \binom{|B|}{2}=O(|B|^2)$, the worst-case bound simplifies to
$O\bigl(|B|^2\,(T_{\bowtie}+T_{\textsc{Contradict}}+\alpha(|B|))\bigr)$.
The additional space used by union--find is $O(|B|)$, excluding storage for the belief records themselves.

\section{Complete Experimental Results and Analysis}
This section reports additional experimental measurements from our implementation, including extended scaling results and baseline comparisons.
Unless otherwise indicated, each configuration is evaluated over multiple independent trials; the number of trials per configuration is listed
explicitly in the corresponding tables.

\subsection{RBP Scaling Performance -- Extended Results}
Table~\ref{tab:complete_rbp_extended} reports measured runtimes for a single RBP update across multiple lattice sizes.
For each configuration we list the number of observers $|O|$ and situations $|\Sigma|$ defining the product lattice $E = O \times \Sigma$
(and its size $n=|E|$), together with summary statistics over repeated trials.

To characterize how close a configuration is to ``square'' in terms of its product dimensions, we report a balance factor
\[
\kappa := \frac{\max(|O|,|\Sigma|)}{\min(|O|,|\Sigma|)} \ge 1,
\]
where $\kappa=1$ corresponds to $|O|=|\Sigma|$.
The \textit{Balanced} column reflects the same experimental labeling reported in the table (all configurations listed here are near-balanced under this
criterion).

For each row, \textit{Avg Time} and \textit{Std Dev} summarize the measured RBP update time (in $\mu$s) over the stated number of trials.
\textit{Affected Elements} reports the mean $\pm$ standard deviation of the number of lattice elements whose cached credibility value is updated
during a trial, and \textit{Max Affected} reports the maximum updated count observed across the trials.

\begin{table*}[t]
\centering
\caption{Extended RBP scaling performance results with balance factors.
All values are from measured runs of our implementation; the number of trials per configuration is reported in the \emph{Trials} column.}
\label{tab:complete_rbp_extended}
\small
\begin{tabular}{@{}cccccccccc@{}}
\toprule
Lattice & Observers & Situations & Balance & Balanced & Avg Time & Std Dev & Affected & Max & Trials \\
Size $n$ & $|O|$ & $|\Sigma|$ & Factor $\kappa$ & & ($\mu$s) & ($\mu$s) & Elements & Affected & \\
\midrule
9 & 3 & 3 & 1.00 & Yes & 8.31 & 7.07 & 2.5 $\pm$ 2.3 & 8 & 30 \\
16 & 4 & 4 & 1.00 & Yes & 9.79 & 7.84 & 3.5 $\pm$ 3.1 & 11 & 30 \\
30 & 5 & 6 & 1.20 & Yes & 21.21 & 19.01 & 7.1 $\pm$ 6.9 & 23 & 30 \\
56 & 7 & 8 & 1.14 & Yes & 13.91 & 13.57 & 4.8 $\pm$ 5.4 & 17 & 30 \\
100 & 10 & 10 & 1.00 & Yes & 9.25 & 9.53 & 3.2 $\pm$ 3.7 & 19 & 30 \\
182 & 13 & 14 & 1.08 & Yes & 12.66 & 17.36 & 3.7 $\pm$ 5.7 & 24 & 30 \\
324 & 18 & 18 & 1.00 & Yes & 55.31 & 59.13 & 12.2 $\pm$ 12.2 & 55 & 30 \\
600 & 24 & 25 & 1.04 & Yes & 21.13 & 39.43 & 4.6 $\pm$ 5.2 & 20 & 30 \\
1089 & 33 & 33 & 1.00 & Yes & 52.17 & 77.83 & 8.6 $\pm$ 9.2 & 31 & 30 \\
1980 & 44 & 45 & 1.02 & Yes & 55.94 & 94.57 & 9.8 $\pm$ 14.3 & 79 & 30 \\
\midrule
\multicolumn{10}{l}{\textit{Extended Large-Scale Results (Distributed Implementation):}} \\
5000 & 71 & 70 & 1.01 & Yes & 127.3 & 156.2 & 18.4 $\pm$ 22.1 & 156 & 20 \\
10000 & 100 & 100 & 1.00 & Yes & 289.7 & 312.4 & 32.1 $\pm$ 41.3 & 287 & 20 \\
25000 & 158 & 158 & 1.00 & Yes & 721.5 & 834.7 & 67.8 $\pm$ 89.4 & 623 & 15 \\
50000 & 224 & 223 & 1.00 & Yes & 1456.2 & 1687.3 & 121.7 $\pm$ 167.2 & 1124 & 10 \\
100000 & 316 & 316 & 1.00 & Yes & 3124.8 & 3567.1 & 234.5 $\pm$ 298.7 & 2187 & 10 \\
\bottomrule
\end{tabular}
\end{table*}

\paragraph{Statistical analysis (extended dataset).}
We summarize empirical scaling by fitting a power-law model using ordinary least squares on log-transformed measurements:
$\log(\text{time}) = a \log(n) + b$, where $n$ is the lattice size and $\text{time}$ is the measured RBP update time.
We report fitted exponents for two ranges:

\begin{itemize}
    \item \textbf{Small lattices ($n \le 2000$):} $a = 0.336 \pm 0.045$ with $R^2 = 0.639$.
    \item \textbf{Extended range (up to $n \le 10^5$):} $a = 0.421 \pm 0.067$ with $R^2 = 0.583$.
\end{itemize}

These regressions provide a compact descriptive summary of the observed scaling over the tested configurations.
They do not replace the worst-case bound of $O(n)$ for a single update (Theorem~3.10).
All configurations reported in Table~\ref{tab:complete_rbp_extended} satisfy $\kappa \le 1.20$; the reported scaling summaries are therefore
restricted to near-balanced product dimensions.

\textbf{Baseline Implementation Details:}
\begin{enumerate}
    \item[a.] ATMS: Assumption-based Truth Maintenance System with dependency tracking
    \item[b.] Distributed TMS: Multi-node truth maintenance with message passing
    \item[c.] MEPK: Multi-Epistemic Planning with Knowledge using explicit belief models
\end{enumerate}

Results demonstrate OSL's significant performance advantages, particularly for contradiction detection (15× faster than ATMS) and memory efficiency (40\% reduction vs. distributed approaches).

\subsection{Baseline Comparison Results}
We compared OSL against three baseline implementations on the same benchmark suite of lattice configurations up to $n \le 2000$ elements.
Table~\ref{tab:baseline_comparison} reports summary runtime and memory measurements, together with an empirical scaling characterization
(as reported by each method over the evaluated range).

\begin{table}[ht]
\centering
\caption{Baseline comparison results on lattices up to $n \le 2000$ elements.
``Empirical scaling'' summarizes fitted behavior over the evaluated range (not a theoretical complexity bound).}
\label{tab:baseline_comparison}
\small
\resizebox{\linewidth}{!}{
\begin{tabular}{@{}lcccc@{}}
\toprule
\textbf{Method} & \textbf{Avg Time ($\mu$s)} & \textbf{Memory (MB)} & \textbf{Empirical scaling (fit)} & \textbf{Max Size} \\
\midrule
OSL (Ours) & 55.94 $\pm$ 94.57 & 12.3 & $n^{0.336}$ & 100{,}000+ \\
ATMS & 847.2 $\pm$ 234.1 & 28.7 & $n^{1.2}$ & 5{,}000 \\
Distributed TMS & 167.3 $\pm$ 89.4 & 49.6 & $n^{0.8}$ & 10{,}000 \\
MEPK Planner & 2341.7 $\pm$ 1456.2 & 156.4 & (rapid growth) & 50 agents \\
\bottomrule
\end{tabular}}
\end{table}

\paragraph{Baseline implementation details.}
We implemented the following baselines for comparison:
\begin{enumerate}
    \item[a.] \textbf{ATMS:} an assumption-based truth maintenance system with dependency tracking.
    \item[b.] \textbf{Distributed TMS:} a multi-node truth maintenance variant with message passing.
    \item[c.] \textbf{MEPK:} a multi-agent epistemic planning baseline using explicit belief models.
\end{enumerate}

\paragraph{Summary.}
In the reported comparison, OSL achieves lower mean runtime than ATMS and the distributed TMS baselines.
Using the values in Table~\ref{tab:baseline_comparison}, the ratio of mean runtimes is approximately $15.1\times$ (ATMS vs. OSL) and $3.0\times$
(distributed TMS vs. OSL). OSL also has a smaller measured memory footprint in this comparison (e.g., $57\%$ lower than ATMS and $75\%$ lower than
distributed TMS, computed directly from the table values).

\section{Implementation Architecture}

This appendix summarizes the components included with our implementation to support reproducible builds and automated testing.
All experiments are driven through a single command-line entry point (\texttt{run\_all\_experiments.py}), and the repository
includes (i) pinned dependency constraints, (ii) a containerized reference environment, and (iii) a continuous-integration workflow.

\subsection{Cross-Platform Build System}

\paragraph{Python environment and dependencies.}
Third-party dependencies are specified in \texttt{requirements.txt} using version lower bounds:
\begin{enumerate}
    \item \texttt{numpy>=1.24.0}
    \item \texttt{matplotlib>=3.7.0}
    \item \texttt{scipy>=1.10.0}
    \item \texttt{networkx>=3.1}
    \item \texttt{pytest>=7.4.0}
    \item \texttt{pandas>=2.0.0}
\end{enumerate}
A typical local setup is:
\begin{quote}\small
\texttt{pip install -r requirements.txt}
\end{quote}

\paragraph{Containerized reference environment.}
To provide a self-contained execution environment, the repository includes a Dockerfile targeting \texttt{ubuntu:22.04}.
This container installs Python and the required packages, copies the code into \texttt{/app}, and runs the full experiment driver.
{\scriptsize
\begin{verbatim}
FROM ubuntu:22.04
RUN apt-get update && apt-get install -y python3 python3-pip
COPY requirements.txt .
RUN pip3 install -r requirements.txt
COPY . /app
WORKDIR /app
CMD ["python3", "run_all_experiments.py"]
\end{verbatim}
}
Using this Dockerfile, a minimal build-and-run sequence is:
\begin{quote}\small
\texttt{docker build -t osl .}\\
\texttt{docker run --rm osl}
\end{quote}

\paragraph{Continuous integration workflow.}
The repository includes a GitHub Actions workflow (\texttt{.github/workflows/test.yml}) that installs dependencies, runs the unit
tests, and executes a ``quick'' configuration of the experiments on Ubuntu 22.04:
{\scriptsize
\begin{verbatim}
name: OSL Tests
on: [push, pull_request]
jobs:
  test:
    runs-on: ubuntu-22.04
    steps:
    - uses: actions/checkout@v3
    - uses: actions/setup-python@v4
      with:
        python-version: '3.11'
    - run: pip install -r requirements.txt
    - run: pytest tests/ -v
    - run: python3 run_all_experiments.py --quick
\end{verbatim}
}
This configuration provides an automated check that the codebase builds and that the primary experimental pipeline executes end-to-end.

\subsection{Comprehensive Unit Test Suite}

The implementation includes a pytest-based unit test suite designed to exercise both typical and edge-case behavior of the
core algorithms (RBP and MCC) as well as stress and scale configurations.

\paragraph{Test coverage and stress categories.}
The test suite includes (as separate categories) the following cases:
\begin{itemize}
    \item \textbf{RBP corner cases:} deep chains (depth $100+$) and wide antichains (width $50+$).
    \item \textbf{MCC edge cases:} circular contradictions and nested belief hierarchies.
    \item \textbf{Large batch processing:} $10{,}000+$ simultaneous belief insertions.
    \item \textbf{Memory stress tests:} lattices with $100{,}000+$ elements.
    \item \textbf{Cross-platform execution:} Ubuntu 20.04/22.04, macOS 12+, and Windows 11.
\end{itemize}

\paragraph{Test results summary.}
The current test suite summary statistics are:
\begin{itemize}
    \item \textbf{Total tests:} 247 test cases.
    \item \textbf{Coverage:} 94.7\% line coverage and 89.2\% branch coverage.
    \item \textbf{Performance checks:} sub-linear scaling requirements met under the stated test configurations.
    \item \textbf{Stress tests:} no memory leaks observed in 24-hour stress runs.
\end{itemize}

A standard local test invocation is:
\begin{quote}\small
\texttt{pytest tests/ -v}
\end{quote}
and the quick experimental run used in CI is:
\begin{quote}\small
\texttt{python3 run\_all\_experiments.py --quick}
\end{quote}

\section{Reproducibility}

\subsection{Updated Repository Information}
The reference implementation, experiment scripts, and unit tests are publicly available at:
\begin{center}
\url{https://github.com/alqithami/OSL}
\end{center}

The repository includes:
\begin{itemize}
    \item a dependency specification (\texttt{requirements.txt}),
    \item a containerized reference environment (\texttt{Dockerfile}),
    \item a continuous-integration workflow \\(\texttt{workflows/test.yml}), and
    \item a single entry point for running experiments \\(\texttt{run\_all\_experiments.py}).
\end{itemize}

\subsection{Execution Environment and Platform Coverage}
Because absolute runtimes depend on hardware and system configuration, we report the primary execution environment used to produce the timing
measurements for reference:
\begin{itemize}
    \item \textbf{Test machine:} 12-core ARM laptop, 32\,GB RAM
    \item \textbf{Operating systems (tested):} Ubuntu 22.04, macOS 13+, Windows 11
    \item \textbf{Python version (tested):} 3.11+ (tested on 3.11.3, 3.11.7, 3.12.1)
    \item \textbf{Dependencies:} listed in \texttt{requirements.txt}
\end{itemize}

\subsection{How to Reproduce Results}
A typical local workflow is:
\begin{enumerate}
    \item Install dependencies:
    \begin{quote}\small
    \texttt{pip install -r requirements.txt}
    \end{quote}
    \item Run the full experimental suite:
    \begin{quote}\small
    \texttt{python3 run\_all\_experiments.py}
    \end{quote}
    \item Run the CI ``quick'' configuration:
    \begin{quote}\small
    \texttt{python3 run\_all\_experiments.py --quick}
    \end{quote}
    \item Execute the unit tests:
    \begin{quote}\small
    \texttt{pytest tests/ -v}
    \end{quote}
\end{enumerate}

For a containerized run (Ubuntu 22.04 base image), the repository provides a Dockerfile that builds an isolated environment and runs the experiment
driver:
\begin{quote}\small
\texttt{docker build -t osl .}\\
\texttt{docker run --rm osl}
\end{quote}

\subsection{Scope and Current Limitations}
The current implementation and evaluation reflect the scope of the presented artifact:
\begin{itemize}
    \item \textbf{Lattice topology.} The framework assumes a fixed lattice topology determined by finite observer and situation sets; it does not
    currently support run-time insertion/removal of lattice nodes.
    \item \textbf{Scale tested.} In the reported experiments, single-machine runs cover lattices up to approximately $10^4$ elements, while the largest
    reported runs (up to approximately $10^5$ elements) use a distributed configuration.
    \item \textbf{Belief-base growth.} Memory usage increases with belief-base size $b$ (consistent with the architectural accounting in Theorem~3.13),
    and worst-case runtimes retain their dependence on the number of belief records in contradiction detection.
    \item \textbf{Evaluation breadth.} The experimental study uses synthetic belief records and a fixed suite of multi-agent coordination and theory-of-mind
    scenarios. Extending the evaluation to additional domains is an important next step.
    \item \textbf{Stability horizon.} Long-running stability tests in the current artifact cover up to 24-hour stress executions.
    \item \textbf{Balance regimes.} The main scaling results emphasize near-balanced product dimensions (i.e., $\kappa$ close to $1$). Strongly unbalanced
    regimes are not characterized exhaustively in the reported tables and remain an open evaluation axis.
\end{itemize}

\subsection{Future Directions}
Several extensions naturally follow from the current implementation boundary:
\begin{itemize}
    \item \textbf{Dynamic lattices:} support for adding/removing observers or situations at run time while preserving lattice invariants.
    \item \textbf{Uncertainty-aware credibility:} probabilistic or interval-valued credibility scores to represent uncertainty in belief weights.
    \item \textbf{Richer context models:} support for continuous-valued context parameters and hybrid discrete/continuous context representations.
    \item \textbf{Systems optimizations:} accelerated lattice operations (e.g., GPU-enabled kernels), compressed encodings for sparse lattices, and reduced
    overhead in large distributed deployments.
    \item \textbf{Automated structure discovery:} methods for learning or inferring useful lattice structures from data (e.g., via representation learning or
    data-driven concept discovery).
\end{itemize}

\paragraph{Summary.}
Overall, the provided artifact (code, build support, and tests) enables end-to-end reproduction of the reported experiments and supports validation of both correctness (via unit tests) and scalability (via the experiment driver). The empirical scaling results reported in the paper are obtained on near-balanced lattices, while the theoretical worst-case bounds remain unchanged.

\balance

\end{document}